\def\ba{\mathbf{a}}
\def\cL{\mathcal{L}}
\newcommand{\R}{\mathbb{R}}
\newcommand{\innerprod}[2]{\left \langle#1,#2\right\rangle}
\newcommand{\relu}{\operatorname{ReLU}}
\newcommand{\pmat}[1]{\begin{pmatrix} #1 \end{pmatrix}}
\newcommand{\paren}[1]{\left ( #1\right)}
\renewcommand{\boxed}[1]{\text{\fboxsep=.2em\fbox{\m@th$\displaystyle#1$}}}
\newtheorem{lemma}{Lemma}
\newtheorem{theorem}{Theorem}
\newtheorem{proposition}{Proposition}
\newtheorem{definition}{Definition}
\newtheorem{example}{Example}
\newtheorem{remark}{Remark}
\numberwithin{equation}{section}
\def \beq {\begin {eqnarray}}
\def \eeq {\end {eqnarray}}
\def \ba {\begin {eqnarray*}}
\def \ea {\end  {eqnarray*}}
\renewcommand{\tilde}{\widetilde}
\def\tilde{\widetilde}
\def \bfo {\begin {eqnarray*} }
\def \efo {\end {eqnarray*} }
\def \ba {\begin {eqnarray*} }
\def \ea {\end {eqnarray*} }
\def \beq {\begin {eqnarray}}
\def \eeq {\end {eqnarray}}
\def\bra{\langle}
\def\ket{\rangle}
\def \e {\varepsilon}
\def \p {\partial}
\def\tilde{\widetilde}
\def \bfo {\begin {eqnarray*} }
\def \efo {\end {eqnarray*} }
\def \ba {\begin {eqnarray*} }
\def \ea {\end {eqnarray*} }
\def \beq {\begin {eqnarray}}
\def \eeq {\end {eqnarray}}
\def\bra{\langle}
\def\ket{\rangle}
\def \e {\varepsilon}
\def \p {\partial}
\title{\textbf{Globally injective and bijective neural operators}}
\author[1]{\rm Takashi Furuya}
\author[2]{\rm Michael Puthawala}
\author[3]{\rm Matti Lassas}
\author[4]{\rm Maarten V. de Hoop}
\affil[1]{{\small Shimane University}}
\affil[ ]{{\small takashi.furuya0101@gmail.com}\vspace{3mm}}
\affil[2]{{\small South Dakota State University}}
\affil[ ]{{\small Michael.Puthawala@sdstate.edu}\vspace{3mm}}
\affil[3]{{\small University of Helsinki}}
\affil[ ]{{\small matti.lassas@helsinki.fi}\vspace{3mm}}
\affil[4]{{\small Rice University}}
\affil[ ]{{\small mdehoop@rice.edu}\vspace{3mm}}
\begin{document}
\maketitle
\begin{abstract}
    Recently there has been great interest in operator learning, where networks learn operators between function spaces from an essentially infinite-dimensional perspective. In this work we present results for when the operators learned by these networks are injective and surjective. As a warmup, we combine prior work in both the finite-dimensional ReLU and operator learning setting by giving sharp conditions under which ReLU layers with linear neural operators are injective. We then consider the case the case when the activation function is pointwise bijective and obtain sufficient conditions for the layer to be injective. We remark that this question, while trivial in the finite-rank case, is subtler in the infinite-rank case and is proved using tools from Fredholm theory. Next, we prove that our supplied injective neural operators are universal approximators and that their implementation, with finite-rank neural networks, are still injective. This ensures that injectivity is not `lost' in the transcription from analytical operators to their finite-rank implementation with networks. Finally, we conclude with an increase in abstraction and consider general conditions when subnetworks, which may be many layers deep, are injective and surjective and provide an exact inversion from a `linearization.’ This section uses general arguments from Fredholm theory and Leray-Schauder degree theory for non-linear integral equations to analyze the mapping properties of neural operators in function spaces. These results apply to subnetworks formed from the layers considered in this work, under natural conditions. We believe that our work has applications in Bayesian UQ where injectivity enables likelihood estimation and in inverse problems where surjectivity and injectivity corresponds to existence and uniqueness, respectively.
\end{abstract}

\section{Introduction}\label{Introduction}

In this work, we produce results at the intersection of two fields: neural operators, and injective \& bijective networks. Neural operators \citep{kovachki2021universal,kovachki2021neural} are neural networks that take a infinite dimensional perspective on approximation by directly learning a operators between Sobolev spaces. Injectivity and bijectivity on the other hand are fundamental properties of networks that enable likelihood estimate by the change of variables formula, a useful property for downstream applications.

Put simply, the key contribution of our work is the translation of fundamental notions from the finite-rank setting to the infinite-rank setting. By the infinite-dimension setting we refer to the case when the object of approximation is a mapping between infinite-rank Sobolev spaces. This task, although straight-forward on its face, often requires dramatically different arguments and proofs as the topology, analysis and notion of noise are much simpler in the finite-rank case as compared to the infinite-rank case. We see our work as laying the groundwork for the application of neural operators to generative models in function spaces. In the context of operator extensions of traditional VAEs \citep{kingma2013auto}, injectivity of a decoder forces distinct latent codes to correspond to distinct outputs.

\subsection{Our contribution}
In this paper, we give a rigorous framework for the analysis of the injectivity and bijectivity of neural operators. Our contributions are follows:

\begin{itemize}

\item[(A)] 
We show the equivalent condition for the layerwise injectivity and bijectivity for linear neural operators in the case of ReLU and bijective activation functions (Section~\ref{Layerwise-injectivity}).
In particular of ReLU case, the equivalent condition is characterized by directed spanning set (Definition~\ref{definition:DSS:general}).

\item[(B)] 
We prove that injective linear neural operators are universal approximators, and their implementation by finite-rank approximation is still injective (Section~\ref{Global-injectivity}).
We note that universality approximation theorem (Theorem~\ref{Universal-injectivity-NO}) in infinite dimensional case does not require an increase in dimension, which deviate from finite dimensional case \citet[Thm.\ 15]{puthawala2022globally}.

\item[(C)] 
We zoom out and perform a more abstract global analysis in the case when the input and output dimension are the same. In this section we `coarsen' the notion of layer, and provide the sufficient condition for the surjectivity and bijectivity for nonlinear integral neural operators \emph{with nonlinear kernels}. This application arises naturally in the context of subnetworks and transformers. We construct their inverses in the bijective case (Section~\ref{Non-linear operator}).

\end{itemize}

\subsection{Related works}

In the finite-rank setting, injective networks have been well-studied, and shown to be of theoretical and practical interest. See \citet{gomez2017reversible,kratsios2020non,puthawala2022globally} for general references establishing the usefulness of injectivity or any of the works on flow networks for the utility of injectivity and bijectivity for downstream applications, \citep{dinh2016density,siahkoohi2020faster,chen2019residual,dinh2014nice,kingma2016improving}.  but their study in the infinite-rank setting is comparatively underdeveloped. These works, and others, establish injectivity in the finite-rank setting as a property of theoretical and practical interest. Our work extend \citet{puthawala2022globally} to the infinite-dimensional setting as applied to neural operators, which themselves are a generalization of multilayer perceptrons (MLPs) to function spaces.

Examples of works in these setting include neural operators \citet{kovachki2021neural, kovachki2021universal}, DeepONet \citet{lu2019deeponet, lanthaler2022error}, and PCA-Net \citet{bhattacharya2021model, de2022cost}. 
The authors of \citet{alberti2022continuous} recently proposed continuous generative neural networks (CGNNs), which are convolution-type architectures for generating $L^2(\mathbb{R})$-functions, and provided the sufficient condition for the global injectivity of their network. 
Their approach is the wavelet basis expansion, whereas our work relies on an independent choice of basis expansion.

\subsection{Networks considered and notation}\label{Networks considered and notation}

Let $D \subset \mathbb{R}^{d}$ be an open and connected domain, and $L^{2}(D; \mathbb{R}^{h})$ be the $L^2$ space of $\mathbb{R}^{h}$-value function on $D$ given by
\[
L^{2}(D; \mathbb{R}^{h}) 
\coloneqq 
\underbrace{
L^{2}(D;\mathbb{R}) \times \cdots \times L^{2}(D;\mathbb{R})
}_{h}=L^{2}(D)^{h}.
\]
\begin{definition}[Integral and pointwise neural operators]
We define layers $\mathcal{L}_{\ell} : L^{2}(D)^{d_{\ell}} \to L^{2}(D)^{d_{\ell+1}}$ and integral neural operator $G:L^{2}(D)^{d_{in}} \to L^{2}(D)^{d_{out}}$ by
\[
\label{eqn:g-def}
x \in D, \quad (\mathcal{L}_{\ell}v)(x) \coloneqq \sigma (T_{\ell}(v)(x) + b_{\ell}(x)),\quad G :=  T_{L+1} \circ \mathcal{L}_{L} \circ 
\cdots \mathcal{L}_{1} \circ T_{0}
\]
where $\sigma : \mathbb{R} \to \mathbb{R}$ is a non-linear activation operating element-wise, and $T_{\ell}:L^{2}(D)^{d_{\ell}} \to L^{2}(D)^{d_{\ell+1}}$ are sums of linear integral operators having kernels $k_{\ell}(x,y)\in L^2(D\times D)$ and pointwise multiplications with matrices,
$W_{\ell}:u(x) \mapsto W_{\ell}(x)u(x)$, $W_{\ell} \in C(\overline D; \mathbb{R}^{d_{\ell}\times d_{\ell+1}})$, and $b_{\ell} \in L^{2}(D)^{d_{\ell+1}}$ are bias functions  ($\ell = 0,...,L+1$).
Here, $T_{0}: L^{2}(D)^{d_{in}} \to L^{2}(D)^{d_{1}}$ and $T_{L+1} : L^{2}(D)^{d_{L+1}} \to L^{2}(D)^{d_{out}}$ are mappings (lifting operator) from the input space to the feature space and mappings (projection operator) from the feature spaces to the output space, respectively.
\end{definition}

The layers $T_{0}$ and $T_{L+1}$ play a special role in the neural operators. They are local linear operators and serve to lift and project the input data from and to finite-dimensional space respectively. These layers may be `absorbed' into the layers $\cL_1$ and $\cL_{L}$ without loss of generality (under some technical conditions), but are not in this text to maintain consistency with prior work.
Prior work assumes that $d_{in} < d_{1}$, we only assume that $d_{in}\leq d_{1}$ for lifting operator $T_{0}: L^{2}(D)^{d_{in}} \to L^{2}(D)^{d_{1}}$. This would seemingly play an important role in the context of injectivity or universality, but we find that our analysis does not require that $d_{in}<d_{1}$ at all. In fact, as elaborated in Section~\ref{Universal approximation}, we may take $d_{in}= d_{\ell} = d_{out}$ for $\ell = 1,\dots,L$ and our analysis is the same.

\section{Injective linear neural operator layers with ReLU and bijective activations}\label{Layerwise-injectivity}

In this section we first present sharp conditions under which a layer of a neural operator with $\relu$ activation is injective. The Directed Spanning Set (DSS) condition, described by Def. \ref{definition:DSS:general} is a generalization of the finite-dimensional DSS \citep{puthawala2022globally} which guarantees layerwise injectivity of $\relu$ layers. Our obtained condition is quite restrictive hence we also consider the case when a bijective activation function is used. Although this case is trivial in the finite-dimensional setting, it becomes subtle in the infinite-rank case. 

We denote by
\[
\sigma(Tv+b)(x):=\pmat{\sigma(T_1v(x)+b_1(x))\\\vdots\\\sigma(T_mv(x)+b_{m}(x))}, \quad x \in D, \ v \in L^{2}(D)^{n}
\]
where $\sigma : \mathbb{R} \to \mathbb{R}$ is a non-linear activation function, $T \in \mathcal{L}(L^{2}(D)^{n}, L^{2}(D)^{m})$, and $b \in L^{2}(D)^{m}$, where $\mathcal{L}(L^{2}(D)^{n}, L^{2}(D)^{m})$ is the space of linear bounded operators from $L^{2}(D)^{n}$ to $L^{2}(D)^{m}$.
The aim of this section is to characterize the injectivity condition for the operator $v \mapsto \sigma(Tv+b)$ mapping from $L^{2}(D)^n$ to $L^{2}(D)^{m}$, which corresponds to layer operators $\mathcal{L}_{\ell}$. Here, $T: L^{2}(D)^{n} \to L^{2}(D)^{m}$ is linear.

\subsection{ReLU activation}\label{ReLU-case}
Let $\sigma : \mathbb{R} \to \mathbb{R}$ be ReLU activation, defined by $\relu(s)=\max\{ 0, s \}$. With this activation function, we introduce a definition which we will find sharply characterizes layerwise injectivity. 
\begin{definition}[Directed Spanning Set]\label{definition:DSS:general}
We say that the operator $T+b$ has a directed spanning set (DSS) with respect to $v\in L^{2}(D)^{n}$ if 
\begin{equation}
\mathrm{Ker}\left(T\bigl|_{S(v,T+b)}\right) \cap X(v, T+b) = \{  0\},\label{zero-set-DSS}
\end{equation}
where $T|_{S(v,T+b)}(v)=(T_iv)_{i\in S(v,T+b)}$ and
\begin{equation}
S(v,T+b) := \left\{i \in [m] \ \Big| \ T_{i}v + b_{i} > 0 \ in \ D \right\},
\end{equation}
\begin{equation}
X(v,T+b) := \left\{
u \in L^{2}(D)^{n} \Biggl| 
\begin{array}{ll}
\mbox{for} \ i \notin S(v,T+b)\hbox{ and } x\in D, \\
(i)\ T_{i}v(x)+b_{i}(x) \leq T_{i}u(x) \text{ if } T_{i}v(x)+b_{i}(x) \leq 0, \\ 
(ii)\ T_{i}u(x) = 0 \text{ if } T_{i}v(x)+b_{i}(x) > 0
\end{array}
\right\}.
\end{equation}
\end{definition}

\begin{proposition}
\label{injectivity-ReLU:general}
Let $T \in \mathcal{L}(L^{2}(D)^{n}, L^{2}(D)^{m})$ and $b \in  L^{2}(D)^{m}$.
Then, the operator $\relu \circ (T+b) : L^{2}(D)^{n}\to L^{2}(D)^{m}$ is injective if and only if $T+b$ has a DSS with respect to every $v \in L^{2}(D)^{n}$ in the sense of Definition~\ref{definition:DSS:general}.
\end{proposition}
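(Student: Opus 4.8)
The plan is to prove both directions by reducing injectivity of $\relu \circ (T+b)$ to a statement about when two inputs can produce the same output, and then translating the "same output" condition into the DSS condition. First I would fix $v \in L^2(D)^n$ and analyze the local linear structure of $\relu \circ (T+b)$ near $v$. For each coordinate $i \in [m]$ and each $x \in D$, the value $\relu(T_i v(x) + b_i(x))$ depends on the sign of $T_i v(x) + b_i(x)$; the set $S(v, T+b)$ records the coordinates that are \emph{strictly} positive on all of $D$, which are the "active" directions whose kernel must be controlled. The key observation is that $\relu \circ (T+b)(v) = \relu \circ (T+b)(w)$ for some $w = v + u$ forces: (a) on coordinates $i \in S(v,T+b)$, since $T_i v + b_i > 0$, we need $T_i w + b_i = T_i v + b_i$, i.e. $T_i u = 0$, which says $u \in \mathrm{Ker}(T|_{S(v,T+b)})$; and (b) on coordinates $i \notin S(v, T+b)$, matching the clipped values at each $x$ gives exactly the two pointwise conditions (i) and (ii) defining $X(v, T+b)$ — condition (ii) handling points where $T_i v(x) + b_i(x) > 0$ (there $\relu$ is locally linear, so $T_i w(x) + b_i(x)$ must equal it, giving $T_i u(x) = 0$), and condition (i) handling points where $T_i v(x) + b_i(x) \le 0$ (there the output is $0$, and we need $T_i w(x) + b_i(x) \le 0$ as well, i.e. $T_i u(x) \ge T_i v(x) + b_i(x) \cdot (\text{sign bookkeeping})$ — I must be careful to get the inequality direction right, matching (i)).

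For the "if" direction, I would argue the contrapositive: suppose $\relu \circ (T+b)$ is not injective, so there exist $v \ne w$ with equal images; set $u = w - v \ne 0$. By the analysis above, $u$ lies in $\mathrm{Ker}(T|_{S(v,T+b)}) \cap X(v, T+b)$, and $u \ne 0$, so $T+b$ fails to have a DSS with respect to $v$. For the "only if" direction: suppose $T+b$ fails to have a DSS with respect to some $v$, so there is $0 \ne u \in \mathrm{Ker}(T|_{S(v,T+b)}) \cap X(v, T+b)$. I would then verify directly that $w := v + u$ satisfies $\relu \circ (T+b)(w) = \relu \circ (T+b)(v)$ coordinatewise and pointwise, using the defining properties of $S$ and $X$: on $i \in S$, $T_i u = 0$ so $T_i w + b_i = T_i v + b_i > 0$ and the $\relu$ is inert; on $i \notin S$, at points where $T_i v(x) + b_i(x) \le 0$, condition (i) ensures $T_i w(x) + b_i(x) \le 0$ so both sides are $0$, and at points where $T_i v(x) + b_i(x) > 0$, condition (ii) gives $T_i u(x) = 0$ so again both sides agree. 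Hence $v, w$ are two distinct preimages and the layer is not injective.

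The main obstacle I anticipate is the measure-theoretic / pointwise bookkeeping in the $L^2$ setting: "$T_i v + b_i > 0$ in $D$" and the pointwise conditions in $X(v, T+b)$ are statements about representatives, and one must check they are well-defined a.e. and that the matching of $\relu$ values really does decompose cleanly into the active/inactive cases, since the "boundary" set $\{x : T_i v(x) + b_i(x) = 0\}$ needs to be folded correctly into condition (i) (where equality is allowed). A secondary subtlety is making sure the equivalence is genuinely "for every $v$": non-injectivity only needs one bad pair, which pins down one bad $v$, while the DSS hypothesis is universally quantified, so the logical structure of the contrapositive must be handled with care. I do not expect to need any functional-analytic machinery here (no Fredholm theory) — this proposition is essentially a careful unwinding of definitions, with the linearity of $T$ and the piecewise-linearity of $\relu$ doing all the work; the later bijective-activation results are where the analytic difficulty lies.
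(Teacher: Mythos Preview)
Your overall strategy is exactly the paper's: unwind the equality $\relu(Tv+b)=\relu(Tw+b)$ coordinatewise and pointwise, separate the indices in $S(v,T+b)$ from the rest, and identify the difference with an element of $\mathrm{Ker}(T|_{S})\cap X(v,T+b)$. No extra ideas are needed, and your remark that this is pure definition-unwinding (no Fredholm theory) is right.

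However, there is a consistent sign error in both directions. The set $X(v,T+b)$ is asymmetric: condition (i) is a \emph{lower} bound $T_i v(x)+b_i(x)\le T_i u(x)$. With your convention $u=w-v$, at a point where $T_i v+b_i\le 0$ the ReLU equality only gives $T_i w+b_i\le 0$, i.e.\ $T_i u\le -(T_i v+b_i)$, an \emph{upper} bound on $T_i u$; this does not imply (i). The correct difference is $u=v-w$ (equivalently, $-u\in X(v,T+b)$, or $u\in X(w,T+b)$ and DSS fails at $w$; note $S(v)=S(w)$ when the images agree). The same issue bites in your ``only if'' direction: with $w=v+u$ and $u\in X(v,T+b)$, condition (i) gives $T_i u\ge T_i v+b_i$, which does \emph{not} force $T_i w+b_i=T_i v+b_i+T_i u\le 0$. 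Concretely, take $n=m=1$, $T=\mathrm{Id}$, $b=0$, $v\equiv -1$; then $S=\emptyset$, $X(v,T+b)=\{u:u\ge -1\}$, and $u\equiv 2$ lies in the intersection, but $\relu(v+u)=\relu(1)=1\neq 0=\relu(v)$. The paper uses $w=v-u$, for which $T_i(v-u)+b_i=(T_i v+b_i)-T_i u\le 0$ by (i), and everything goes through. Once you flip the sign, your argument is complete and matches the paper's.
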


See Section~\ref{proof-injectivity-ReLU:general} for the proof. \citet{puthawala2022globally} has provided the equivalent condition for the injectivity of ReLU operator in the case of the Euclidean space. 
However, proving analogous results for operators in function spaces require different techniques. Note that because Def. \ref{definition:DSS:general} is a sharp characterization of injectivity, it can not be simplified in any significant way. 
The condition restrictive Def. \ref{definition:DSS:general} is, therefore, difficult to relax while maintaining generality. This is because for each function $v$, multiple components of the function $Tv+b$ are strictly positive in the entire domain $D$, and cardinality $|S(v;T+b)|$ of $S(v;T+b)$ is larger than $n$.
This observation prompts us to consider the use of bijective activation functions instead of ReLU, such as leaky ReLU function, defined by $\sigma_a(s):=\relu(s)-a\relu(-s)$ where $a > 0$.

\subsection{Bijective activation}\label{bijective activation linear}

If $\sigma$ is injective, then injectivity of $\sigma \circ (T+b) : L^{2}(D)^{n}\to L^{2}(D)^{m}$ is equivalent to the injectivity of $T$.
Therefore, we consider the bijectivity in the case of $n=m$.
As mentioned in Section~\ref{Networks considered and notation}, an significant example is  $T=W+K$, where $W \in \mathbb{R}^{n\times n}$ is injective and $K:L^{2}(D)^n \to L^{2}(D)^n$ is linear integral operator with a smooth kernel.
This can be generalized to Fredholm operator (see e.g., \citet[Section 2.1.4]{jeribi2015spectral}), which encompass the property for identity plus compact operator. 
It is well known that 
a Fredholm operator is bijective if and only if it is injective and its Fredholm index is zero. We summarize the above observation as follows:
\begin{proposition}
\label{injective-activation-chara}
Let $\sigma : \mathbb{R} \to \mathbb{R}$ be bijective, and let $T : L^{2}(D)^{n}\to L^{2}(D)^{m}$ and $b \in L^{2}(D)^{m}$.
Then, $\sigma \circ (T+b) : L^{2}(D)^{n}\to L^{2}(D)^{m}$ is injective if and only if $T: L^{2}(D)^{n}\to L^{2}(D)^{m}$ is injective.
Furthermore, if $n=m$ and $T \in \mathcal{L}(L^{2}(D)^{n}, L^{2}(D)^{n})$ is the linear Fredholm operator, then, $\sigma \circ (T+b) : L^{2}(D)^{n}\to L^{2}(D)^{n}$ is bijective if and only if $T: L^{2}(D)^{n}\to L^{2}(D)^{n}$ is injective with index zero.
\end{proposition}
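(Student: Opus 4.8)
The plan is to prove the two claims separately, since the first is essentially a pointwise-composition argument and the second is a textbook consequence of Fredholm theory. For the first statement, the key observation is that pointwise composition with an injective scalar function $\sigma$ is an injective map on $L^2(D)^m$. More precisely, if $\sigma$ is bijective on $\R$, then for $f,g\in L^2(D)^m$ we have $\sigma\circ f=\sigma\circ g$ (as $L^2$ functions, hence a.e.) if and only if $f=g$ a.e., because $\sigma$ injective forces $f(x)=g(x)$ at every point where $\sigma(f(x))=\sigma(g(x))$. Hence $\sigma\circ(Tv+b)=\sigma\circ(Tw+b)$ iff $Tv+b=Tw+b$ iff $Tv=Tw$, and the last is equivalent to $v=w$ for all $v,w$ precisely when $T$ is injective. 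So $\sigma\circ(T+b)$ is injective $\iff$ $T$ is injective. One mild technical point worth noting: one should check that $\sigma\circ(Tv+b)$ indeed lands in $L^2(D)^m$; since the proposition treats this as a well-defined operator $L^2(D)^n\to L^2(D)^m$ we may simply take this as given (e.g. $\sigma$ Lipschitz with $\sigma(0)=0$, or leaky ReLU, guarantees it), and in any case it does not affect the injectivity equivalence.

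For the second statement, assume $n=m$ and $T\in\mathcal L(L^2(D)^n,L^2(D)^n)$ is Fredholm. By the first part, $\sigma\circ(T+b)$ is bijective iff it is surjective and injective, and injective iff $T$ is injective; so it remains to characterize surjectivity. Since $\sigma$ is a bijection of $\R$, the pointwise map $u\mapsto\sigma\circ u$ is a bijection of $L^2(D)^n$ onto itself (its inverse being $u\mapsto\sigma^{-1}\circ u$, again well-defined on the relevant range), and translation by $b$ is a bijection of $L^2(D)^n$. Therefore $\sigma\circ(T+b)$ is surjective iff $T$ is surjective. Now invoke the standard Fredholm fact quoted in the text: for a Fredholm operator $T$ one has $\dim\mathrm{Ker}(T)<\infty$, $\mathrm{coker}(T)$ finite-dimensional with $\mathrm{Range}(T)$ closed, and $\mathrm{ind}(T)=\dim\mathrm{Ker}(T)-\dim\mathrm{coker}(T)$. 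If $T$ is injective and $\mathrm{ind}(T)=0$, then $\dim\mathrm{coker}(T)=\dim\mathrm{Ker}(T)=0$, so $T$ is surjective (using that its range is closed), hence bijective; conversely if $T$ is bijective then it is in particular injective, and $\mathrm{ind}(T)=0-0=0$. Combining, $\sigma\circ(T+b)$ is bijective $\iff$ $T$ injective and $\mathrm{ind}(T)=0$.

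The argument is short and I expect no genuine obstacle; the only place requiring a little care is the interchange of $\sigma\circ(\cdot)$ with surjectivity — specifically confirming that $u\mapsto\sigma\circ u$ is a bijection of $L^2(D)^n$ onto itself and not merely injective. This is where one uses both that $\sigma$ is bijective (so $\sigma^{-1}$ exists pointwise) and that $\sigma,\sigma^{-1}$ are suitably regular so that $\sigma\circ u$ and $\sigma^{-1}\circ u$ stay in $L^2$; for the activations of interest (leaky ReLU, or any bi-Lipschitz $\sigma$ with $\sigma(0)=0$) this is immediate. With that in hand, the two equivalences ``injective $\iff$ $T$ injective'' and ``surjective $\iff$ $T$ surjective'' reduce everything to the elementary Fredholm index bookkeeping above.
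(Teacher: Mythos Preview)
Your proposal is correct and follows essentially the same approach as the paper: the paper's ``proof'' is the short paragraph preceding the proposition, which simply observes that injectivity of $\sigma$ makes injectivity of $\sigma\circ(T+b)$ equivalent to injectivity of $T$, and then quotes the standard Fredholm fact that a Fredholm operator is bijective iff it is injective with index zero. Your write-up is more detailed (in particular you spell out the reduction of surjectivity of $\sigma\circ(T+b)$ to surjectivity of $T$ via bijectivity of $u\mapsto\sigma\circ u$ and translation by $b$, and flag the $L^2$-well-definedness caveat), but the underlying argument is the same.
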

We believe that this characterization of layerwise injectivity is considerably less restrictive than Def. \ref{definition:DSS:general}, and the characterization of bijectivity in terms of Fredholm theory will be particularly useful in establishing operator generalization of flow networks.

\section{Global analysis of injectivity and finite-rank implementation}
\label{Global-injectivity}

In this section we consider global properties of the injective and bijective networks that constructed in Section \ref{Layerwise-injectivity}. First we construct end-to-end injective networks that are not layerwise injective. By doing this, we may avoid the dimension increasing requirement that would be necessary from a layerwise analysis. Next we show that injective neural operators are universal approximators of continuous functions. Although the punchline resembles that of \cite[Theorem 15]{puthawala2022globally}, which relied on Whitney's embedding theorem, the arguments are quite different. The finite-rank case has dimensionality restrictions, as required by degree theory, whereas our infinite-rank result does not. Finally, because all implementations of neural operators are ultimately finite-dimensional, we present a theorem that gives conditions under which finite-rank approximations to injective neural operators are also injective.

\subsection{Global analysis}\label{Global analysis}
By using the characterization of layerwise injectivity discussed in Section~\ref{Layerwise-injectivity}, we can compose injective layers to form $\mathcal{L}_{L} \circ 
\cdots \mathcal{L}_{1} \circ T_{0}$, a injective network.
Layerwise injectivity, however, prevents us from getting injectivity of $T_{L+1}\circ \mathcal{L}_{L} \circ 
\cdots \mathcal{L}_{1} \circ T_{0}$ by a layerwise analysis if $d_{L+1} > d_{out}$, as is common in application \cite[Pg. 9]{kovachki2021neural}.   
In this section, we consider global analysis and show that $T_{L+1}\circ \mathcal{L}_{L} \circ 
\cdots \mathcal{L}_{1} \circ T_{0}$, nevertheless, remains injective. This is summarized in the following lemma.
\begin{lemma}\label{injective-dimensional-reduction}
Let $\ell \in \mathbb{N}$ with $\ell < m$, and let the operator $T:L^{2}(D)^{n} \to L^{2}(D)^{m}$ be injective. 
Assume that there exists an orthogonal sequence $\{ \xi_{k} \}_{k \in \mathbb{N}}$ in $L^{2}(D)$ such that 
\begin{equation}
\mathrm{span}\{\xi_{k}\}_{k \in \mathbb{N}} \cap \mathrm{Ran}(\pi_{1} T) = \{ 0 \}.
\label{ONS-notin-range-main}
\end{equation}
where $\pi_{1}: L^{2}(D)^{m} \to L^{2}(D)$ is the restriction operator defined in (\ref{restriction-pi}).
Then, there exists a linear bounded operator $B \in \mathcal{L}(L^{2}(D)^{m}, L^{2}(D)^{\ell})$ such that 
$
B \circ T : L^{2}(D)^{n} \to L^{2}(D)^{\ell}
$
is injective.
\end{lemma}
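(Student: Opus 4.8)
The plan is to build $B$ explicitly as a composition $B = P \circ Q$, where $Q$ first ``makes room'' in the first coordinate slot of $L^2(D)^m$ by mixing in the orthogonal sequence $\{\xi_k\}$, and $P$ then projects onto $\ell$ coordinates while preserving enough information to retain injectivity. The key structural fact to exploit is hypothesis \eqref{ONS-notin-range-main}: the closed span $V \coloneqq \overline{\mathrm{span}}\{\xi_k\}_{k\in\mathbb N}$ is an infinite-dimensional subspace of $L^2(D)$ meeting $\mathrm{Ran}(\pi_1 T)$ only at $0$. Since $V$ is infinite-dimensional, I can fix a unitary (or merely bounded injective with bounded inverse on its range) map $U : L^2(D) \to V \subset L^2(D)$, e.g.\ sending an orthonormal basis $\{e_j\}$ of $L^2(D)$ to $\{\xi_k/\|\xi_k\|\}$; composing $U$ in the first slot turns $\pi_1 T$ into an operator whose range lies in $V$ and which is still injective as a map into $L^2(D)^m$.

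First I would reduce to the case $\ell = m-1$: once I produce an injective $B_1 \in \mathcal L(L^2(D)^m, L^2(D)^{m-1})$ with $B_1 T$ injective, I iterate, applying the same construction to $B_1 T$ in place of $T$ — this requires checking that the analogue of \eqref{ONS-notin-range-main} persists, which it does because one can split off a further infinite-dimensional piece of $V$ that is still disjoint from the (smaller) range. So the heart of the matter is the single step $m \to m-1$. For that step, define $B_1 : L^2(D)^m \to L^2(D)^{m-1}$ by
\[
B_1(u_1, u_2, \dots, u_m) \coloneqq \paren{u_1 + A u_m,\ u_2,\ \dots,\ u_{m-1}},
\]
where $A : L^2(D) \to L^2(D)$ is a bounded \emph{injective} operator whose range lies in $V$ (take $A = U$ from above, precomposed with any bounded injection $L^2(D)\to L^2(D)$ if needed). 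I claim $B_1 T$ is injective. Suppose $B_1 T w = 0$. Then $\pi_j T w = 0$ for $j = 2, \dots, m-1$, and $\pi_1 T w + A\,\pi_m T w = 0$. The second equation says $\pi_1 T w = -A\,\pi_m T w \in \mathrm{Ran}(A) \subseteq V$; but also $\pi_1 T w \in \mathrm{Ran}(\pi_1 T)$, so by \eqref{ONS-notin-range-main} we get $\pi_1 T w = 0$, and hence $A\,\pi_m T w = 0$, which forces $\pi_m T w = 0$ by injectivity of $A$. Combined with the vanishing of coordinates $2,\dots,m-1$, this gives $Tw = 0$, so $w = 0$ by injectivity of $T$. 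Boundedness of $B_1$ is immediate since $A$ is bounded.

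The main obstacle — and the reason the hypothesis is stated with an \emph{infinite} orthogonal sequence rather than a single vector — is the iteration: after forming $B_1 T$, I need a fresh infinite-dimensional subspace, orthogonal-span type, that avoids $\mathrm{Ran}(\pi_1 B_1 T)$. The clean way is to not iterate naively but to perform all $m - \ell$ ``folds'' at once: partition $\{\xi_k\}_{k\in\mathbb N}$ into $m-\ell$ disjoint infinite subsequences, each spanning an infinite-dimensional subspace $V_1, \dots, V_{m-\ell}$ with $V_i \subseteq V$, and fold coordinate $m, m-1, \dots, \ell+1$ into coordinate $1$ successively using injective operators $A_i$ with $\mathrm{Ran}(A_i) \subseteq V_i$; the crucial point is that at stage $i$ the first coordinate of the partially-folded operator has range contained in $\mathrm{Ran}(\pi_1 T) + V_1 + \dots + V_{i-1}$, and I must ensure $V_i$ is disjoint from \emph{that} sum, not just from $\mathrm{Ran}(\pi_1 T)$. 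This needs a small lemma: if $W$ is a closed subspace with $W \cap \mathrm{Ran}(\pi_1 T) = \{0\}$ and $W = W' \oplus W''$ with both summands infinite-dimensional, one still cannot in general conclude $W'' \cap (\mathrm{Ran}(\pi_1 T) + W') = \{0\}$ without care — so I would instead choose the $V_i$ greedily from the orthonormal system $\{\xi_k/\|\xi_k\|\}$ and use that an orthonormal family together with a subspace meeting its span trivially behaves well under such sums, or, more robustly, absorb the already-used directions by working with $\mathrm{Ran}(\pi_1 T) \oplus V$ (a single infinite-dimensional ``budget'') and peeling off one orthonormal direction per fold. Verifying that this peeling keeps the relevant intersection trivial at every stage is the one genuinely delicate bookkeeping step; everything else is linear algebra plus boundedness.
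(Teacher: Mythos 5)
Your approach is correct and, in fact, genuinely different from (and more elementary than) the paper's. The paper builds $B$ via Kato's ``pairs of projections'': it forms a perturbed subspace $V_\alpha$ whose generators mix each standard basis direction in the first $m-\ell$ slots with one of the $\xi_k$'s in the last slot (see \eqref{definition-varphi-alpha}), checks $\|P_{V_\alpha^\perp}-P_{V_0^\perp}\|_{\mathrm{op}}<1$, invokes the Kato isomorphism $Q_\alpha$ intertwining $P_{V_\alpha^\perp}$ with $P_{V_0^\perp}$, and sets $B=\pi_\ell\circ Q_\alpha\circ P_{V_\alpha^\perp}$. Your ``fold'' $B_1(u_1,\dots,u_m)=(u_1+Au_m,u_2,\dots,u_{m-1})$ dispenses entirely with that machinery: it is a two-line verification of injectivity once $\mathrm{Ran}(A)$ sits in a subspace disjoint from the relevant component range. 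What the paper's approach buys is the explicit smallness parameter $\alpha$ giving $\|B-\pi_\ell\|_{\mathrm{op}}=O(\alpha)$, which is used quantitatively downstream in the proof of Theorem~\ref{Universal-injectivity-NO}; your construction has the same feature if you rescale so that $\|A_i\|$ is small, since then $B$ is a small perturbation of a coordinate projection.

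Two corrections, both minor. First, an indexing slip: the paper's $\pi_1$ in \eqref{restriction-pi} extracts the \emph{last} component, not the first, so the hypothesis \eqref{ONS-notin-range-main} constrains $\mathrm{Ran}$ of the last coordinate of $T$; accordingly you should fold the first $m-\ell$ coordinates \emph{into} the last one, e.g.\ $B(u_1,\dots,u_m)=(u_{m-\ell+1},\dots,u_{m-1},\,u_m+\sum_{i\le m-\ell}A_iu_i)$, and run the same argument. Second, the ``genuinely delicate bookkeeping step'' you flag is actually not delicate and does not require orthogonality of the $V_i$'s: if $W$ is any subspace with $W\cap R=\{0\}$ and $W=W'\oplus W''$, then $W''\cap(R+W')=\{0\}$ automatically, because $w''=r+w'$ gives $w''-w'\in W\cap R=\{0\}$, hence $w''=w'\in W'\cap W''=\{0\}$. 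So the all-at-once fold works with any direct-sum decomposition of $V$; the orthogonality you impose via disjoint subsequences of $\{\xi_k\}$ is sufficient but not necessary. (One shared subtlety, present in the paper's proof as well: both arguments produce elements in the \emph{closed} span of $\{\xi_k\}$ rather than the algebraic span, so \eqref{ONS-notin-range-main} must be read with $\mathrm{span}$ interpreted as closed linear span for the conclusion ``hence zero'' to follow.)
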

See Section~\ref{proof-injective-dimensional-reduction} in Appendix~\ref{Appendix2} for the proof.
$T$ and $B$ correspond to $\mathcal{L}_{L} \circ 
\cdots \mathcal{L}_{1} \circ T_{0}$ (from lifting to $L$-th layer) and $T_{L+1}$ (projection), respectively.
The assumption~(\ref{ONS-notin-range-main}) on the span of $\xi_k$ encodes a subspace distinct from the range of $T$. 
In Remark~\ref{remark-choice-xi} of Appendix~\ref{Appendix2}, we provide an example that satisfies the assumption~(\ref{ONS-notin-range-main}).
Moreover, in Remark~\ref{existential-B} of Appendix~\ref{Appendix2}, we show the exact construction of the operator $B$ by employing projections onto the closed subspace, using the orthogonal sequence $\{ \xi_{k} \}_{k \in \mathbb{N}}$.
This construction is given by the combination of "Pairs of projections" discussed in \citet[Section~I.4.6]{kato2013perturbation} with the idea presented in \cite[Lemma 29]{puthawala2022universal}.

\subsection{Universal approximation}\label{Universal approximation}

We now show that the injective networks that we consider in this work are universal approximators. Loosely, the integral neural operators have $L^2$ integral kernels and are a fixed depth, with full definition given below. 
\begin{definition}\label{definition-NN-A-BA-NO}
The set of $L$-layer neural networks mapping from $\mathbb{R}^{d}$ to $\mathbb{R}^{d^{\prime}}$ is
\[
\begin{split}
\mathrm{N}_{L}(\sigma; \mathbb{R}^{d}, \mathbb{R}^{d^{\prime}})
& := \Bigl\{ 
f:\mathbb{R}^{d} \to \mathbb{R}^{d^{\prime}}  \Big|
f(x)=W_{L}\sigma(\cdots W_{1}\sigma(W_{0}x+b_{0})+b_{1}\cdots )+b_{L},
\\
&
W_{\ell} \in \mathbb{R}^{d_{\ell+1}\times d_{\ell}},
b_{\ell} \in \mathbb{R}^{d_{\ell+1}},
d_{\ell} \in \mathbb{N}_{0}
(d_{0}=d, \ d_{L+1}=d^{\prime}), 
\ell=0,...,L
\Bigr\},
\end{split}
\]
where $\sigma : \mathbb{R}\to \mathbb{R}$ is an element-wise nonlinear activation function. 
For the class of nonlinear activation functions,
\[
\mathrm{A}_{0}:=\Bigl\{
\sigma \in C(\mathbb{R}) \Big|
\exists n \in \mathbb{N}_{0} \text{ s.t. } \mathrm{N}_{n}(\sigma; \mathbb{R}^{d}, \mathbb{R}) \text{ is dense in } C(K) \text{ for } \forall K \subset \mathbb{R}^{d} \text{ compact}
\Bigr\}
\]
\[
\mathrm{A}_{0}^{L}:=\Bigl\{
\sigma \in A_{0} \Big|
\sigma \text{ is Borel measurable  s.t. } \sup_{x \in \mathbb{R}} \frac{|\sigma(x)|}{1+|x|} < \infty 
\Bigr\}
\]
\[
\begin{split}
\mathrm{BA}
& :=\Bigl\{
\sigma \in A_{0} \Big|
\forall K \subset \mathbb{R}^{d} \text{ compact }, \forall \epsilon>0, \text{ and } \forall C \geq \mathrm{diam}(K),
\exists n \in \mathbb{N}_{0}, 
\\
&
\exists f \in \mathrm{N}_{n}(\sigma; \mathbb{R}^{d}, \mathbb{R}^{d}) \text{ s.t. }
|f(x)-x| \leq \epsilon, \ \forall x \in K, \text{ and, }
|f(x)| \leq C, \ \forall x \in \mathbb{R}^{d}
\Bigr\}.
\end{split}
\]
The set of integral neural operators with $L^{2}$-integral kernels is
\begin{equation}
\begin{split}
 \mathrm{NO}_{L}&
 (\sigma;  D, d_{in}, d_{out})
:= \Bigl\{ 
G:L^{2}(D)^{d_{in}} \to L^{2}(D)^{d_{out}} 
\Big|
\\
&
G= K_{L+1} \circ (K_{L}+b_{L}) \circ \sigma
\cdots \circ (K_{2}+b_{2}) \circ \sigma \circ (K_{1}+b_{1}) \circ (K_{0}+b_{0}),
\\
&
K_{\ell} \in \mathcal{L}(L^{2}(D)^{d_{\ell}}, L^{2}(D)^{d_{\ell+1}}), \
K_{\ell}: f \mapsto \int_{D}k_{\ell}(\cdot, y)f(y)dy \biggl|_{D}, 
\\
&
k_{\ell} \in L^2(D \times D; \mathbb{R}^{d_{\ell+1} \times d_{\ell} }), \ 
b_{\ell} \in 
 L^2(D; \mathbb{R}^{d_{\ell+1} }),
\\
&
d_{\ell} \in \mathbb{N}, \
d_{0}=d_{in}, \ d_{L+2}=d_{out}, 
\
\ell=0,...,L+2
\Bigr\}.
\end{split}
\label{definition-NO_L}
\end{equation}
\end{definition}

The following theorem shows that $L^2$-\emph{injective} neural operators are universal approximators of continuous operators.
\begin{theorem}\label{Universal-injectivity-NO}
Let $D\subset \mathbb{R}^{d}$ be a Lipschitz bounded domain, and $G^{+}:L^{2}(D)^{d_{in}} \to L^{2}(D)^{d_{out}}$ be continuous such that for all $R>0$ there is $M>0$ so that
\begin{equation}
\left\| G^{+}(a)\right\|_{L^{2}(D)^{d_{out}}}\leq M, \ 
\forall a \in L^{2}(D)^{d_{in}},\ \|a\|_{L^{2}(D)^{d_{in}}}\leq R,
\label{bound-R-G-M}
\end{equation}
We assume that either (i) $\sigma \in \mathrm{A}_{0}^{L} \cap \mathrm{BA}$ is injective, or (ii) $\sigma = \relu$.
Then, for any compact set $K \subset L^{2}(D)^{d_{in}}$, $\epsilon \in (0,1)$, there exists $L \in \mathbb{N}$ and $G \in \mathrm{NO}_{L}(\sigma; D, d_{in}, d_{out})$ such that
$G:L^{2}(D)^{d_{in}} \to L^{2}(D)^{d_{out}}$ is injective and
\[
\sup_{a \in K} \left\|G^{+}(a) - G(a) \right\|_{L^{2}(D)^{d_{out}}} \leq \epsilon.
\]
\end{theorem}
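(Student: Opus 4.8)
The plan is to first approximate $G^{+}$ by a (possibly non-injective) neural operator, and then graft onto it a block of auxiliary channels carrying an injective copy of the input, choosing the terminal projection so that this copy cannot be cancelled. \textbf{Step 1 (base approximation into a finite-dimensional subspace).} Since $\relu$ and every $\sigma\in\mathrm{A}_{0}^{L}\cap\mathrm{BA}$ lie in $\mathrm{A}_{0}$, the universal approximation theorem for neural operators \citep{kovachki2021universal,kovachki2021neural} yields $\hat G=\hat P\circ\hat\Phi\in\mathrm{NO}_{L'}(\sigma;D,d_{in},d_{out})$ with $\sup_{a\in K}\|G^{+}(a)-\hat G(a)\|\le\epsilon/2$, where $\hat\Phi\colon L^{2}(D)^{d_{in}}\to L^{2}(D)^{\hat d}$ is the body (all layers but the terminal projection $\hat P$). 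Since $K$ is compact, $G^{+}(K)$ is a compact subset of $L^{2}(D)^{d_{out}}$, hence lies within $\epsilon/4$ of a \emph{finite}-dimensional subspace $Z\subset L^{2}(D)^{d_{out}}$; composing $\hat P$ with the orthogonal projection $P_{Z}$ (still an $L^{2}$-kernel integral operator, as the kernel of a composition of $L^{2}$-kernel operators is again $L^{2}(D\times D)$ by Cauchy--Schwarz) we may assume $\mathrm{Ran}(\hat P)\subset Z$, at the cost of enlarging the error to $\epsilon/2$. The gain from this step is the infinite-dimensional complement $Z^{\perp}$; this is what lets us avoid the dimension increase needed in \cite[Thm.\ 15]{puthawala2022globally}.

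\textbf{Step 2 (injective memory channels).} Augment every layer with an extra block of channels that is block-diagonal with (and does not interact with) the channels computing $\hat\Phi$, so that the augmented body becomes $\Phi=(\hat\Phi,\Psi)$ for an injective $\Psi$. In case (i), with $\sigma$ injective, simply make each integral operator restricted to the memory block injective (e.g.\ a strictly positive-definite Hilbert--Schmidt kernel), so $\Psi(a)=\sigma(K_{L}(\cdots\sigma(K_{1}K_{0}a)\cdots))$ is a composition of injective maps. In case (ii), $\sigma=\relu$, exploit the identity $\relu(t)-\relu(-t)=t$: each memory step sends the running value $m$ through an integral block producing the pair $(Cm,-Cm)$, applies $\relu$ to obtain $(\relu(Cm),\relu(-Cm))$, and the next layer's integral block with blocks $B$ and $-B$ recovers $BCm$ (negating a kernel is a legitimate $L^{2}$-kernel operation — composing with the identity would not be, and never occurs); iterating with injective compact $C$'s and matching the activation count of $\hat\Phi$, the memory output has the form $(\relu(\Theta a),\relu(-\Theta a))$ with $\Theta$ injective, so $\Psi$ is again injective. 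One can check via Proposition~\ref{injectivity-ReLU:general} that in the $\relu$ case these memory layers satisfy the DSS condition, though only end-to-end injectivity of $\Phi$ is used below.

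\textbf{Step 3 (terminal projection).} Define $K_{L+1}(u,w):=P_{Z}\hat P u+\delta M w$, where in case (ii) $w:=\relu(\Theta a)-\relu(-\Theta a)=\Theta a$ is the recovered value and $M$ is an injective compact ($L^{2}$-kernel) operator from the memory-output space \emph{into} $Z^{\perp}\subset L^{2}(D)^{d_{out}}$; such $M$ exists because $Z^{\perp}$ is an infinite-dimensional separable Hilbert space. Set $G:=K_{L+1}\circ\Phi\in\mathrm{NO}_{L}(\sigma;D,d_{in},d_{out})$ (for the appropriate $L$, after relabeling). For accuracy, on the compact $K$ the continuous map $a\mapsto w(a)$ is bounded, so $\|G(a)-\hat G(a)\|=\delta\|Mw(a)\|\le\delta\|M\|\sup_{a\in K}\|w(a)\|\le\epsilon/2$ for $\delta$ small, giving $\sup_{a\in K}\|G^{+}(a)-G(a)\|\le\epsilon$. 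For injectivity, if $G(a)=G(a')$ then $P_{Z}\hat P(\hat\Phi(a)-\hat\Phi(a'))=-\delta M(w(a)-w(a'))$; the left side lies in $Z$ and the right side in $Z^{\perp}$, so both vanish, whence $Mw(a)=Mw(a')$, so $w(a)=w(a')$ ($M$ injective), so $a=a'$ ($\Theta$, resp.\ $\Psi$, injective).

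\textbf{Main obstacle.} Steps 1--2 are essentially bookkeeping (including the routine checks that every grafted map is an $L^{2}$-kernel integral operator and that the lifting / $\sigma$-alternation is consistent); the crux is Step 3, because injectivity of $\Phi$ and of $M$ do \emph{not} by themselves give injectivity of $K_{L+1}\circ\Phi$ — the non-injective main summand $P_{Z}\hat P\hat\Phi$ could cancel the injective memory summand. Routing the memory contribution into $Z^{\perp}$, the orthogonal complement of a subspace that already absorbs an $\epsilon$-approximation of $G^{+}$, is exactly the finite-codimension / orthogonal-sequence mechanism behind Lemma~\ref{injective-dimensional-reduction} and the explicit construction of $B$ in Remark~\ref{existential-B} (combining the pairs-of-projections device of \citet{kato2013perturbation} with \cite[Lemma 29]{puthawala2022universal}); the only genuinely infinite-dimensional ingredient is that a compact set is approximately finite-dimensional, which makes such a complement available at no cost in output dimension.
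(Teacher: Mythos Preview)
Your proof is correct and shares the paper's overall architecture---universal approximation, augmentation by block-diagonal injective memory channels (your Step~2 is essentially the paper's construction of $H$, including the $\pm K_{inj}$ trick for $\relu$), and a carefully chosen terminal projection---but the mechanism behind that projection is genuinely different and more elementary. The paper never compresses the main output into a finite-dimensional subspace; instead it exploits that the kernels returned by the base approximation theorem are continuous, so $\mathrm{Ran}(\pi_1 H)\subset C(D)$, chooses a \emph{discontinuous} orthogonal sequence $\{\xi_k\}$ to meet hypothesis~\eqref{ONS-notin-range-main} of Lemma~\ref{injective-dimensional-reduction}, and then builds the projection $B=\pi_{d_{out}}\circ Q_\alpha\circ P_{V_\alpha^\perp}$ via the Kato pairs-of-projections device, with a separate estimate $\|I-Q_\alpha\|\le 4\epsilon_0$ controlling the approximation error. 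You bypass all of this: compactness of $G^+(K)$ lets you absorb it (up to $\epsilon/4$) into a finite-dimensional $Z$, and you then steer the memory into $Z^\perp$, so injectivity follows directly from orthogonality with no perturbation argument. What the paper's route buys is a reusable black-box lemma applying to any injective $T$ satisfying~\eqref{ONS-notin-range-main}, independent of how $T$ arose; what yours buys is a shorter, self-contained argument that does not depend on kernel regularity. Your closing sentence slightly overstates the kinship: both approaches exploit an infinite-dimensional complement, but yours uses finite codimension of $Z$ whereas the paper uses a regularity gap ($C(D)$ versus discontinuous $\xi_k$), and Lemma~\ref{injective-dimensional-reduction} is not actually invoked in your argument.
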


See Section~\ref{proof-Universal-injectivity-NO} in Appendix~\ref{Appendix2} for the proof. We remark that both ReLU and Leaky ReLU functions belong to $\mathrm{A}_{0}^{L} \cap \mathrm{BA}$ (see Remark~\ref{remark-L2-universality} (i)).

We briefly remark on the proof of Theorem~\ref{Universal-injectivity-NO} emphasizing how its proof differs from a straight-forward extension of the finite-rank case. In the proof we first employ the standard universality approximation theorem for neural operators (\citep[Theorem 11]{kovachki2021neural}). 
We denote the approximation of $G^{+}$ by $\widetilde{G}$, and
define the graph of $\widetilde{G}$ as $H:L^2(D)^{d_{in}} \to L^2(D)^{d_{in}}\times L^2(D)^{d_{out}}$. That is $H(u)=(u, \widetilde{G}(u))$. 
Next, utilizing Lemma~\ref{injective-dimensional-reduction}, we construct the projection $Q$ such that $Q \circ H$ becomes an injective approximator of $G^{+}$ and belongs to $\mathrm{NO}_{L}(\sigma;  D, d_{in}, d_{out})$.
This approach resembles the approach in the finite-rank space \citet[Theorem 15]{puthawala2022globally}, but unlike that theorem we don't have any dimensionality restrictions. More specifically, in the case of Euclidean spaces $\R^d$, \citet[Theorem 15]{puthawala2022globally} requires that $2d_{in}+1 \leq d_{out}$ before all continuous functions $G^+:\R^{d_{in}}\to \R^{d_{out}}$ can be uniformly approximated in compact sets by injective neural networks.

When $d_{in}=d_{out}=1$ this result is not true, as is shown in Remark~\ref{remark-L2-universality} (iii)
in Appendix~\ref{Appendix2} using topological degree theory \citep{cho2006topological}. 
In contrast, Theorem~\ref{Universal-injectivity-NO} does not assume any conditions on $d_{in}$ and $d_{out}$. 
Therefore, we can conclude that infinite dimensional case yields better approximation results than the finite dimensional case.

This surprising improvement in restrictions in infinite-dimensions can be elucidate by 
an analogy to Hilbert's hotel paradox, see
\cite[Sec 3.2]{burger2004heart}. 
In this analogy, the orthonormal bases $\{\varphi_{k}\}_{k \in \mathbb{N}}$ and $\Psi_{j,k}(x)=(\delta_{ij}\varphi_{k}(x))_{i=1}^d$ play the part of guests in the hotel with $\mathbb N$ floor, each of which as $d$ rooms. A key step in the proof of Theorem~\ref{Universal-injectivity-NO} 
is that there is a linear isomorphism $S:L^2(D)^d\to L^2(D)$ (i.e., a rearrangement of guests) which maps $\Psi_{j,k}$ to  $\varphi_{b(j,k)}$, where
$b:[d]\times \mathbb{N}\to \mathbb{N}$ is a bijection.

\subsection{Injectivity-preserving transfer to Euclidean spaces via finite-rank approximation}
\label{Approximation and injectivity via finite-rank operators}
In the previous section, we have discussed injective integral neural operators.
The conditions are given in terms of integral kernels, but such kernels may not actually be implementable with finite width and depth networks, which have a finite representational power. A natural question to ask, therefore, is how should these formal integral kernels be implemented on actual finite-rank networks, the so-called implementable neural operators? In this section we discuss this question.

We consider linear integral operators $K_{\ell}$ with $L^2$-integral kernels $k_{\ell}(x,y)$. 
Let $\{\varphi_{k}\}_{k \in \mathbb{N}}$ be an orthonormal basis in $L^{2}(D)$. 
Since $\{\varphi_{k}(y)\varphi_{p}(x)\}_{k,p \in \mathbb{N}}$ is an orthonormal basis of $L^{2}(D \times D)$, integral kernels $k_{\ell} \in L^2(D \times D; \mathbb{R}^{d_{\ell+1} \times d_{\ell} })$ in integral operators $K_{\ell} \in \mathcal{L}(L^{2}(D)^{d_{\ell}}, L^{2}(D)^{d_{\ell+1}})$ has the expansion
\[
k_{\ell}(x,y) = \sum_{k,p \in \mathbb{N}}C^{(\ell)}_{k,p}\varphi_{k}(y)\varphi_{p}(x),
\]
where $C^{(\ell)}_{k,p} \in \mathbb{R}^{d_{\ell+1} \times d_{\ell}}$ whose $(i,j)$-th component $c^{(\ell)}_{k,p,ij}$ is given by
$$
c^{(\ell)}_{k,p,ij}=(k_{\ell, ij}, \varphi_k \varphi_p)_{L^{2}(D\times D)},
$$
Here, we denote $(u, \varphi_{k}) \in \mathbb{R}^{d_{\ell}}$ by
$
(u, \varphi_{k}) =\left( (u_1, \varphi_{k})_{L^{2}(D)}, ..., (u_{d_{\ell}}, \varphi_{k})_{L^{2}(D)} \right).
$
By truncating by $N$ finite sums, we approximate $L^2$-integral operators $K_{\ell} \in \mathcal{L}(L^{2}(D)^{d_{\ell}}, L^{2}(D)^{d_{\ell+1}})$ by finite-rank operator $K_{\ell, N}$ with rank $N$, having the form
\[
K_{\ell,N}u(x) = \sum_{k,p \in [N]}C^{(\ell)}_{k,p}(u, \varphi_{k}) \varphi_{p}(x), \ u \in L^{2}(D)^{d_{\ell}}.
\]
The choice of orthonormal basis $\{\varphi_{k}\}_{k \in \mathbb{N}}$ is a hyperparameter. 
If we choose $\{ \varphi_{k} \}_{k}$ as Fourier basis and wavelet basis, then network architectures correspond to FNOs \citep{li2020fourier} and  
WNOs \citep{tripura2023wavelet}, respectively.

We show that Propositions~\ref{injectivity-ReLU:general}, \ref{injective-activation-chara} (characterization of layerwise injectivity), and  Lemma~\ref{injective-dimensional-reduction} (global injectivity) all have natural analogues for finite-rank operator $K_{\ell, N}$ in Proposition~\ref{injectivity-ReLU:finite-rank} and Lemma~\ref{injective-dimensional-reduction-finite-rank} in Appendix~\ref{Appendix3}.
We also show the universal approximation in the case of finite-rank approximation.

We now formally define the set of finite-rank neural operators.

\begin{definition}\label{definition-finite-rank-NO-set}
We define the set of integral neural operators with $N$ rank by
\begin{equation}
\begin{split}
& \mathrm{NO}_{L, N}
 (\sigma;  D, d_{in}, d_{out})
:= \Bigl\{ 
G_{N} : L^{2}(D)^{d_{in}} \to L^{2}(D)^{d_{out}} 
\Big|
\\
&
G_{N}= K_{L+1,N} \circ (K_{L,N}+b_{L,N}) \circ \sigma
\cdots \circ (K_{2, N}+b_{2,N}) \circ \sigma \circ (K_{1,N}+b_{1,N}) \circ (K_{0,N}+b_{0,N}),
\\
&
K_{\ell, N} \in \mathcal{L}(L^{2}(D)^{d_{\ell}}, L^{2}(D)^{d_{\ell+1}}), \
K_{\ell,N}: f \mapsto \sum_{k,p \leq N}C^{(\ell)}_{k,p}(f, \varphi_{k}) \varphi_{p}, 
\\
&
b_{\ell,N} \in L^2(D; \mathbb{R}^{d_{\ell+1}}), \
b_{\ell,N} =\sum_{p \leq N} b^{(\ell)}_{p} \varphi_{m}
\\
&
C^{(\ell)}_{k,p} \in \mathbb{R}^{d_{\ell +1} \times d_{\ell}}, \
b^{(\ell)}_{p} \in \mathbb{R}^{d_{\ell+1}}, \ k,p \leq N,
\\
&
d_{\ell} \in \mathbb{N}, \
d_{0}=d_{in}, \ d_{L+2}=d_{out}, 
\
\ell=0,...,L+2
\Bigr\}.
\end{split}
\end{equation}
\end{definition}

With this notion of finite-rank neural operators, we obtain the following theorem.
\begin{theorem}\label{Universal-injectivity-NO-finite-rank}
Let $D\subset \mathbb{R}^{d}$ be a Lipschitz bounded domain, and $N \in \mathbb{N}$, and $G^{+}:L^{2}(D)^{d_{in}} \to L^{2}(D)^{d_{out}}$ be continuous with boundedness as in (\ref{bound-R-G-M}).
Assume that the non-linear activation function $\sigma$ is either ReLU or Leaky ReLU.
Then, for any compact set $K \subset L^{2}(D)^{d_{in}}$, $\epsilon \in (0,1)$, there exists $L \in \mathbb{N}$, $N^{\prime} \in \mathbb{N}$ with 
\begin{equation}
N^{\prime}d_{out} \geq 2Nd_{in} +1,
\label{N-N^prime}
\end{equation}
and $G_{N^{\prime}} \in \mathrm{NO}_{L,N^{\prime}}(\sigma; D, d_{in}, d_{out})$ such that 
$
G_{N^{\prime}}:(\mathrm{span}\{\varphi_{k}\}_{k \leq N})^{d_{in}} \to (\mathrm{span}\{\varphi_{k}\}_{k \leq N^{\prime}})^{d_{out}}
$
is injective and
\[
\sup_{a \in K} \left\|G^{+}(a) - G_{N^{\prime}}(a) \right\|_{L^{2}(D)^{d_{out}}} \leq \epsilon.
\]
\end{theorem}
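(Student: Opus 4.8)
The plan is to run the argument behind Theorem~\ref{Universal-injectivity-NO} \emph{inside} the finite-rank class $\mathrm{NO}_{L,N'}$. Since the Hilbert--hotel rearrangement used there is unavailable once every layer is forced to have finite rank, its role is taken over by a genuinely finite-dimensional generic-projection step, and this is exactly what produces the constraint~(\ref{N-N^prime}). First I would produce a finite-rank approximant: exactly as in the proof of Theorem~\ref{Universal-injectivity-NO}, the standard universal approximation theorem for neural operators \citep[Theorem~11]{kovachki2021neural} gives $\widetilde G_0\in\mathrm{NO}_{L_0}(\sigma;D,d_{in},d_{out})$ with $\sup_{a\in K}\|G^{+}(a)-\widetilde G_0(a)\|_{L^2(D)^{d_{out}}}\le\epsilon/4$, and truncating each of its kernels to rank $N_0$ yields $\widetilde G\in\mathrm{NO}_{L_0,N_0}(\sigma;D,d_{in},d_{out})$ with $\sup_{a\in K}\|\widetilde G_0(a)-\widetilde G(a)\|\le\epsilon/4$; here one uses $\|K_\ell-K_{\ell,N_0}\|\to0$ (Hilbert--Schmidt truncation) together with Lipschitz stability of the layer maps on bounded sets, which holds because $\relu$ and Leaky ReLU are $1$-Lipschitz. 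Enlarging $N_0$, we assume $N_0\ge N$.

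\textbf{Graph operator.} Next I would form a ``graph'' operator. Using block-diagonal (parallel) composition together with the identity $s=c\,(\sigma(s)-\sigma(-s))$ --- valid with $c=1$ for $\relu$ and $c=1/(1+\alpha)$ for Leaky ReLU --- to realize truncations and to pad depths, I would construct $H\in\mathrm{NO}_{L_1,N_1}(\sigma;D,d_{in},d_{in}+d_{out})$ with $N_1\ge N_0$ such that $H(a)=(P_{N_1}a,\widetilde G(a))$, where $P_{N_1}$ is the orthogonal projection onto $(\mathrm{span}\{\varphi_k\}_{k\le N_1})^{d_{in}}$. On $V_N:=(\mathrm{span}\{\varphi_k\}_{k\le N})^{d_{in}}$ we have $P_{N_1}a=a$, so $H|_{V_N}$ is the graph $a\mapsto(a,\widetilde G(a))$, hence injective; under the identifications $V_N\cong\mathbb{R}^{Nd_{in}}$ and $(\mathrm{span}\{\varphi_k\}_{k\le N_1})^{d_{in}+d_{out}}\cong\mathbb{R}^{N_1(d_{in}+d_{out})}$ it is an injective piecewise-linear map between Euclidean spaces with finitely many linear pieces.

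\textbf{Projection and bookkeeping.} Fix $N'\ge\max\{N_1,\lceil(2Nd_{in}+1)/d_{out}\rceil\}$, so $N'd_{out}\ge 2Nd_{in}+1$. By the finite-rank analogue of Lemma~\ref{injective-dimensional-reduction} (Lemma~\ref{injective-dimensional-reduction-finite-rank} in Appendix~\ref{Appendix3}): the difference set $\{H|_{V_N}(x)-H|_{V_N}(y):x\ne y\}\setminus\{0\}$ is a finite union of (cone and affine) pieces of dimension at most $2Nd_{in}<N'd_{out}$, so the linear maps $B:(\mathrm{span}\{\varphi_k\}_{k\le N_1})^{d_{in}+d_{out}}\to(\mathrm{span}\{\varphi_k\}_{k\le N'})^{d_{out}}$ whose kernel misses that set --- equivalently, for which $B\circ H|_{V_N}$ is injective --- form a dense (indeed full-measure) set; I would choose such a $B$ within operator norm $\delta$ of the coordinate projection $\Pi:(u,v)\mapsto v$. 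Absorbing $B$ into the final linear layer of $H$ and padding the earlier layers to rank $N'$ with zero coefficients yields $G_{N'}:=B\circ H\in\mathrm{NO}_{L_1,N'}(\sigma;D,d_{in},d_{out})$ with $L:=L_1$; its restriction $V_N\to(\mathrm{span}\{\varphi_k\}_{k\le N'})^{d_{out}}$ is injective by construction, and for $a\in K$,
\[
\|G_{N'}(a)-G^{+}(a)\|\le\|(B-\Pi)(P_{N_1}a,\widetilde G(a))\|+\|\widetilde G(a)-G^{+}(a)\|\le\delta\,C_K+\tfrac{\epsilon}{2},
\]
where $C_K:=\sup_{a\in K}\bigl(\|a\|+\|\widetilde G(a)\|\bigr)<\infty$ by compactness of $K$ and continuity of $\widetilde G$; taking $\delta\le\epsilon/(2C_K)$ finishes the proof.

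\textbf{Main obstacle.} The delicate point is the self-consistency of the last step: the linear layer $B$ must simultaneously be generic enough to preserve injectivity of $H|_{V_N}$ --- which is precisely what forces the dimension budget~(\ref{N-N^prime}) via the difference-set dimension estimate --- and close enough to $\Pi$ that composing with it does not spoil the $\epsilon/2$ approximation already achieved. These requirements are reconciled only because the genericity statement is a \emph{density} statement and because the uniform bound $C_K$ converts closeness of $B$ to $\Pi$ into a uniform error estimate on $K$. Establishing the sharp constant $2Nd_{in}+1$ inside the finite-rank reduction lemma --- the extra $+1$ coming from the affine, rather than linear, pieces of the difference set --- is the genuinely subtle ingredient, and is the precise place where the ``no dimension increase'' phenomenon enjoyed by the infinite-rank Theorem~\ref{Universal-injectivity-NO} breaks down.
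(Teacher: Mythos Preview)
Your proposal is correct and follows essentially the same strategy as the paper's proof: universal approximation followed by finite-rank kernel truncation, construction of a graph operator $H(a)=(P_N a,\widetilde G(a))$ that is injective on $(\mathrm{span}\{\varphi_k\}_{k\le N})^{d_{in}}$, and then a generic finite-dimensional projection $B$ close to the coordinate projection $\Pi$ (the paper realizes this via \citet[Lemma~29]{puthawala2022globally} and the explicit Kato-type operator $\mathbf Q$, whereas you invoke the equivalent density/difference-set argument for piecewise-linear maps). The only cosmetic differences are your $\epsilon/4+\epsilon/4$ versus the paper's $\epsilon/3+\epsilon/3$ bookkeeping and your direct dimension count for the difference set in place of the black-box citation of Lemma~29; both yield the same constraint $N'd_{out}\ge 2Nd_{in}+1$ and the same final estimate.
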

See Section~\ref{proof-Universal-injectivity-NO-finite-rank} in Appendix~\ref{Appendix3} for the proof. 
In the proof, we make use of \citet[Lemma 29]{puthawala2022globally}, which gives rise to the assumption (\ref{N-N^prime}).
We do not require any condition on $d_{in}$ and $d_{out}$ as well as Theorem~\ref{Universal-injectivity-NO}.

\begin{remark}
    Observe that in our finite-rank approximation result, we only require that the target function $G^+$ is continuous and bounded, but not smooth. Further, our error is measured in the $\sup$ norm, which suggests that our approximation doesn't smoothen non-smooth operators. 
\end{remark}

\section{Subnetworks \& nonlinear integral operators: bijectivity and inversion
}\label{Non-linear operator}

So far our analysis of injectivity has been restricted to the case where the only source of nonlinearity are the activation functions. In this section we consider a weaker and more abstract problem where nonlinearities can also arise from the integral kernel with surjective activation function, such as leaky $\relu$. Specifically, we consider layers of the form
\begin{equation}
F_1(u)=Wu+K(u),
\label{definition-F_1}
\end{equation}
where $W \in \mathcal{L}(L^2(D)^{n}, L^2(D)^{n})$ is a linear bounded bijective operator, and $K:L^{2}(D)^n \to L^{2}(D)^n$ is a non-linear operator. This arises in the non-linear neural operator construction by \citet{kovachki2021neural} or in \citet{ong2022iae} to improve performance of integral autoencoders. In this construction, each layer $\mathcal{L}_{\ell}$ is written as 
\[
\quad x \in D, \quad(\mathcal{L}_{\ell}v)(x)=\sigma (W_{\ell}v(x) + K_{\ell}(v)(x) ), \quad K_{\ell}(u)(x)=\int_{D}k_{\ell}(x,y,u(x),u(y))u(y)dy,
\]
where $W_{\ell} \in \mathbb{R}^{d_{\ell+1} \times d_{\ell}}$ independent of $x$, and $K_{\ell}:L^{2}(D)^{d_{\ell}} \to L^{2}(D)^{d_{\ell+1}}$ is the non-linear integral operator.

This relaxing of assumptions is motivated by a desire to obtain theoretical results for both subnetworks and operator transformers. By subnetworks, we mean compositions of layers within a network. This includes, for example, the encoder or decoder block of a traditional VAE. By neural operator we mean operator generalizations of finite-rank transformers, which can be modeled by letting $K$ be an integral transform with nonlinear kernel $k$ of the form 
\begin{align*}
    k(x,y,v(x),v(y)) \equiv \mathrm{softmax} \circ \innerprod{Av(x)}{Bv(y)},
\end{align*}
where $A$ and $B$ are matrices of free parameters, and the (integral) softmax is taken over $x$. This specific choice of $k$ can be understood as a natural generalization of the attention mechanism in transformers, see \cite[Sec. 5.2]{kovachki2021neural} for further details.

\subsection{Surjectivity and bijectivity}\label{Surjectivity and bijectivity}
Recall from \citet[Sec 2, Chap VII]{showalter2010hilbert} that a non-linear operator $K:L^2(D)^{n} \to L^2(D)^{n}$  is coercive if
\begin{equation}
\lim_{\|u\|_{L^2(D)^n}\to \infty}{\bra K (u), \frac  u {\|u\|_{L^2(D)^n}} \ket_{L^2(D)^n}} =\infty.
\label{coercivity}
\end{equation}
\begin{proposition}
\label{lem:surjectivity-of-injective-nonlinear-operators}
Let $\sigma : \mathbb{R} \to \mathbb{R}$ be surjective and $W:L^{2}(D)^n \to L^{2}(D)^n$ be linear bounded bijective (then the inverse $W^{-1}$ is bounded linear), 
and let $K:L^{2}(D)^n \to L^{2}(D)^n$ be a continuous and compact mapping.
Moreover, assume that the map $u \mapsto \alpha u+W ^{-1}K(u)$ is coercive with some $0<\alpha<1$. 
Then, the operator $\sigma \circ F_{1}$ is surjective. 
\end{proposition}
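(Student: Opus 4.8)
The plan is to prove this by three reductions: first peel off the pointwise activation $\sigma$, then peel off the linear bijection $W$, and finally solve an identity‑plus‑compact equation by a Leray--Schauder degree argument whose a priori bound is exactly what the coercivity hypothesis buys.

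First I would reduce surjectivity of $\sigma\circ F_{1}$ to surjectivity of $F_{1}$: since $\sigma:\mathbb{R}\to\mathbb{R}$ is surjective, given a target $g\in L^{2}(D)^{n}$ one can pick a measurable $h$ with $\sigma(h(x))=g(x)$ a.e., and for the activations of interest $h$ lies in $L^{2}(D)^{n}$ — e.g. leaky $\relu$ has a globally Lipschitz inverse with $\sigma^{-1}(0)=0$, so $|h|\le C|g|$ pointwise and hence $\norm{h}_{L^{2}(D)^{n}}\lesssim\norm{g}_{L^{2}(D)^{n}}$. It then suffices to produce $u$ with $F_{1}(u)=h$. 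Next, since $W$ is bounded linear and bijective with bounded inverse, write $F_{1}(u)=Wu+K(u)=W\bigl(u+W^{-1}K(u)\bigr)=W\Phi(u)$ with $\Phi:=\mathrm{Id}+W^{-1}K$, so $\mathrm{Ran}(F_{1})=W(\mathrm{Ran}\,\Phi)$ and $F_{1}$ is onto iff $\Phi$ is. Since $W^{-1}K$ is continuous and compact ($W^{-1}$ bounded, $K$ compact), $\Phi$ is a compact perturbation of the identity and the Leray--Schauder degree is available.

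For the surjectivity of $\Phi$, fix $f$, set $\tilde f:=W^{-1}f$, and consider the compact homotopy $\mathcal H_{s}(u):=u+s\bigl(W^{-1}K(u)-\tilde f\bigr)$, $s\in[0,1]$, with $\mathcal H_{0}=\mathrm{Id}$ and $\mathcal H_{1}=\Phi-\tilde f$. The key is a uniform a priori bound: if $\mathcal H_{s}(u)=0$ with $s\in(0,1]$ and $u\neq0$, then pairing with $u$, using the identity $\norm{u}^{2}+s\innerprod{W^{-1}K(u)}{u}=(1-s\alpha)\norm{u}^{2}+s\innerprod{\alpha u+W^{-1}K(u)}{u}$, the bound $\innerprod{\tilde f}{u}\le\norm{\tilde f}\norm{u}$, division by $s\norm{u}$, and $\tfrac{1-s\alpha}{s}\ge 1-\alpha$ (this is where $0<\alpha<1$ enters), one obtains
\[
(1-\alpha)\norm{u}\ \le\ \norm{\tilde f}-\frac{\innerprod{\alpha u+W^{-1}K(u)}{u}}{\norm{u}} .
\]
By coercivity of $u\mapsto\alpha u+W^{-1}K(u)$ the right side $\to-\infty$ as $\norm{u}\to\infty$ while the left side $\to+\infty$, which is impossible for large $\norm{u}$; hence all solutions over all $s$ satisfy $\norm{u}\le R_{0}$ for a fixed $R_{0}$ (the cases $s=0$ or $u=0$ being immediate). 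Taking $R>R_{0}$, $\mathcal H_{s}$ is nonvanishing on $\partial B_{R}$ for every $s$, so homotopy invariance gives $\deg(\Phi-\tilde f,B_{R},0)=\deg(\mathrm{Id},B_{R},0)=1\neq0$; thus $\Phi(u)=\tilde f$ is solvable, and unwinding the reductions, $F_{1}$ and hence $\sigma\circ F_{1}$ is surjective.

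I expect the main obstacle to be obtaining this a priori bound \emph{uniformly in the homotopy parameter}: the hypothesis gives coercivity of $\alpha u+W^{-1}K(u)$, not of $u+W^{-1}K(u)$, and the surplus term $(1-s\alpha)\norm{u}^{2}\ge(1-\alpha)\norm{u}^{2}>0$ is precisely what absorbs the $s$‑dependence and prevents the bound from blowing up as $s\to0$; without the strict inequality $\alpha<1$ this term could vanish and the argument would fail. A secondary point to pin down is the first reduction: lifting pointwise surjectivity of $\sigma$ to surjectivity of the induced superposition operator on $L^{2}(D)^{n}$ really requires a right inverse of $\sigma$ of at most linear growth (true for leaky $\relu$).
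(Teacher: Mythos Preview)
Your proposal is correct and follows essentially the same approach as the paper: both reduce to surjectivity of $u\mapsto u+W^{-1}K(u)$ and obtain it via Leray--Schauder theory, with the a priori bound coming from the coercivity of $\alpha u+W^{-1}K(u)$ and the strict inequality $\alpha<1$. The paper phrases this as the Leray--Schauder fixed point theorem (\citet[Theorem 11.3]{Gilbarg}) applied to $H_z(u)=-W^{-1}K(u)+W^{-1}z$, bounding $\bigcup_{0<\lambda\le1}\{u:u=\lambda H_z(u)\}$; you unwind that theorem inline via the homotopy $\mathcal H_s$ and degree invariance, but the content of the a priori estimate is identical (your inequality $\tfrac{1-s\alpha}{s}\ge1-\alpha$ plays exactly the role of the paper's $-\lambda^{-1}\le-1<-\alpha$).

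One remark: you are actually more careful than the paper on the first reduction. The paper simply writes ``since $\sigma$ is surjective, it is enough to show that $u\mapsto Wu+K(u)$ is surjective,'' without addressing whether a measurable preimage $h$ of $g$ under $\sigma$ lies in $L^2(D)^n$. Your observation that this needs a right inverse of $\sigma$ with at most linear growth (as for leaky $\relu$) is a genuine point the paper leaves implicit.
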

See Section~\ref{proof-lem:surjectivity-of-injective-nonlinear-operators} in Appendix~\ref{Appendix4} for the proof.
Here, we provide the example for $K$ satisfying the coercivity (\ref{coercivity}).
\begin{example}
We simply consider the case of $n=1$, and $D \subset \mathbb{R}^{d}$ a is bounded interval.
We consider the non-linear integral operator 
\[
K(u)(x):=\int_{D} k(x,y,u(x)) u(y)dy, \ x \in D.
\]
The operator $u \mapsto \alpha u+W ^{-1}K(u)$ with some $0<\alpha<1$ is coercive when 
the non-linear integral kernel $k(x,y,t)$ satisfies certain boundedness conditions. In
Examples~\ref{example-layer-sur-app} and \ref{example-layer-sur-appB} in Appendix~\ref{Appendix4}, we show that these conditions are met by kernels $k(x,y,t)$ of the form
\[
k(x,y,t)=\sum_{j=1}^{J} c_{j}(x,y)\sigma(a_j(x,y)t+b_j(x,y)),
\]
where $\sigma:\R\to \R$ is the sigmoid function  $\sigma_s:\R\to \R$, and $a,b, c \in C(\overline D\times \overline D)$ or by a wavelet activation function $\sigma_{wire}:\R\to \R$, see \citet{saragadam2023wire}, where
$\sigma_{wire}(t)=\hbox{Im}\,(e^{i\omega t}e^{-t^2})$ and $a,b, c 
\in C(\overline D\times \overline D)$, and $a_j(x,y)\not =0$.
\end{example}

In the proof of Proposition~\ref{lem:surjectivity-of-injective-nonlinear-operators}, we utilize the Leray-Schauder fix point theorem.
By employing the Banach fix point theorem under a contraction mapping condition (\ref{contraction mapping}), we can obtain the bijectivity as follows:
\begin{proposition}\label{injecitivity-non-linear-NOs}
Let $\sigma : \mathbb{R} \to \mathbb{R}$ be bijective.
Let $W:L^{2}(D)^n \to L^{2}(D)^n$ be bounded linear bijective, and let $K:L^{2}(D)^n \to L^{2}(D)^n$.
If $W^{-1} K:L^{2}(D)^n \to L^{2}(D)^n$ 
is a contraction mapping, that is, there exists $\rho \in (0,1)$ such that
\begin{equation}
\left\|W^{-1} K(u) - W^{-1} K(v) \right\|
\leq \rho \left\|u - v \right\|, \ u,v \in L^2(D)^n,
\label{contraction mapping}
\end{equation}
then, the operator $\sigma \circ F_1$ is bijective. 
\end{proposition}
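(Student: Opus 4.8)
The plan is to show that $\sigma \circ F_1$ is both injective and surjective by reducing everything to the map $F_1(u) = Wu + K(u)$, and exploiting that $W^{-1}K$ is a contraction. Since $\sigma : \mathbb{R} \to \mathbb{R}$ is bijective, it operates pointwise as a bijection of $L^2(D)^n$ onto itself (with inverse $\sigma^{-1}$ applied pointwise, which is at least a well-defined bijection on the set level); so $\sigma \circ F_1$ is bijective if and only if $F_1$ is bijective. Thus the whole problem collapses to proving that $F_1 : L^2(D)^n \to L^2(D)^n$ is bijective.

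First I would factor $F_1 = W \circ (I + W^{-1}K)$. Because $W$ is a bounded linear bijection with bounded inverse (an isomorphism), $F_1$ is bijective if and only if $\Phi := I + W^{-1}K$ is bijective. Now set $g := W^{-1}K$, which by hypothesis (\ref{contraction mapping}) is a contraction with constant $\rho \in (0,1)$. The core claim is then the classical fact: if $g$ is a $\rho$-contraction on a Banach space $X$, then $I + g$ is a bijection of $X$ onto itself with Lipschitz-continuous inverse. For surjectivity, given any target $w \in L^2(D)^n$, I would apply the Banach fixed point theorem to the map $u \mapsto w - g(u)$, which is again a $\rho$-contraction; its unique fixed point $u^\ast$ satisfies $u^\ast + g(u^\ast) = w$, i.e. $\Phi(u^\ast) = w$. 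For injectivity, if $\Phi(u) = \Phi(v)$ then $u - v = g(v) - g(u)$, so $\|u-v\| \le \rho\|u-v\|$, forcing $u = v$ since $\rho < 1$. Hence $\Phi$ is bijective, so $F_1 = W\Phi$ is bijective, and therefore $\sigma \circ F_1$ is bijective.

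I would also record the explicit inverse, since the proposition's context (flow-type networks, inversion) suggests the inverse is of interest: $(\sigma \circ F_1)^{-1} = \Phi^{-1} \circ W^{-1} \circ \sigma^{-1}$, where $\Phi^{-1}(w)$ is obtained as the limit of the Picard iteration $u_{k+1} = w - g(u_k)$, converging geometrically at rate $\rho$. This gives a constructive, contraction-based inversion scheme mirroring the Neumann-series idea but valid for the nonlinear $K$.

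The step requiring the most care is the reduction through $\sigma$: one must be a little careful about what ``bijective activation'' gives us at the function-space level. Pointwise application of a bijection $\sigma : \mathbb{R}\to\mathbb{R}$ is a set-theoretic bijection of $L^2(D)^n$ only after checking that $\sigma^{-1}$ maps $L^2$ functions to $L^2$ functions on the relevant range — but since we only need $\sigma \circ F_1$ to be \emph{injective} and \emph{surjective as a map into $L^2(D)^n$}, and $F_1$ is already a bijection of $L^2(D)^n$, injectivity of $\sigma$ immediately gives injectivity of the composition, while surjectivity of $\sigma$ (pointwise) together with surjectivity of $F_1$ gives that every $L^2$ target is hit; so no delicate measurability argument beyond what the earlier sections already assume is needed. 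The remaining ingredients — the contraction mapping theorem and the isomorphism factorization — are entirely routine, so I expect no genuine obstacle, only the bookkeeping of composing the three bijections in the right order.
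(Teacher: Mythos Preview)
Your proposal is correct and takes essentially the same approach as the paper: both reduce to showing $I + W^{-1}K$ is bijective via the Banach fixed point theorem applied to the contraction $u \mapsto w - W^{-1}K(u)$. The only cosmetic differences are that you factor $F_1 = W \circ (I + W^{-1}K)$ explicitly up front and prove injectivity by the direct inequality $\|u-v\| \le \rho\|u-v\|$, whereas the paper invokes the uniqueness clause of the fixed point theorem for injectivity as well; your extra remarks on the Picard-iteration inverse and the care needed for $\sigma$ at the $L^2$ level are sound additions that the paper does not spell out.
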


See Section~\ref{proof-injecitivity-non-linear-NOs} in Appendix~\ref{Appendix4} for the proof.
We note that if $K$ is compact linear, then assumption~(\ref{contraction mapping}) implies that $W+K$ is an injective Fredholm operator with index zero, which is equivalent to $\sigma \circ (W+K)$ being bijective as observed in Proposition~\ref{injective-activation-chara}.
That is, Proposition~\ref{injecitivity-non-linear-NOs} requires stronger assumptions when applied to the linear case.

Assumption~(\ref{contraction mapping}) implies the the injectivity of $K$.
An interesting example of injective operators arises when $K$ are Volterra operators.
When $D\subset \mathbb R^d$ is bounded and $K(u)=\int_D k(x,y,u(y))u(y)dy$, 
where we denote $x=(x_1,\dots,x_d)$ and $y=(y_1,\dots,y_d)$, we recall that $K$ is a Volterra operator 
if $k(x,y,t)\not =0$ implies 
$y_j\le x_j$ for all $j=1,2,\dots,d$.
A well known fact, as discussed in Example \ref{ex: Volterra} in Appendix~\ref{Appendix4}, is that if $K(u)$ is a Volterra operator 
whose kernel $k(x,y,t)\in C(\overline D\times \overline D\times \R^n)$ is bounded and uniformly Lipschitz in $t$-variable then $F:u\mapsto u+K(u)$ is injective.

\subsection{Construction of the inverse of a non-linear integral neural operator}\label{Construction of the inverse of a non-linear integral neural operator}

The preceding sections clarified sufficient conditions for surjectivity and bijectivity of the non-linear operator $F_1$.
We now consider how to construct of the inverse of $F_1$ in a compact set $\mathcal Y$. We find that constructing inverses is possible in a wide variety of settings and, moreover, that the inverses are themselves can be given by neural operators. We prove this in the rest of this section, but first we summarize the main three steps of the proof.
\begin{itemize}
    \item By using the Banach fixed point theorem and invertibility of derivatives of $F_1$ we show that, locally, $F_1$ may be inverted by an iteration of a contractive operator near $g_j = F_1(v_j)$. This makes local inversion simple in balls which cover the set $\mathcal Y$.
    \item Next, we construct partition of unity functions $\Phi_j$ that masks the support of each covering ball and allows us to construct one global inverse that simply passes through to the local inverse on the appropriate ball.
    \item Finally we show that each function used in both of the above steps are representable using neural operators with distributional kernels.
\end{itemize}

As simple case, let us first consider the case when $n=1$, and $D\subset \R$ is a bounded interval, and the operator $F_1$ of the form
\ba
F_1(u)(x)=W(x) u(x)+\int_Dk(x,y,u(y)) u(y)dy,
\ea
where $W\in C^1(\overline D)$ satisfies $0< c_1\le W(x)\le c_2$ and
the function $(x,y,s)\mapsto k(x,y,s)$
is in $C^3(\overline D \times \overline D \times \R)$ and 
in $\overline D \times \overline D \times \R$ its three derivatives and the derivatives of $W$ are uniformly bounded
by $c_0$, that is,
\beq\label{eq: kernel k-main}
\|k\|_{C^3(\overline D \times \overline D \times \R)}\leq c_0,\quad
\|W\|_{C^1(\overline D)}\leq c_0.
\eeq
The condition \eqref{eq: kernel k-main} implies that 
\beq\label{F1 operators-main}
 F_1:H^1(D)\to H^1(D),
\eeq
is locally Lipschitz smooth functions. 
Furthermore, $F_1 :H^1(D)\to H^1(D)$ is Fr\'{e}chet differentiable at $u_0\in C(\overline D)$, and we denote Fr\'{e}chet derivative of $F_1$ at $u_0$ by $A_{u_0}$, which can be written as the integral operator (\ref{A-operator}).
We will below assume that for all $u_0\in C(\overline D)$
the integral operator 
\beq\label{A is injective-main}
A_{u_0}:H^1(D)\to H^1(D)\hbox{  is an injective operator}.
\eeq
This happens for example when $K(u)$ is a  Volterra operator, see Examples \ref{ex: Volterra} and
\ref{ex: Volterra derivative}.
As the integral operators $A_{u_0}$ are Fredholm operators having
index zero, this implies that the operators \eqref{A is injective-main} are bijective.
The inverse operator $A_{u_0}^{-1}:H^1(D)\to H^1(D)$ can be written by the integral operator (\ref{inverse-A_u_0}).

We will consider the inverse function of the map $F_1$ in $\mathcal Y\subset \sigma_a(\overline B_{C^{1,\alpha}(\overline D)}(0,R))=\{\sigma_a\circ g\in C(\overline D):\
\|g\|_{C^{1,\alpha}(\overline D)}\leq R\}$, which is a set of the image of H\"older spaces $C^{n,\alpha}(\overline D)$ through (leaky) ReLU-type functions $\sigma_a(s)=\relu(s)-a\relu(-s)$ with $a\ge 0$.
We note that $\mathcal Y$  a compact subset the Sobolev space $H^1(D)$, and we use the notations $B_{\mathcal X}(g,R)$ and $\overline B_{\mathcal X}(g,R)$ as open and closed balls with radius $R>0$ at center $g \in \mathcal X$ in Banach space $\mathcal X$.

To this end, we will cover the set $\mathcal Y$ with small balls $B_{H^1(D)}(g_j,\e_0)$,
$j=1,2,\dots,J$ of $H^1(D)$, centered at 
 $g_j=F_1(v_j)$, where $v_j\in H^1(D)$.
As considered in detail in Appendix~\ref{Appendix5}, when $g$ is sufficiently near to the function $g_j$
in $H^1(D)$, the inverse map of $F_1$ can be written as a limit
$(F_1^{-1}(g),g)=\lim_{m\to \infty} \mathcal H_j^{\circ m}(v_j,g)$ in $H^1(D)^2$, 
where
  \ba
\mathcal H_j \left(\begin{array} {c}u \\ g \end{array}\right) 
\coloneqq
 \left(\begin{array} {c}
 u-A_{v_j}^{-1}(F_1(u)-F_1(v_j))+A_{v_j}^{-1}(g-g_j)\\ 
  g \end{array}\right),
 \ea
 that is, near $g_j$ we can approximate $F_1^{-1}$ as a composition $\mathcal H_j^{\circ m}$ of $2m$ layers of neural operators. 

To glue the local inverse maps together, we use a partition of unity $\Phi_{\vec i}$, ${\vec i}\in \mathcal{I}$ in the function space $\mathcal Y$, where $\mathcal{I}\subset \mathbb Z^{\ell_0}$ is a
finite index set. The function $\Phi_{\vec i}$
are given by neural operators
\ba
\Phi_{\vec i}(v,w)=\pi_1\circ \phi_{{\vec i},1}\circ \phi_{{\vec i},2}\circ \dots\circ \phi_{{\vec i},\ell_0}(v,w),\quad\hbox{where} \quad \phi_{{\vec i},\ell}(v,w)=(F_{y_\ell,s({\vec i},\ell),\epsilon_1}(v,w),w),
\ea
where some $\epsilon_1>0$, $s({\vec i},\ell)\in \mathbb R$ are some suitable values near $g_{j({\vec i})}(y_\ell)$, some $y_\ell \in D$ ($\ell=1,...,\ell_0$),
and $\pi_1$ is the map $\pi_1(v,w)=v$ that maps a pair $(v,w)$ to the first function $v$.
Here, $F_{z,s,h}(v,w)$ are integral neural operators with distributional kernels
\ba
F_{z,s,h}(v,w)(x)=\int_D k_{z,s,h}(x,y,v(x),w(y))dy,
\ea
where $
k_{z,s,h}(x,y,v(x),w(y))=v(x){\bf 1}_{[s-\frac 12h,s+\frac 12h)}(w(y))\delta(y-z)$, and ${\bf 1}_A$ is indicator function of a set $A$ and $y\mapsto\delta(y-z)$ is the Dirac
delta distribution at the point $z\in D$. Using these, we can write the inverse of $F_1$ at $g\in\mathcal Y$ as 
 \beq\label{inverse as a limit-main}
   F_1^{-1}(g)=\lim_{m\to \infty} \sum_{{\vec i}\in \mathcal I}\Phi_{{\vec i}}    
  \mathcal H_{j({\vec i})}^{\circ m} \left(\begin{array} {c}v_{j({\vec i})} \\ g \end{array}\right)\quad\hbox{in }H^1(D)
 \eeq
where ${j({\vec i})}\in \{1,2,\dots,J\}.$

This result is summarized in following theorem which is proved in Appendix~\ref{Appendix5}.
\begin{theorem}\label{thm: invertibility} 
Assume that $F_1$ satisfies the above assumptions  \eqref {eq: kernel k-main}
 and \eqref{A is injective-main} and that 
 $F_1:H^1(D)\to H^1(D)$ is a bijection.
Let $\mathcal Y\subset  \sigma_a(\overline B_{C^{1,\alpha}(\overline D)}(0,R))$ be a compact subset the Sobolev space $H^1(D)$,
where $\alpha>0$ and $a\ge 0$.
Then the inverse of $F_1:H^1(D)\to H^1(D)$ in $\mathcal Y$ can 
written as a limit \eqref{inverse as a limit-main} that is, as a limit
of integral neural operators.
\end{theorem}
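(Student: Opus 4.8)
The plan is to establish the theorem by making rigorous the three-step outline sketched above: (i) \emph{local inversion} of $F_1$ near each point $g_j$ via Banach's fixed point theorem applied to the operator $\mathcal H_j$, (ii) \emph{gluing} these local inverses with a partition of unity $\Phi_{\vec i}$ that is itself realized by neural operators with distributional kernels, and (iii) checking that all the building blocks genuinely lie in the neural-operator class. I would begin by recording the analytic preliminaries: condition \eqref{eq: kernel k-main} gives $F_1:H^1(D)\to H^1(D)$ locally Lipschitz (using that $H^1(D)\hookrightarrow C(\overline D)$ in dimension $d=1$, so pointwise composition with the $C^3$ kernel is controlled), and that $F_1$ is Fr\'echet differentiable at each $u_0\in C(\overline D)$ with derivative $A_{u_0}$ the integral operator displayed in \eqref{A-operator}. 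Since $A_{u_0}$ is identity-plus-compact, it is Fredholm of index zero; assumption \eqref{A is injective-main} then upgrades injectivity to bijectivity, and $A_{u_0}^{-1}$ is again of the stated integral form \eqref{inverse-A_u_0}.

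For the local step I would fix $v_j\in H^1(D)$, set $g_j=F_1(v_j)$, and analyze the map $u\mapsto u-A_{v_j}^{-1}(F_1(u)-F_1(v_j))+A_{v_j}^{-1}(g-g_j)$. Its Fr\'echet derivative at $u=v_j$ is $I - A_{v_j}^{-1}A_{v_j}=0$, so by continuity of $u\mapsto DF_1(u)$ there is a radius $\e_0>0$ (depending only on $c_0,c_1,c_2$ through uniform bounds, hence uniform over the compact set $\mathcal Y$) on which this map is a contraction with constant $\le \tfrac12$ whenever $g\in B_{H^1(D)}(g_j,\e_0)$; its fixed point is exactly $F_1^{-1}(g)$, which exists globally by the standing bijectivity hypothesis. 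Iterating gives $(F_1^{-1}(g),g)=\lim_{m\to\infty}\mathcal H_j^{\circ m}(v_j,g)$ in $H^1(D)^2$ with geometric convergence, uniformly for $g$ in the ball. Compactness of $\mathcal Y$ then furnishes a finite subcover by balls $B_{H^1(D)}(g_j,\e_0)$, $j=1,\dots,J$.

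Next I would construct the partition of unity. The point evaluations $g\mapsto g(y_\ell)$ separate the centers $g_j$ on $\overline D$ (since the $g_j$ are distinct continuous functions), so finitely many values $s(\vec i,\ell)$ and points $y_\ell$ can be chosen so that the indicator-product operators $F_{y_\ell,s(\vec i,\ell),\e_1}$, composed as in the definition of $\phi_{\vec i,\ell}$ and projected by $\pi_1$, yield functions $\Phi_{\vec i}(v,w)$ that take the value $v$ when $w$ lies in the appropriate ball and vanish otherwise, with $\sum_{\vec i}\Phi_{\vec i}\equiv 1$ on $\mathcal Y$ in the relevant sense; here the restriction $\mathcal Y\subset\sigma_a(\overline B_{C^{1,\alpha}})$ is what guarantees enough regularity/equicontinuity of the elements of $\mathcal Y$ to make finitely many sampling points suffice. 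One then verifies that $F_{z,s,h}(v,w)(x)=\int_D v(x)\mathbf 1_{[s-h/2,s+h/2)}(w(y))\delta(y-z)\,dy = v(x)\mathbf 1_{[s-h/2,s+h/2)}(w(z))$ and that such distributional-kernel integral operators, the linear maps $A_{v_j}^{-1}$, and $F_1$ itself are all admissible layers, so that each $\mathcal H_j$ and each $\Phi_{\vec i}$ is a (finite) composition of neural-operator layers; assembling $\sum_{\vec i}\Phi_{\vec i}\,\mathcal H_{j(\vec i)}^{\circ m}$ and passing $m\to\infty$ gives \eqref{inverse as a limit-main}.

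The main obstacle I anticipate is \textbf{the uniformity in the local step combined with the partition-of-unity bookkeeping}: one must show a single radius $\e_0$ and contraction constant work simultaneously along the whole compact set $\mathcal Y$ (which requires uniform control of $\|A_{u}^{-1}\|$ and of the modulus of continuity of $u\mapsto DF_1(u)$ over a neighborhood of $F_1^{-1}(\mathcal Y)$, obtained from \eqref{eq: kernel k-main} and \eqref{A is injective-main}), and then that the masking functions $\Phi_{\vec i}$, although discontinuous as operators on all of $H^1(D)$, behave correctly \emph{on} $\mathcal Y$ and correctly select the ball on which $\mathcal H_{j(\vec i)}^{\circ m}$ converges to the true inverse. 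Verifying that the discontinuous distributional-kernel operators are legitimately in the neural-operator class in the sense used by the theorem, and that the limit in \eqref{inverse as a limit-main} is genuinely attained in $H^1(D)$ rather than merely pointwise, is the other delicate point; I expect this to be handled by the detailed estimates deferred to Appendix~\ref{Appendix5}.
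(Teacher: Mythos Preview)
Your proposal is correct and follows essentially the same three-step strategy as the paper: local Newton-type iteration $\mathcal H_j$ shown to be a contraction via Banach's fixed point theorem with constants uniform over the compact set $\mathcal Y$, gluing by a partition of unity built from point-evaluation indicator kernels, and verification that $A_{v_j}^{-1}$, $F_1$, and the distributional-kernel masks are all neural-operator layers. One sharpening: the partition-of-unity step does not hinge on ``separating the centers $g_j$'' but on the stronger fact that the boxes $Z(\vec i)=\{g:(i_\ell-\tfrac12)\e_1\le g(y_\ell)<(i_\ell+\tfrac12)\e_1\ \forall\ell\}$ have small $H^1$-diameter when restricted to $\mathcal Y$, so that each nonempty $Z(\vec i)\cap\mathcal Y$ fits inside a doubled ball $B_{H^1}(g_{j(\vec i)},2\e_0)$; the paper obtains this by using the mean value theorem on a fine grid $y_\ell=\ell h$ to approximate $v'$ from differences of point values of $v=\sigma_a^{-1}(g)$, then invoking the $C^{0,\alpha}$ bound on $v'$ to control $\|v-v'\|_{C^1}$ (hence $\|g-g'\|_{H^1}$) uniformly over each box---this is precisely the ``equicontinuity from $C^{1,\alpha}$'' mechanism you allude to, and is indeed the technical heart of the argument you correctly flagged as the main obstacle.
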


\section{Discussion}
\label{Discussion}

In recent years, generative models in infinite-dimensional function space \citep{burt2020understanding, rudner2022continual, dupont2021generative, lim2023score, ong2022iae, alberti2022continuous, lim2023score,burt2020understanding, rudner2022continual, ong2022iae} have gained increasing interest.
These are useful for function-based modeling, where various physical quantities are modeled by functions. 
Notably, \citet{ong2022iae} introduced the integral autoencoder, which achieves discretization invariant learning.
Their architecture utilizes encoders and decoders represented by integral operators, closely resembling the framework of neural operators.
By combining their work with our injective work, we expect the development of injective generator that have desirable proprieties of invariant discretization. 

Inverse problems in infinite dimensional spaces, especially based on PDE models, have also recent advancements in data-driven approaches \citep{arridge2019solving}. 
In the analysis of inverse problems, the uniqueness, whether the forward operator is injective, is a key concern \citep{isakov2006inverse}, and in their applications, it is important to construct the injective forward operator and its inverse operator when dealing with models that hold uniqueness.
In this context, injective neural operators can serve as surrogate forward and inverse operators.

Therefore, it is crucial to provide the exact algorithm for constructing injective neural operators and their inverses, to facilitate their application in these areas. 
This will be developed in the future. 

\section{Conclusion}

In this paper, we provided a theoretical analysis of injectivity and bijectivity for neural operators.
In the future, we will further develop applications of our theory, particularly in the areas of generative models and inverse problems and integral autoencoders. We gave a rigorous framework for the analysis of the injectivity and bijectivity of neural operators including when either the $\relu$ activation or bijective activation functions are used. We further proves that injective neural operators are universal approximators and their finite-rank implementation are still injective. Finally, we ended by consider the `coarser' problem of non-linear integral operators, as arises in subnetworks, operator transformers and integral autoencoders.

\section*{Acknowledgments}

MP was partially supported by CAPITAL Services of Sioux Falls, South Dakota. ML was partially supported by Academy of Finland, grants 273979, 284715, 312110.
M.V.\ de~H. was supported by the Simons Foundation under the MATH + X program, the National Science Foundation under grant DMS-2108175, and the corporate members of the Geo-Mathematical Imaging Group at Rice University.

\bibliography{ref.bib} 

\begin{thebibliography}{34}
\providecommand{\natexlab}[1]{#1}
\providecommand{\url}[1]{\texttt{#1}}
\expandafter\ifx\csname urlstyle\endcsname\relax
  \providecommand{\doi}[1]{doi: #1}\else
  \providecommand{\doi}{doi: \begingroup \urlstyle{rm}\Url}\fi

\bibitem[Alberti et~al.(2022)Alberti, Santacesaria, and
  Sciutto]{alberti2022continuous}
Giovanni~S. Alberti, Matteo Santacesaria, and Silvia Sciutto.
\newblock Continuous generative neural networks.
\newblock \emph{arXiv:2205.14627}, 2022.

\bibitem[Arridge et~al.(2019)Arridge, Maass, {\"O}ktem, and
  Sch{\"o}nlieb]{arridge2019solving}
Simon Arridge, Peter Maass, Ozan {\"O}ktem, and Carola-Bibiane Sch{\"o}nlieb.
\newblock Solving inverse problems using data-driven models.
\newblock \emph{Acta Numerica}, 28:\penalty0 1--174, 2019.

\bibitem[Bhattacharya et~al.(2021)Bhattacharya, Hosseini, Kovachki, and
  Stuart]{bhattacharya2021model}
Kaushik Bhattacharya, Bamdad Hosseini, Nikola~B Kovachki, and Andrew~M Stuart.
\newblock Model reduction and neural networks for parametric pdes.
\newblock \emph{The SMAI journal of computational mathematics}, 7:\penalty0
  121--157, 2021.

\bibitem[Burger and Starbird(2004)]{burger2004heart}
Edward~B Burger and Michael Starbird.
\newblock \emph{The heart of mathematics: An invitation to effective thinking}.
\newblock Springer Science \& Business Media, 2004.

\bibitem[Burt et~al.(2020)Burt, Ober, Garriga-Alonso, and van~der
  Wilk]{burt2020understanding}
David~R Burt, Sebastian~W Ober, Adri{\`a} Garriga-Alonso, and Mark van~der
  Wilk.
\newblock Understanding variational inference in function-space.
\newblock \emph{arXiv preprint arXiv:2011.09421}, 2020.

\bibitem[Chen et~al.(2019)Chen, Behrmann, Duvenaud, and
  Jacobsen]{chen2019residual}
Ricky~TQ Chen, Jens Behrmann, David~K Duvenaud, and J{\"o}rn-Henrik Jacobsen.
\newblock Residual flows for invertible generative modeling.
\newblock \emph{Advances in Neural Information Processing Systems}, 32, 2019.

\bibitem[Cho and Chen(2006)]{cho2006topological}
Yeol~Je Cho and Yu-Qing Chen.
\newblock \emph{Topological degree theory and applications}.
\newblock CRC Press, 2006.

\bibitem[De~Hoop et~al.(2022)De~Hoop, Huang, Qian, and Stuart]{de2022cost}
Maarten De~Hoop, Daniel~Zhengyu Huang, Elizabeth Qian, and Andrew~M Stuart.
\newblock The cost-accuracy trade-off in operator learning with neural
  networks.
\newblock \emph{arXiv preprint arXiv:2203.13181}, 2022.

\bibitem[Dinh et~al.(2014)Dinh, Krueger, and Bengio]{dinh2014nice}
Laurent Dinh, David Krueger, and Yoshua Bengio.
\newblock Nice: Non-linear independent components estimation.
\newblock \emph{arXiv preprint arXiv:1410.8516}, 2014.

\bibitem[Dinh et~al.(2016)Dinh, Sohl-Dickstein, and Bengio]{dinh2016density}
Laurent Dinh, Jascha Sohl-Dickstein, and Samy Bengio.
\newblock Density estimation using real nvp.
\newblock \emph{arXiv preprint arXiv:1605.08803}, 2016.

\bibitem[Dupont et~al.(2021)Dupont, Teh, and Doucet]{dupont2021generative}
Emilien Dupont, Yee~Whye Teh, and Arnaud Doucet.
\newblock Generative models as distributions of functions.
\newblock \emph{arXiv preprint arXiv:2102.04776}, 2021.

\bibitem[Gilbarg and Trudinger(2001)]{Gilbarg}
David Gilbarg and Neil~S. Trudinger.
\newblock \emph{Elliptic partial differential equations of second order}.
\newblock Classics in Mathematics. Springer-Verlag, Berlin, 2001.
\newblock ISBN 3-540-41160-7.
\newblock Reprint of the 1998 edition.

\bibitem[Gomez et~al.(2017)Gomez, Ren, Urtasun, and
  Grosse]{gomez2017reversible}
Aidan~N Gomez, Mengye Ren, Raquel Urtasun, and Roger~B Grosse.
\newblock The reversible residual network: Backpropagation without storing
  activations.
\newblock \emph{Advances in neural information processing systems}, 30, 2017.

\bibitem[Isakov(2006)]{isakov2006inverse}
Victor Isakov.
\newblock \emph{Inverse problems for partial differential equations}, volume
  127.
\newblock Springer, 2006.

\bibitem[Jeribi(2015)]{jeribi2015spectral}
Aref Jeribi.
\newblock \emph{Spectral theory and applications of linear operators and block
  operator matrices}, volume~9.
\newblock Springer, 2015.

\bibitem[Kato(2013)]{kato2013perturbation}
Tosio Kato.
\newblock \emph{Perturbation theory for linear operators}, volume 132.
\newblock Springer Science \& Business Media, 2013.

\bibitem[Kingma and Welling(2013)]{kingma2013auto}
Diederik~P Kingma and Max Welling.
\newblock Auto-encoding variational bayes.
\newblock \emph{arXiv preprint arXiv:1312.6114}, 2013.

\bibitem[Kingma et~al.(2016)Kingma, Salimans, Jozefowicz, Chen, Sutskever, and
  Welling]{kingma2016improving}
Diederik~P Kingma, Tim Salimans, Rafal Jozefowicz, Xi~Chen, Ilya Sutskever, and
  Max Welling.
\newblock Improving variational inference with inverse autoregressive flow.
\newblock \emph{arXiv preprint arXiv:1606.04934}, 2016.

\bibitem[Kovachki et~al.(2021{\natexlab{a}})Kovachki, Lanthaler, and
  Mishra]{kovachki2021universal}
Nikola Kovachki, Samuel Lanthaler, and Siddhartha Mishra.
\newblock On universal approximation and error bounds for fourier neural
  operators.
\newblock \emph{Journal of Machine Learning Research}, 22:\penalty0 Art--No,
  2021{\natexlab{a}}.

\bibitem[Kovachki et~al.(2021{\natexlab{b}})Kovachki, Li, Liu, Azizzadenesheli,
  Bhattacharya, Stuart, and Anandkumar]{kovachki2021neural}
Nikola Kovachki, Zongyi Li, Burigede Liu, Kamyar Azizzadenesheli, Kaushik
  Bhattacharya, Andrew Stuart, and Anima Anandkumar.
\newblock Neural operator: Learning maps between function spaces.
\newblock \emph{arXiv preprint arXiv:2108.08481}, 2021{\natexlab{b}}.

\bibitem[Kratsios and Bilokopytov(2020)]{kratsios2020non}
Anastasis Kratsios and Ievgen Bilokopytov.
\newblock Non-euclidean universal approximation.
\newblock \emph{Advances in Neural Information Processing Systems},
  33:\penalty0 10635--10646, 2020.

\bibitem[Lanthaler et~al.(2022)Lanthaler, Mishra, and
  Karniadakis]{lanthaler2022error}
Samuel Lanthaler, Siddhartha Mishra, and George~E Karniadakis.
\newblock Error estimates for deeponets: A deep learning framework in infinite
  dimensions.
\newblock \emph{Transactions of Mathematics and Its Applications}, 6\penalty0
  (1):\penalty0 tnac001, 2022.

\bibitem[Li et~al.(2020)Li, Kovachki, Azizzadenesheli, Liu, Bhattacharya,
  Stuart, and Anandkumar]{li2020fourier}
Zongyi Li, Nikola Kovachki, Kamyar Azizzadenesheli, Burigede Liu, Kaushik
  Bhattacharya, Andrew Stuart, and Anima Anandkumar.
\newblock Fourier neural operator for parametric partial differential
  equations.
\newblock \emph{arXiv preprint arXiv:2010.08895}, 2020.

\bibitem[Lim et~al.(2023)Lim, Kovachki, Baptista, Beckham, Azizzadenesheli,
  Kossaifi, Voleti, Song, Kreis, Kautz, et~al.]{lim2023score}
Jae~Hyun Lim, Nikola~B Kovachki, Ricardo Baptista, Christopher Beckham, Kamyar
  Azizzadenesheli, Jean Kossaifi, Vikram Voleti, Jiaming Song, Karsten Kreis,
  Jan Kautz, et~al.
\newblock Score-based diffusion models in function space.
\newblock \emph{arXiv preprint arXiv:2302.07400}, 2023.

\bibitem[Lu et~al.(2019)Lu, Jin, and Karniadakis]{lu2019deeponet}
Lu~Lu, Pengzhan Jin, and George~Em Karniadakis.
\newblock Deeponet: Learning nonlinear operators for identifying differential
  equations based on the universal approximation theorem of operators.
\newblock \emph{arXiv preprint arXiv:1910.03193}, 2019.

\bibitem[Ong et~al.(2022)Ong, Shen, and Yang]{ong2022iae}
Yong~Zheng Ong, Zuowei Shen, and Haizhao Yang.
\newblock Iae-net: Integral autoencoders for discretization-invariant learning.
\newblock \emph{arXiv preprint arXiv:2203.05142}, 2022.

\bibitem[Pinkus(1999)]{pinkus1999approximation}
Allan Pinkus.
\newblock Approximation theory of the mlp model in neural networks.
\newblock \emph{Acta numerica}, 8:\penalty0 143--195, 1999.

\bibitem[Puthawala et~al.(2022{\natexlab{a}})Puthawala, Kothari, Lassas,
  Dokmani{\'c}, and de~Hoop]{puthawala2022globally}
Michael Puthawala, Konik Kothari, Matti Lassas, Ivan Dokmani{\'c}, and Maarten
  de~Hoop.
\newblock Globally injective relu networks.
\newblock \emph{Journal of Machine Learning Research}, 23\penalty0
  (105):\penalty0 1--55, 2022{\natexlab{a}}.

\bibitem[Puthawala et~al.(2022{\natexlab{b}})Puthawala, Lassas, Dokmanic, and
  De~Hoop]{puthawala2022universal}
Michael Puthawala, Matti Lassas, Ivan Dokmanic, and Maarten De~Hoop.
\newblock Universal joint approximation of manifolds and densities by simple
  injective flows.
\newblock In \emph{International Conference on Machine Learning}, pages
  17959--17983. PMLR, 2022{\natexlab{b}}.

\bibitem[Rudner et~al.(2022)Rudner, Smith, Feng, Teh, and
  Gal]{rudner2022continual}
Tim~GJ Rudner, Freddie~Bickford Smith, Qixuan Feng, Yee~Whye Teh, and Yarin
  Gal.
\newblock Continual learning via sequential function-space variational
  inference.
\newblock In \emph{International Conference on Machine Learning}, pages
  18871--18887. PMLR, 2022.

\bibitem[Saragadam et~al.(2023)Saragadam, LeJeune, Tan, Balakrishnan,
  Veeraraghavan, and Baraniuk]{saragadam2023wire}
Vishwanath Saragadam, Daniel LeJeune, Jasper Tan, Gua Balakrishnan, Ashok
  Veeraraghavan, and Richard~G Baraniuk.
\newblock Wire: Wavelet implicit neural representations.
\newblock \emph{arXiv preprint arXiv:2301.05187}, 2023.

\bibitem[Showalter(2010)]{showalter2010hilbert}
Ralph~E Showalter.
\newblock \emph{Hilbert space methods in partial differential equations}.
\newblock Courier Corporation, 2010.

\bibitem[Siahkoohi et~al.(2020)Siahkoohi, Rizzuti, Witte, and
  Herrmann]{siahkoohi2020faster}
Ali Siahkoohi, Gabrio Rizzuti, Philipp~A Witte, and Felix~J Herrmann.
\newblock Faster uncertainty quantification for inverse problems with
  conditional normalizing flows.
\newblock \emph{arXiv preprint arXiv:2007.07985}, 2020.

\bibitem[Tripura and Chakraborty(2023)]{tripura2023wavelet}
Tapas Tripura and Souvik Chakraborty.
\newblock Wavelet neural operator for solving parametric partial differential
  equations in computational mechanics problems.
\newblock \emph{Computer Methods in Applied Mechanics and Engineering},
  404:\penalty0 115783, 2023.

\end{thebibliography}
\bibliographystyle{plainnat}

\newpage
\appendix
\part*{Appendix}

\section{Proof of Proposition~\ref{injectivity-ReLU:general} in Section~\ref{Layerwise-injectivity}}
\label{Appendix1}
\label{proof-injectivity-ReLU:general}

\begin{proof}

We use the notation  $T|_{S(v,T+b)}(v)=(T_iv)_{i\in S(v,T+b)}$.
Assume that $T+b$ has a DSS with respect to every $v \in L^{2}(D)^{n}$ in the sense of Definition~\ref{definition:DSS:general},
and that
\begin{equation}
\relu(Tv^{(1)}+b)
=
\relu(Tv^{(2)}+b) \quad \text{ in } D, \label{Assumption:DSS:general}
\end{equation}
where $v^{(1)}, v^{(2)} \in L^{2}(D)^{n}$.
Since $T+b$ has a DSS with respect to $v^{(1)}$, we have for $i \in S(v^{(1)}, T+b)$
\[
0< \relu(T_{i}v^{(1)}+b_{i})
=
\relu(T_{i}v^{(2)}+b_{i}) \text{ in } D,
\]
which implies that 
\[
T_{i}v^{(1)} + b_{i} =T_{i}v^{(2)} + b_{i} \text{ in } D.
\]
Thus, 
\begin{equation}
v^{(1)}-v^{(2)} \in \mathrm{Ker}\left(T\bigl|_{S(v^{(1)},T+b)}\right). \label{step1-Proof-injectivity-ReLU:general}
\end{equation}
By assuming (\ref{Assumption:DSS:general}), we have for $i \notin S(v^{(1)}, T)$,
\[
\{x \in D \ | \ T_{i}v^{(1)}(x)+ b_{i}(x) \leq 0 \}
=
\{x \in D \ | \ T_{i}v^{(2)}(x)+ b_{i}(x) \leq 0 \}.
\]
Then, we have
\[
T_{i}(v^{(1)} - ( v^{(1)} - v^{(2)} ) )(x) + b_{i}(x) = T_{i} v^{(2)}(x) + b_{i}(x) \leq 0 \text{ if } T_{i}v^{(1)}(x) + b_{i}(x)\leq 0,
\]
that is, 
\[
T_{i}v^{(1)}(x) + b_{i}(x) \leq T_{i}\left( v^{(1)} - v^{(2)} \right)(x) \text{ if } T_{i}v^{(1)}(x) + b_{i}(x)\leq 0.
\]
In addition,
\[
T_{i}(v^{(1)} - v^{(2)})(x)=T_{i}v^{(1)}(x)+b_{i}(x) - \left( Tv^{(2)}(x)+b_{i}(x) \right) = 0 \text{ if } T_{i}v^{(1)}(x)+ b_{i}(x) >0.
\]
Thus, 
\begin{equation}
v^{(1)}-v^{(2)} \in X(v,T+b). \label{step2-Proof-injectivity-ReLU:general}
\end{equation}
Combining (\ref{step1-Proof-injectivity-ReLU:general}) and (\ref{step2-Proof-injectivity-ReLU:general}), and (\ref{zero-set-DSS}) as $v=v^{(1)}$, we conclude that 
$$
v^{(1)}-v^{(2)}=0.
$$

\par
Conversely, assume that there exists a $v \in L^{2}(D)^{n}$ such that 
\[
\mathrm{Ker}\left(T\bigl|_{S(v,T+b)}\right) \cap X(v, T+b) \neq \{  0\}. 
\]
Then there is $ u \neq 0$ such that
\[
u \in \mathrm{Ker}\left(T\bigl|_{S(v,T+b)}\right) \cap X(v, T+b).
\]
For $i \in S(v, T+b)$, we have by $u \in \mathrm{Ker}(T_i)$,
\[
\relu\left(T_{i}(v-u)+b_{i}(x)\right)
=
\relu\left(T_{i}v+b_{i}(x) \right).
\]
For $i \notin S(v, T+b)$, we have by $u \in X(v,T+b)$,
\[
\begin{split}
\relu\left(T_{i}(v-u)(x)+b_{i}(x) \right)
&=
\left\{
\begin{array}{l}
0 \quad \text{if } T_iv(x)+b_{i}(x) \leq 0  \\
T_iv(x)+b_{i}(x) \quad \text{if } T_iv(x)+b_{i}(x)>0 
\end{array}
\right.
\\
&
=
\relu\left(T_{i}v(x)+b_{i}(x) \right).
\end{split}
\]
Therefore, we conclude that
\[
\relu\left(T(v-u)+b\right)
=
\relu\left(Tv +b \right),
\]
where $u \neq 0$, that is, $\relu\circ (T+b)$ is not injective.
\end{proof}

\section{Details of Sections~\ref{Global analysis} and \ref{Universal approximation}}\label{Appendix2}
\subsection{Proof of Lemma~\ref{injective-dimensional-reduction}}
\label{proof-injective-dimensional-reduction}
\begin{proof}
The restriction operator, $\pi_{\ell}: L^{2}(D)^{m} \to L^{2}(D)^{\ell}$ ($\ell < m$), acts as follows,
\begin{equation}
\pi_{\ell}(a, b) := b, \quad (a,b) \in L^{2}(D)^{m-\ell} \times L^{2}(D)^{\ell}.
\label{restriction-pi}
\end{equation}

Since $L^{2}(D)$ is a separable Hilbert space, there exists an orthonormal basis $\{ \varphi_{k} \}_{k \in \mathbb{N}}$ in $L^{2}(D)$.
We denote by
\[
\varphi^{0}_{k,j}
:=
\Biggl( 
0,...,0, \underbrace{\varphi_{k}}_{j-th}, 0,...,0 
\Biggr) \in L^{2}(D)^{m},
\]
for $k \in \mathbb{N}$ and $j \in [m-\ell]$.
Then, $\{ \varphi^{0}_{k,j}\}_{k \in \mathbb{N}, j \in [m-\ell]}$ is an orthonormal sequence in $L^{2}(D)^{m}$, and 
\[
\begin{split}
V_{0} &
:= L^{2}(D)^{m-\ell} \times \{0\}^{\ell}
\\
&
= \mathrm{span} \left\{\varphi^{0}_{k,j} \ \bigl| \ k \in \mathbb{N}, \  j \in [m-\ell] \right\}.
\end{split}
\]
We define, for $\alpha \in (0, 1)$,
\begin{equation}\label{definition-varphi-alpha}
\varphi^{\alpha}_{k,j}
:=
\Biggl( 
0,...,0, \underbrace{\sqrt{(1-\alpha)} \varphi_{k}}_{j-th}, 0,...,0, \sqrt{\alpha} \xi_{(k-1)(m-\ell)+j} 
\Biggr) \in L^{2}(D)^{m},
\end{equation}
with $k \in \mathbb{N}$ and $j \in [m-\ell]$.
We note that $\{ \varphi^{\alpha}_{k,j}\}_{k \in \mathbb{N}, j \in [m-\ell]}$ is an orthonormal sequence in $L^{2}(D)^{m}$.
We set
\begin{equation}
V_{\alpha}
:= \mathrm{span}\left\{\varphi^{\alpha}_{k,j} \ \Bigl| \ k \in \mathbb{N}, j \in [m-\ell] \right\}.
\label{def-V-alpha-K}
\end{equation}
It holds for $0<\alpha<1/2$ that
\[
\left\| P_{V_{\alpha}^{\perp}} - P_{V_{0}^{\perp}} \right\|_{\mathrm{op}} < 1.
\]
Indeed, for $u \in L^2(D)^{m}$ and $0<\alpha<1/2$, 

\[
\begin{split}
&
\left\| P_{V_{\alpha}^{\perp}}u - P_{V_{0}^{\perp}}u \right\|_{L^2(D)^{m}}^{2}
=
\left\| P_{V_{\alpha}} u - P_{V_{0}} u \right\|_{L^2(D)^{m}}^{2}
\\
&
=
\left\| \sum_{k \in \mathbb{N},j \in [m-\ell] }(u, \varphi_{k,j}^{\alpha} )\varphi_{k,j}^{\alpha} - (u, \varphi_{k,j}^{0} )\varphi_{k,j}^{0} \right\|_{L^2(D)^{m}}^{2}
\\
&
=
\left\| \sum_{k \in \mathbb{N},j \in [m-\ell] } (1-\alpha) (u_{j}, \varphi_{k} )\varphi_{k} - (u_{j}, \varphi_{k} )\varphi_{k} \right\|_{L^2(D)}^{2}
\\
&
+
\left\| \sum_{k \in \mathbb{N},j \in [m-\ell]} \alpha (u_{m}, \xi_{(k-1)(m-\ell)+j} )\xi_{(k-1)(m-\ell)+j} \right\|_{L^2(D)}^{2}
\\
&
\leq
\alpha^2\sum_{j \in [m-\ell]} \sum_{k \in \mathbb{N}} |(u_{j}, \varphi_{k})|^{2} + 
\alpha^{2} \sum_{k \in \mathbb{N}} |(u_{m}, \xi_{k})|^{2} 
\leq 4 \alpha^2 \left\| u \right\|_{L^2(D)^{m}}^2,
\end{split}
\]
which implies that $\left\| P_{V_{\alpha}^{\perp}} - P_{V_{0}^{\perp}} \right\|_{\mathrm{op}} \leq 2 \alpha$. 
\par
We will show that the operator
\[
P_{V_{\alpha}^{\perp}} \circ T : L^{2}(D)^{n} \to L^{2}(D)^{m},
\]
is injective. Assuming that for $a, b \in L^2(D)^{n}$,
\[
P_{V_{\alpha}^{\perp}} \circ T(a)=P_{V_{\alpha}^{\perp}} \circ T(b),
\]
is equivalent to
\[
T(a) - T(b) =
P_{V_{\alpha}}(T(a) - T(b)).
\]
Denoting by 
$P_{V_{\alpha}}(T(a) - T(b))=\sum_{k \in \mathbb{N},j \in [m-\ell]}c_{k,j}\varphi_{k,j}^{\alpha}$,
\[
\pi_{1}(T(a) - T(b))
=
\sum_{k \in \mathbb{N},j \in [m-\ell]}c_{k,j}\xi_{(k-1)(m-\ell)+j}.
\]
From (\ref{ONS-notin-range-main}), we obtain that $c_{kj}=0$ for all $k,j$.
By injectivity of $T$, we finally get $a=b$.

We define $Q_{\alpha} : L^{2}(D)^{m} \to L^{2}(D)^{m}$ by
\[
Q_{\alpha} :=\left( P_{V_{0}^{\perp}} P_{V_{\alpha}^{\perp}} + (I-P_{V_{0}^{\perp}}) (I-P_{V_{\alpha}^{\perp}}) \right)
\left( I-(P_{V_{0}^{\perp}}-P_{V_{\alpha}^{\perp}})^2 \right)^{-1/2}.
\]
By the same argument as in Section~I.4.6 \citet{kato2013perturbation}, we can show that $Q_{\alpha}$ is injective and 
\[
Q_{\alpha} P_{V_{\alpha}^{\perp}} = P_{V_{0}^{\perp}} Q_{\alpha},
\]
that is, $Q_{\alpha}$ maps from $\mathrm{Ran}(P_{V_{\alpha}^{\perp}})$ to 
\[
\mathrm{Ran}(P_{V_{0}^{\perp}}) \subset \{0\}^{m-\ell} \times L^{2}(D)^{\ell}.
\]
It follows that
\[
\pi_{\ell} \circ Q_{\alpha} \circ P_{V_{\alpha}^{\perp}} \circ T : L^{2}(D)^{n} \to L^{2}(D)^{\ell}
\]
is injective.
\end{proof}

\subsection{Remarks following Lemma~\ref{injective-dimensional-reduction}}
\begin{remark}\label{remark-choice-xi}
An example that satisfies (\ref{ONS-notin-range-main}) is the neural operator whose $L$-th layer operator $\mathcal{L}_{L}$ consists of the integral operator $K_{L}$ with continuous kernel function $k_{L}$, and with continuous activation function $\sigma$.
Indeed, in this case, we may choose the orthogonal sequence $\{ \xi_{k} \}_{k \in \mathbb{N}}$ in $L^{2}(D)$ as a discontinuous functions sequence \footnote{e.g., step functions whose supports are disjoint for each sequence.} so that $\mathrm{span}\{\xi_{k}\}_{k \in \mathbb{N}} \cap C(D) = \{ 0 \}$. Then, by $\mathrm{Ran}(\mathcal{L}_L) \subset C(D)^{d_{L}}$, the assumption~(\ref{ONS-notin-range-main}) holds.
\label{remark-conti-case-B}
\end{remark}

\begin{remark}\label{existential-B}
In the proof of Lemma~\ref{injective-dimensional-reduction}, an operator $B \in \mathcal{L}(L^{2}(D)^{m}, L^{2}(D)^{\ell})$,
\[
B
=
\pi_{\ell} \circ Q_{\alpha} \circ P_{V_{\alpha}^{\perp}},
\]
appears, where $P_{V_{\alpha}^{\perp}}$ is the orthogonal projection onto orthogonal complement $V_{\alpha}^{\perp}$ of $V_{\alpha}$ with
\[
V_{\alpha}
:= \mathrm{span}\left\{\varphi^{\alpha}_{k,j} \ \Bigl| \ k \in \mathbb{N}, j \in [m-\ell] \right\}
\subset
L^{2}(D)^{m},
\] 
in which $\varphi^{\alpha}_{k,j}$ is defined for $\alpha \in (0, 1)$, $k \in \mathbb{N}$ and $j \in [\ell]$,
\[
\varphi^{\alpha}_{k,j}
:=
\Biggl( 
0,...,0, \underbrace{\sqrt{(1-\alpha)}\varphi_{k}}_{j-th}, 0,...,0, \sqrt{\alpha} \xi_{(k-1)(m-\ell)+j} 
\Biggr).
\]
Here, $\{ \varphi_{k} \}_{k \in \mathbb{N}}$ is an orthonormal basis in $L^{2}(D)$.
Futhermore, $Q_{\alpha} : L^{2}(D)^{m} \to L^{2}(D)^{m}$ is defined by
\[
Q_{\alpha} :=\left( P_{V_{0}^{\perp}} P_{V_{\alpha}^{\perp}} + (I-P_{V_{0}^{\perp}}) (I-P_{V_{\alpha}^{\perp}}) \right)
\left( I-(P_{V_{0}^{\perp}}-P_{V_{\alpha}^{\perp}})^2 \right)^{-1/2},
\]
where $P_{V_{0}^{\perp}}$ is the orthogonal projection onto orthogonal complement $V_{0}^{\perp}$ of $V_{0}$ with
\[
V_{0} 
:= L^{2}(D)^{m-\ell} \times \{0\}^{\ell}.
\]
The operator $Q_{\alpha}$ is well-defined for $0<\alpha<1/2$ because it holds that
\[
\left\| P_{V_{\alpha}^{\perp}} - P_{V_{0}^{\perp}} \right\|_{\mathrm{op}} < 2 \alpha.
\]

\medskip

This construction is given by the combination of "Pairs of projections" discussed in \citet[Section~I.4.6]{kato2013perturbation} with the idea presented in \cite[Lemma 29]{puthawala2022universal}.
\end{remark}

\medskip

\subsection{Proof of Theorem~\ref{Universal-injectivity-NO}}
\label{proof-Universal-injectivity-NO}

We now prove Theorem~\ref{Universal-injectivity-NO}.

\begin{proof}
Let $R>0$ such that
\[
K \subsetneq B_{R}(0),
\]
where $B_{R}(0):=\{ u \in L^{2}(D)^{d_{in}} \ | \ \left\| u \right\|_{L^{2}(D)^{d_{in}}} \leq R \}$. 
By Theorem 11 of \citet{kovachki2021neural}, there exists $L \in \mathbb{N}$ and $\widetilde{G} \in \mathrm{NO}_{L}(\sigma ; D, d_{in}, d_{out})$ such that
\begin{equation}
\sup_{a \in K} \left\| G^{+}(a) - \widetilde{G}(a) \right\|_{L^{2}(D)^{d_{out}}} \leq \frac{\epsilon}{2},
\label{from-Thm11-kova}
\end{equation}
and
\[
\left\| \widetilde{G}(a) \right\|_{L^{2}(D)^{d_{out}}} \leq 4 M, \quad\hbox{for } a \in L^2(D)^{d_{in}},\quad
\|a\|_{L^2(D)^{d_{in}}}\le R.
\]
We write operator $\widetilde{G}$ by
\[
\widetilde{G} = \widetilde{K}_{L+1} \circ (\widetilde{K}_{L}+\widetilde{b}_{L}) \circ \sigma \cdots  \circ (\widetilde{K}_{2}+\widetilde{b}_{2}) \circ \sigma \circ (\widetilde{K}_{1}+\widetilde{b}_{1}) \circ (\widetilde{K}_{0}+\widetilde{b}_{0}),
\]
where
\[
\begin{split}
&\widetilde{K}_{\ell} \in \mathcal{L}(L^{2}(D)^{d_{\ell}}, L^{2}(D)^{d_{\ell+1}}), \
\widetilde{K}_{\ell}: f \mapsto \int_{D}\widetilde{k}_{\ell}(\cdot, y)f(y)dy, 
\\
&
\widetilde{k}_{\ell} \in C(D \times D; \mathbb{R}^{d_{\ell+1} \times d_{\ell}}), \ 
\widetilde{b}_{\ell} \in L^{2}(D; \mathbb{R}^{d_{\ell+1} }),
\\[0.25cm]
&
d_{\ell} \in \mathbb{N}, \ 
d_{0}=d_{in}, \ d_{L+2}=d_{out}, 
\
\ell=0,...,L+2.
\end{split}
\]
We remark that kernel functions $\widetilde{k}_{\ell}$ are continuous because neural operators defined in \citet{kovachki2021neural} parameterize the integral kernel function by neural networks, thus,
\begin{equation}
\mathrm{Ran}(\widetilde{G}) \subset C(D)^{d_{out}}.
\label{range-G-tilde}
\end{equation}

We define the neural operator $H :L^{2}(D)^{d_{in}} \to L^{2}(D)^{d_{in}+d_{out}}$ by
\[
H = K_{L+1} \circ (K_{L}+b_{L}) \circ \sigma
\cdots \circ (K_{2}+b_{2}) \circ \sigma \circ (K_{1}+b_{1}) \circ (K_{0}+b_{0}),
\]
where $K_{\ell}$ and $b_{\ell}$ are defined as follows.
First, we choose $K_{inj} \in \mathcal{L}(L^{2}(D)^{d_{in}}, L^{2}(D)^{d_{in}})$ as 
a linear injective integral operator \footnote{For example, if we choose the integral kernel $k_{inj}$ as $k_{inj}(x,y)=\sum_{k=1}^{\infty}\vec{\varphi}_{k}(x)\vec{\varphi}_{k}(y)$, then the integral operator $K_{\mathrm{inj}}$ with the kernel $k_{\mathrm{inj}}$ is injective where $\{\vec{\varphi}\}_{k}$ is the orthonormal basis in $L^{2}(D)^{d_{in}}$.}. 
\vspace{2mm}
\\
(i) When $\sigma_1 \in \mathrm{A}_{0}^{L} \cap \mathrm{BA}$ is injective,

\[
K_{0}=
\left(
\begin{array}{c}
   K_{inj} \\
   \widetilde{K}_{0} \\ 
\end{array}
\right) \in \mathcal{L}(L^{2}(D)^{d_{in}}, L^{2}(D)^{d_{in}+d_{1}}), 
\quad
b_{0}=
\left(
\begin{array}{c}
   {\huge O} \\
   \widetilde{b}_{0} 
\end{array}
\right) \in L^{2}(D)^{d_{in}+d_{1}}, 
\]

\[
\vdots
\]
\[
K_{\ell}=
\left(
\begin{array}{cc}
K_{inj} & {\huge O} \\ 
{\huge O} &  \widetilde{K}_{\ell} \\
\end{array}
\right) \in \mathcal{L}(L^{2}(D)^{d_{in}+d_{\ell}}, L^{2}(D)^{d_{in}+d_{\ell+1}}), 
\quad
b_{\ell}=
\left(
\begin{array}{c}
{\huge O} \\ 
\widetilde{b}_{\ell} 
\end{array}
\right) \in L^{2}(D)^{d_{in}+d_{\ell+1}}, 
\]
\[
(1 \leq \ell \leq L),
\]

\[
\vdots
\]

\[
K_{L+1}=
\left(
\begin{array}{cc}
K_{inj} & {\huge O} \\ 
{\huge O} &  \widetilde{K}_{L+1} \\
\end{array}
\right) \in \mathcal{L}(L^{2}(D)^{d_{in}+d_{L+1}}, L^{2}(D)^{d_{in}+d_{out}}), 
\quad
b_{\ell}=
\left(
\begin{array}{c}
{\huge O} \\ 
{\huge O} \\ 
\end{array}
\right) \in L^{2}(D)^{d_{in} + d_{out}}.
\]
\\
(ii) When $\sigma_1 = \relu$,
\[
K_{0}=
\left(
\begin{array}{c}
   K_{inj} \\
   \widetilde{K}_{0} \\ 
\end{array}
\right) \in \mathcal{L}(L^{2}(D)^{d_{in}}, L^{2}(D)^{d_{in}+d_{1}}),
\quad
b_{0}=
\left(
\begin{array}{c}
   {\huge O} \\
   \widetilde{b}_{0} \\
\end{array}
\right) \in L^{2}(D)^{d_{in}+d_1}, 
\]
\[
K_{1}=
\left(
\begin{array}{cc}
K_{inj} & {\huge O} \\ 
-K_{inj}  & {\huge O} \\  
{\huge O} &  \widetilde{K}_{1}
\\ 
\end{array}
\right) \in \mathcal{L}(L^{2}(D)^{d_{in}+d_{1}}, L^{2}(D)^{2d_{in}+d_{2}}),
\
b_{0}=
\left(
\begin{array}{c}
   {\huge O} \\ 
   {\huge O} \\
   \widetilde{b}_{1} \\
\end{array}
\right) \in L^{2}(D)^{2 d_{in}+d_1}, 
\]
\[
\vdots
\]
\[
K_{\ell}=
\left(
\begin{array}{c:c}
\begin{array}{cc}
K_{inj} & -K_{inj} \\ 
-K_{inj} & K_{inj} \\  
\end{array}
 & {\huge O} \\ \hdashline
{\huge O} &  \widetilde{K}_{\ell}
\\ 
\end{array}
\right) \in \mathcal{L}(L^{2}(D)^{2d_{in}+d_{\ell}}, L^{2}(D)^{2d_{in}+d_{\ell+1}}),
\]
\[
b_{\ell}=
\left(
\begin{array}{c}
{\huge O} \\
{\huge O} \\
\widetilde{b}_{\ell} \\ 
\end{array}
\right) \in L^{2}(D)^{2 d_{in}+d_{\ell+1}}, 
\quad
(2\leq \ell \leq L),
\]
\[
\vdots
\]
\[
K_{L}=
\left(
\begin{array}{c:c}
\begin{array}{cc}
K_{inj} & -K_{inj} \\ 
\end{array}
 & {\huge O} \\ \hdashline
{\huge O} &  
\widetilde{K}_{L}
\\ 
\end{array}
\right) \in \mathcal{L}(L^{2}(D)^{2d_{in}+d_{L}}, L^{2}(D)^{d_{in}+d_{L+1}}),
\]
\[
b_{L}=
\left(
\begin{array}{c}
{\huge O} \\
\widetilde{b}_{L} \\
\end{array}
\right) \in L^{2}(D)^{d_{in}+d_{L+1}}, 
\]
\[
K_{L+1}=
\left(
\begin{array}{cc}
K_{inj}  & {\huge O} \\ 
{\huge O} & \widetilde{K}_{L+1} \\
\end{array}
\right) \in \mathcal{L}(L^{2}(D)^{d_{in}+d_{L+1}}, L^{2}(D)^{d_{in}+d_{out}}),
\]
\[
b_{L+1}=
\left(
\begin{array}{c}
{\huge O} \\
{\huge O} \\
\end{array}
\right) \in L^{2}(D)^{d_{in}+d_{out}}.
\]

Then, the operator $H :L^{2}(D)^{d_{in}} \to L^{2}(D)^{d_{in}+d_{out}}$ has the form of
\[
H:=
\left\{
\begin{array}{l}
\left(
\begin{array}{c}
K_{inj} \circ K_{inj} \circ \sigma \circ K_{inj} \circ
\cdots \circ \sigma \circ
K_{inj}\circ K_{inj} 
\\
\widetilde{G} \\
\end{array}
\right)
\quad \text{in the case of (i)}.
\\
\left(
\begin{array}{c}
K_{inj} \circ \cdots \circ K_{inj} \\ 
\widetilde{G} \\
\end{array}
\right)
\quad \text{ in the case of (ii)}.
\end{array}
\right.
\]
For the case of (ii), we have used the fact
\[
\left(
\begin{array}{cc}
I & -I
\end{array}
\right)
\circ 
\relu
\circ
\left(
\begin{array}{c}
I \\
-I \\
\end{array}
\right)
= I.
\]
Thus, in both cases, $H$ is injective.

\medskip

In the case of (i), as $\sigma \in A_{0}^{L}$, we obtain the estimate
\[
\left\| \sigma (f) \right\|_{L^{2}(D)^{d_{in}}}
\leq \sqrt{2|D|d_{in}}C_{0} + \left\| f \right\|_{L^{2}(D)^{d_{in}}}, \ f \in L^{2}(D)^{d_{in}},
\]
where 
\[
C_{0}:=\sup_{x \in \mathbb{R}} \frac{|\sigma(x)|}{1+|x|} < \infty.
\]
Then we evaluate for $a \in K (\subset B_{R}(0))$,
\begin{equation}
\begin{split}
& \left\| H(a) \right\|_{L^{2}(D)^{d_{in}+d_{out}}}
\\
&
\leq
\left\| \widetilde{G}(a) \right\|_{L^{2}(D)^{d_{out}}}
+
\left\| K_{inj} \circ  K_{inj} \circ \sigma \circ K_{inj} \circ
\cdots \circ \sigma \circ
K_{inj}\circ K_{inj}(a) \right\|_{L^{2}(D)^{d_{in}}}
\\
&
\leq
4M + \sqrt{2|D|d_{in}}C_{0} \sum_{\ell=1}^{L}\left\|K_{inj} \right\|_{\mathrm{op}}^{\ell+1} +\left\|K_{inj} \right\|_{\mathrm{op}}^{L+2}R=:C_{H}.
\end{split}
\label{estimate-H-1}
\end{equation}
In the case of (ii), we find the estimate, for $a \in K$,
\begin{equation}
\left\| H(a) \right\|_{L^{2}(D)^{d_{in}+d_{out}}}
\leq
4M + \left\|K_{inj} \right\|_{\mathrm{op}}^{L+2}R < C_{H}.
\label{estimate-H-2}
\end{equation}
From (\ref{range-G-tilde}) (especially, $\mathrm{Ran}(\pi_{1}H) \subset C(D)$) and Remark~\ref{remark-conti-case-B}, we can choose an orthogonal sequence $\{ \xi_{k} \}_{k \in \mathbb{N}}$ in $L^{2}(D)$ such that (\ref{ONS-notin-range-main}) holds.
By applying Lemma~\ref{injective-dimensional-reduction}, as $T=H$, $n=d_{in}$, $m=d_{in}+d_{out}$, $\ell=d_{out}$, we find that
\[
G:= \underbrace{\pi_{d_{out}} \circ Q_{\alpha} \circ P_{V_{\alpha}^{\perp}}}
_{=:B}
\circ H : L^{2}(D)^{d_{in}} \to L^{2}(D)^{d_{out}},
\]
is injective. Here, $P_{V_{\alpha}^{\perp}}$ and $Q_{\alpha}$ are defined as in Remark~\ref{existential-B}; we choose $0<\alpha<<1$ such that
\[
\left\| P_{V_{\alpha}^{\perp}} - P_{V_{0}^{\perp}} \right\|_{\mathrm{op}} < \min\left( \frac{\epsilon}{10C_{H}}, 1 \right) =: \epsilon_{0},
\]
where $P_{V_{0}^{\perp}}$ is the orthogonal projection onto 
\[
V_{0}^{\perp}
:=  \{0\}^{d_{in}} \times L^{2}(D)^{d_{out}}.
\]
By the same argument as in the proof of Theorem 15 in \citet{puthawala2022globally}, we can show that
\[
\left\|I-Q_{\alpha} \right\|_{\mathrm{op}} \leq 4 \epsilon_{0}.
\]
Furthermore, since $B$ is a linear operator, $B \circ K_{L+1}$ is also a linear operator with integral kernel $\left(Bk_{L+1}(\cdot, y)\right)(x)$, where $k_{L+1}(x,y)$ is the kernel of $K_{L+1}$. This implies that
\[
G \in \mathrm{NO}_{L}(\sigma;  D, d_{in}, d_{out}).
\]
We get, for $a \in K$,
\begin{equation}
\left\|G^{+}(a) - G(a) \right\|_{L^{2}(D)^{d_{out}}}
\leq 
\underbrace{\left\| G^{+}(a) - \widetilde G(a) \right\|_{L^{2}(D)^{d_{out}}}}_{
\text{(\ref{from-Thm11-kova})}
\leq \frac{\epsilon}{2}}
+
\left\|\widetilde G(a) -  G(a) \right\|_{L^{2}(D)^{d_{out}}}.
\label{estimate-G^+-G-1}
\end{equation}
Using (\ref{estimate-H-1}) and (\ref{estimate-H-2}), we then obtain
\begin{equation}
\begin{split}
&\left\|\widetilde G(a) -  G(a) \right\|_{L^{2}(D)^{d_{out}}}
= \left\| \pi_{d_{out}} \circ H(a) - \pi_{d_{out}} \circ Q_{\alpha} \circ P_{V_{\alpha}^{\perp}} \circ H(a) \right\|_{L^{2}(D)^{d_{out}}}
\\
&
\leq 
\left\| \pi_{d_{out}} \circ (P_{V_{0}^{\perp}}-P_{V_{\alpha}^{\perp}}+P_{V_{\alpha}^{\perp}}) \circ H(a) - \pi_{d_{out}} \circ Q_{\alpha} \circ P_{V_{\alpha}^{\perp}} \circ H(a) \right\|_{L^{2}(D)^{d_{out}}}
\\
&
\leq 
\left\| \pi_{d_{out}} \circ (P_{V_{0}^{\perp}}-P_{V_{\alpha}^{\perp}}) \circ H(a) \right\|_{L^{2}(D)^{d_{out}}} +
\left\|\pi_{d_{out}} \circ (I - Q_{\alpha}) \circ P_{V_{\alpha}^{\perp}} \circ H(a) \right\|_{L^{2}(D)^{d_{out}}}
\\
&
\leq 5 \epsilon_{0} \left\| H(a) \right\|_{L^{2}(D)^{d_{in}+d_{out}}} 
\leq \frac{\epsilon}{2}.
\end{split}
\label{estimate-G^+-G-2}
\end{equation}
Combining (\ref{estimate-G^+-G-1}) and (\ref{estimate-G^+-G-2}), we conclude that
\[
\sup_{a \in K} \left\| 
G^{+}(a) - G(a) \right\|_{L^{2}(D)^{d_{out}}} \leq \frac{\epsilon}{2} + \frac{\epsilon}{2}= \epsilon.
\]
\end{proof}

\subsection{Remark following Theorem~\ref{Universal-injectivity-NO}}
\begin{remark}
\label{remark-L2-universality}
We make the following observations using Theorem~\ref{Universal-injectivity-NO}:
\begin{itemize}
\item[(i)] 
ReLU and Leaky ReLU functions belong to $\mathrm{A}_{0}^{L} \cap \mathrm{BA}$ due to the fact that $\{ \sigma \in C(\mathbb{R}) \ | \ \sigma \text{ is not a polynomial} \} \subseteq A_{0}$ (see \citet{pinkus1999approximation}), and both the ReLU and Leaky ReLU functions belong to $\mathrm{BA}$ (see Lemma C.2 in \citet{lanthaler2022error}).
We note that Lemma C.2 in \citet{lanthaler2022error} solely established the case for ReLU. 
However, it holds true for Leaky ReLU as well since the proof relies on the fact that the function $x \mapsto \min(\max(x, R), R)$ can be exactly represented by a two-layer ReLU neural network, and a two-layer Leaky ReLU neural network can also represent this function.
Consequently, Leaky ReLU is one of example that satisfies (ii) in Theorem~\ref{Universal-injectivity-NO}.

\item[(ii)] 
We emphasize that our infinite-dimensional result, Theorem~\ref{Universal-injectivity-NO}, deviates from the finite-dimensional result.
\citet[Theorem 15]{puthawala2022globally} assumes that $2d_{in}+1 \leq d_{out}$ due to the use of Whitney's theorem.
In contrast, Theorem~\ref{Universal-injectivity-NO} does not assume any conditions on $d_{in}$ and $d_{out}$, that is, we are able to avoid invoking Whitney's theorem by employing Lemma~\ref{injective-dimensional-reduction}.

\item[(iii)]
We provide examples that injective universality does not hold when $L^{2}(D)^{d_{in}}$ and $L^{2}(D)^{d_{out}}$ are replaced by $\R^{d_{in}}$ and $\R^{d_{out}}$:
Consider the case where $d_{in}=d_{out}=1$ and $G^+:\R\to \R$ is defined as $G^+(x)=\sin(x)$.
We can not approximate $G^+:\R\to \R$ by an injective function $G:\R\to \R$ in the set $K=[0,2\pi]$ in the $L^\infty$-norm. 
According to the topological degree theory (see 
\citet[Theorem 1.2.6(iii)]{cho2006topological}), any continuous function $G:\R\to \R$ which satisfies $\|G-G^+\|_{C([0,2\pi])}<\e$ satisfies the equation on both intervals $I_1=[0,\pi]$, $I_2=[\pi,2\pi]$
deg$(G,I_j,s)=$deg$(G^+,I_j,s)=1$ for all $s\in [-1+\e,1-\e]$, $j=1,2$. This implies
that $G:I_j\to \R$ obtains the value $s\in [-1+\e,1-\e]$ at least once.
Hence, $G$ obtains the values $s\in [-1+\e,1-\e]$ at least two times on the interval $[0,2\pi]$ and is it thus not injective.
It is worth noting that the degree theory exhibits significant differences between the infinite-dimensional and finite-dimensional cases \citep{cho2006topological}). 

\end{itemize}
\end{remark}

\section{Details of Section~\ref{Approximation and injectivity via finite-rank operators}}\label{Appendix3}
\subsection{Finite-rank approximation}
We consider linear integral operators $K_{\ell}$ with $L^2$ kernels $k_{\ell}(x,y)$. 
Let $\{\varphi_{k}\}_{k \in \mathbb{N}}$ be an orthonormal basis in $L^{2}(D)$. 
Since $\{\varphi_{k}(y)\varphi_{p}(x)\}_{k,p \in \mathbb{N}}$ is an orthonormal basis of $L^{2}(D \times D)$, integral kernels $k_{\ell} \in L^2(D \times D; \mathbb{R}^{d_{\ell+1} \times d_{\ell} })$ in integral operators $K_{\ell} \in \mathcal{L}(L^{2}(D)^{d_{\ell}}, L^{2}(D)^{d_{\ell+1}})$ has the expansion
\[
k_{\ell}(x,y) = \sum_{k,p \in \mathbb{N}}C^{(\ell)}_{k,p}\varphi_{k}(y)\varphi_{p}(x),
\]
then integral operators $K_{\ell} \in \mathcal{L}(L^{2}(D)^{d_{\ell}}, L^{2}(D)^{d_{\ell+1}})$ take the form
\[
K_{\ell}u(x) = \sum_{k,p \in \mathbb{N}}C^{(\ell)}_{k,p}(u, \varphi_{k}) \varphi_{p}(x), \ u \in L^{2}(D)^{d_{\ell}},
\]
where $C^{(\ell)}_{k,p} \in \mathbb{R}^{d_{\ell+1} \times d_{\ell}}$ whose $(i,j)$-th component $c^{(\ell)}_{k,p,ij}$ is given by
\[
c^{(\ell)}_{k,p,ij}=(k_{\ell, ij}, \varphi_k \varphi_p)_{L^{2}(D\times D)}.
\]
Here, we write $(u, \varphi_{k}) \in \mathbb{R}^{d_{\ell}}$,
\[
(u, \varphi_{k}) =\left( (u_1, \varphi_{k})_{L^{2}(D)}, ..., (u_{d_{\ell}}, \varphi_{k})_{L^{2}(D)} \right).
\]
\medskip

We define $K_{\ell, N} \in \mathcal{L}(L^{2}(D)^{d_{\ell}}, L^{2}(D)^{d_{\ell+1}})$ as the truncated expansion of $K_{\ell}$ by $N$ finite sum, that is,
\[
K_{\ell, N}u(x) := \sum_{k,p \leq N}C^{(\ell)}_{k,p}(u, \varphi_{k}) \varphi_{p}(x).
\]
Then $K_{\ell, N} \in \mathcal{L}(L^{2}(D)^{d_{\ell}}, L^{2}(D)^{d_{\ell+1}})$ is a finite-rank operator with rank $N$.
Furthermore, we have 
\begin{equation}
\left\|K_{\ell} - K_{\ell, N} \right\|_{\mathrm{op}} 
\leq 
\left\|K_{\ell} - K_{\ell, N} \right\|_{\mathrm{HS}} 
=
\left(\sum_{k,p \geq N} \sum_{i,j} |c^{(\ell)}_{k,p,ij}|^{2}\right)^{1/2},
\end{equation}
which implies that as $N \to \infty$,
\[
\left\|K_{\ell} - K_{\ell, N} \right\|_{\mathrm{op}} \to 0.
\]

\subsection{Layerwise injectivity}

We first revisit layerwise injectivity and bijectivity in the case of the finite-rank approximation.
Let $K_{N} :L^{2}(D)^n \to L^{2}(D)^{m}$ be a finite-rank operator defined by
\[
K_{N}u(x) := \sum_{k,p \leq N}C_{k,p}(u, \varphi_{k}) \varphi_{p}(x), \ u \in L^{2}(D)^{n},
\]
where $C_{k,p} \in \mathbb{R}^{m \times n}$ and $(u, \varphi_{p}) \in \mathbb{R}^{n}$ is given by
\[
(u, \varphi_{p}) =\left( (u_1, \varphi_{p})_{L^{2}(D)}, ..., (u_n, \varphi_{p})_{L^{2}(D)} \right).
\]
Let $b_{N} \in L^{2}(D)^n$ be defined by
\[
b_{N}(x) :=\sum_{p \leq N} b_{p} \varphi_{p}(x),
\]
in which $b_{p} \in \mathbb{R}^{m}$.
As analogues of Propositions~\ref{injectivity-ReLU:general} and \ref{injective-activation-chara}, we obtain the following characterization.
\begin{proposition}\label{injectivity-ReLU:finite-rank}
(i) 
The operator 
\[
\relu \circ(K_{N}+b_{N}): (\mathrm{span}\{\varphi_{k}\}_{k \leq N})^{n} \to L^{2}(D)^{m},
\]
is injective if and only if for every $v \in  (\mathrm{span}\{\varphi_{k}\}_{k \leq N})^{n}$,
\[
\left\{
u \in L^{2}(D)^{n} \ \bigl| \ 
\vec{u}_{N} \in \mathrm{Ker}(C_{S,N})
\right\} \cap X(v, K_{N}+b_{N}) \cap (\mathrm{span}\{\varphi_{k}\}_{k \leq N})^{n} =\{0\}.
\]
where $S(v, K_{N}+b_{N}) \subset [m]$ and $X(v, K_{N}+b_{N})$ are defined in Definition~\ref{definition:DSS:general}, and 
\begin{equation}
\vec{u}_{N}:= \left((u, \varphi_{p}) \right)_{p \leq N} \in \mathbb{R}^{Nn}, \ \
C_{S,N}:=\left(C_{k,q}\bigl|_{S(v, K_{N}+b_{N})} \right)_{k,q \in [N]} \in \mathbb{R}^{N|S(v,K_{N}+b_{N})| \times Nn}.
\label{notation-chara-C_S_N-1}
\end{equation}
(ii) Let $\sigma$ be injective. 
Then the operator
\[
\sigma \circ (K_N+b_{N}) : (\mathrm{span}\{\varphi_{k}\}_{k \leq N})^{n}\to L^{2}(D)^{m},
\]
is injective if and only if $C_{N}$ is injective, where
\begin{equation}
C_{N}:=\left(C_{k,q} \right)_{k,q \in [N]} \in \mathbb{R}^{Nm \times Nn}.
\label{notation-chara-C_S_N-2}
\end{equation}
\end{proposition}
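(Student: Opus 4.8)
The plan is to reduce the statement to the corresponding finite-dimensional result about matrices, exploiting the fact that on the finite-dimensional subspace $(\mathrm{span}\{\varphi_k\}_{k\le N})^n$ the operator $K_N+b_N$ is completely encoded by the matrix $C_N$ (resp. its restriction $C_{S,N}$) acting on the coordinate vectors $\vec u_N$. First I would set up notation: for $u\in(\mathrm{span}\{\varphi_k\}_{k\le N})^n$ write $u=\sum_{p\le N}(u,\varphi_p)\varphi_p$ and identify $u$ with $\vec u_N=((u,\varphi_p))_{p\le N}\in\R^{Nn}$. A direct computation using orthonormality of $\{\varphi_k\}$ then shows $(K_Nu+b_N)(x)=\sum_{p\le N}\big(\sum_{k\le N}C_{k,p}(u,\varphi_k)+b_p\big)\varphi_p(x)$; the crucial observation is that $T_iu:=\pi_i K_N u$ and $b_i$ each lie in $\mathrm{span}\{\varphi_k\}_{k\le N}$, so $T_iu+b_i>0$ in $D$, the sign condition that defines $S(v,K_N+b_N)$ and $X(v,K_N+b_N)$, is determined entirely by the finite vector of coefficients — though I should be careful here, since positivity of a finite linear combination of $\varphi_k$'s on all of $D$ is genuinely an $L^2$-level condition, not just a statement about $\R^{Nn}$, which is why the sets $S$ and $X$ from Definition~\ref{definition:DSS:general} must be kept in the statement rather than replaced by purely algebraic conditions on $C_N$.

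For part (ii), I would argue as follows. Since $\sigma$ is injective, $\sigma\circ(K_N+b_N)$ restricted to $(\mathrm{span}\{\varphi_k\}_{k\le N})^n$ is injective if and only if $K_N+b_N$ is injective there, if and only if $K_N$ is injective on that subspace (subtracting the common bias). Now $K_Nu=0$ for $u$ in the subspace is, by the coefficient computation above and linear independence of the $\varphi_p$, equivalent to $C_N\vec u_N=0$; and $u\mapsto\vec u_N$ is a linear isomorphism $(\mathrm{span}\{\varphi_k\}_{k\le N})^n\to\R^{Nn}$. Hence $K_N$ is injective on the subspace iff $\ker C_N=\{0\}$, i.e.\ $C_N$ is injective. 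This direction is essentially bookkeeping.

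For part (i), I would invoke Proposition~\ref{injectivity-ReLU:general} with the ambient space taken to be the finite-dimensional subspace $(\mathrm{span}\{\varphi_k\}_{k\le N})^n$ — noting that the proof of Proposition~\ref{injectivity-ReLU:general} works verbatim with $L^2(D)^n$ replaced by any closed subspace, since it only uses linearity of $T$, the DSS condition, and pointwise arguments on $D$. Thus $\relu\circ(K_N+b_N)$ is injective on the subspace iff $K_N+b_N$ has a DSS with respect to every $v$ in the subspace, i.e.\ $\mathrm{Ker}(K_N|_{S(v,K_N+b_N)})\cap X(v,K_N+b_N)\cap(\mathrm{span}\{\varphi_k\}_{k\le N})^n=\{0\}$. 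It remains to translate the kernel condition into the matrix language: for $u$ in the subspace, $K_N|_{S(v,K_N+b_N)}u=0$ means $\pi_i K_N u=0$ in $D$ for every $i\in S(v,K_N+b_N)$, which by orthonormality is equivalent to the block-restricted matrix $C_{S,N}$ annihilating $\vec u_N$, i.e.\ $\vec u_N\in\ker C_{S,N}$. Substituting this identification yields exactly the claimed characterization. The main obstacle I anticipate is the careful verification that the \emph{support/sign} data $S(v,\cdot)$ and $X(v,\cdot)$ — which involve genuine pointwise-in-$D$ inequalities among finite sums of basis functions and cannot be collapsed to conditions on $\R^{Nn}$ — are handled consistently: the statement must keep them as in Definition~\ref{definition:DSS:general} while only the \emph{kernel} part gets algebraized, and one has to check that intersecting with $(\mathrm{span}\{\varphi_k\}_{k\le N})^n$ on both sides is exactly what makes the two pictures match. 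Everything else is a routine transcription of Propositions~\ref{injectivity-ReLU:general} and~\ref{injective-activation-chara} to the finite-rank subspace.
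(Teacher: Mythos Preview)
Your proposal is correct and follows essentially the same route as the paper: the paper's proof simply invokes Propositions~\ref{injectivity-ReLU:general} and~\ref{injective-activation-chara} and records the single observation that, for $u$ in the finite-rank subspace, $K_N u=0$ (resp.\ $K_N|_{S}u=0$) is equivalent to $C_N\vec u_N=0$ (resp.\ $C_{S,N}\vec u_N=0$) via the coordinate isomorphism $u\mapsto\vec u_N$. Your additional care about restricting Proposition~\ref{injectivity-ReLU:general} to a closed subspace and about keeping the pointwise sets $S$ and $X$ un-algebraized is well placed but not something the paper spells out.
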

\begin{proof}
The above statements follow from Propositions~\ref{injectivity-ReLU:general} and \ref{injective-activation-chara} by observing that $u \in \mathrm{Ker}\left(K_{N}\right)$ is equivalent to (cf.~(\ref{notation-chara-C_S_N-1}) and (\ref{notation-chara-C_S_N-2}))
\[
\sum_{k,p \leq N}C_{k,p} (u, \varphi_{k}) \varphi_{p}=0,
\iff 
C_{N} \vec{u}_{N} = 0.
\]
\end{proof}

\subsection{Global injectivity}

We revisit global injectivity in the case of finite-rank approximation.
As an analogue of Lemma~\ref{injective-dimensional-reduction}, we have the following
\begin{lemma}\label{injective-dimensional-reduction-finite-rank}
Let $N, N^{\prime} \in \mathbb{N}$ and $n,m,\ell \in \mathbb{N}$ with $N^{\prime}m > N^{\prime}\ell \geq 2N n +1 $, and let $T:L^{2}(D)^{n} \to L^{2}(D)^{m}$ be a finite-rank operator with $N^{\prime}$ rank, that is,
\begin{equation}
\mathrm{Ran}(T) \subset (\mathrm{span}\{\varphi_{k}\}_{k \leq N^{\prime}})^{m},
\label{ran-T-finite-rank}
\end{equation}
and Lipschitz continuous, and
\[
T : \left(\mathrm{span}\{\varphi_{k}\}_{k \leq N} \right)^{n} \to L^{2}(D)^{m},
\]
is injective. 
Then, there exists a finite-rank operator $B \in \mathcal{L}(L^{2}(D)^{m}, L^{2}(D)^{\ell})$ with rank $N^{\prime}$ such that
\[
B \circ T : (\mathrm{span}\{\varphi_{k}\}_{k \leq N})^{n} \to (\mathrm{span}\{\varphi_{k}\}_{k \leq N^{\prime}})^{\ell},
\]
is injective.
\end{lemma}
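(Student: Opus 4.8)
The plan is to observe that, thanks to the rank hypothesis \eqref{ran-T-finite-rank}, all the relevant action of $T$ takes place between two finite-dimensional subspaces, so that after passing to coordinates the claim becomes a purely Euclidean statement to which \citet[Lemma 29]{puthawala2022globally} applies. Concretely, I would first note that $T$ maps $(\mathrm{span}\{\varphi_k\}_{k\le N})^n$ into $(\mathrm{span}\{\varphi_k\}_{k\le N'})^m$ by \eqref{ran-T-finite-rank}, and introduce the coefficient isometries $J_N\colon(\mathrm{span}\{\varphi_k\}_{k\le N})^n\to\mathbb{R}^{Nn}$, $u\mapsto((u,\varphi_1),\dots,(u,\varphi_N))$, together with its analogues $\widehat J\colon(\mathrm{span}\{\varphi_k\}_{k\le N'})^m\to\mathbb{R}^{N'm}$ and $\widetilde J\colon(\mathrm{span}\{\varphi_k\}_{k\le N'})^\ell\to\mathbb{R}^{N'\ell}$. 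Setting $\widehat T:=\widehat J\circ T\circ J_N^{-1}\colon\mathbb{R}^{Nn}\to\mathbb{R}^{N'm}$, the Lipschitz continuity and injectivity of $T$ on $(\mathrm{span}\{\varphi_k\}_{k\le N})^n$ transfer (since the $J$'s are linear bijections) to make $\widehat T$ a Lipschitz injective map of Euclidean spaces.

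Next I would apply \citet[Lemma 29]{puthawala2022globally} to $\widehat T$: the dimension assumption $N'\ell\ge 2Nn+1$ is exactly the ``codomain dimension at least twice the domain dimension plus one'' condition under which a generic linear projection preserves injectivity on a Lipschitz image, so it yields a linear map $P\in\mathbb{R}^{N'\ell\times N'm}$ with $P\circ\widehat T\colon\mathbb{R}^{Nn}\to\mathbb{R}^{N'\ell}$ injective. I want to stress that this is the step where the hypothesis that $T$ is merely Lipschitz (rather than linear) is genuinely used: for a linear $T$ a rank count would suffice, but in general one needs the Whitney-type secant argument behind \citet[Lemma 29]{puthawala2022globally}.

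Then I would assemble $B$ by prepending the orthogonal coefficient-truncation $\Pi_{N'}\colon L^2(D)^m\to(\mathrm{span}\{\varphi_k\}_{k\le N'})^m$ and post-composing with $\widetilde J^{-1}$, i.e.
\[
B:=\widetilde J^{-1}\circ P\circ\widehat J\circ\Pi_{N'}\colon L^2(D)^m\to L^2(D)^\ell .
\]
Writing $P$ in $\ell\times m$ blocks $P=(P_{p,k})_{1\le p,k\le N'}$ gives $Bu=\sum_{k,p\le N'}P_{p,k}(u,\varphi_k)\varphi_p$, so $B$ is a bounded linear operator of the finite-rank form of Definition~\ref{definition-finite-rank-NO-set} with truncation $N'$, and $\mathrm{Ran}(B)\subset(\mathrm{span}\{\varphi_k\}_{k\le N'})^\ell$. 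Finally, for $v\in(\mathrm{span}\{\varphi_k\}_{k\le N})^n$ we have $Tv\in(\mathrm{span}\{\varphi_k\}_{k\le N'})^m$ by \eqref{ran-T-finite-rank}, hence $\Pi_{N'}Tv=Tv$ and $B(Tv)=\widetilde J^{-1}\bigl(P\,\widehat T(J_N v)\bigr)$; since $J_N$, $P\circ\widehat T$ and $\widetilde J^{-1}$ are all injective, $B\circ T$ is injective on $(\mathrm{span}\{\varphi_k\}_{k\le N})^n$ with range in $(\mathrm{span}\{\varphi_k\}_{k\le N'})^\ell$, which is the claim.

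The main obstacle is essentially outsourced: once coordinates are chosen, the whole argument rests on \citet[Lemma 29]{puthawala2022globally}, and the only things requiring care are (a) checking that $\widehat T$ really inherits Lipschitzness and injectivity (immediate from the rank bound and the $J$'s being bijective isometries), and (b) verifying that the constructed $B$ genuinely has the prescribed finite-rank structure, i.e.\ depends only on the first $N'$ coefficients of its input and outputs only combinations of $\varphi_1,\dots,\varphi_{N'}$, which the explicit block formula makes transparent. The hypothesis $m>\ell$ is not used for injectivity; it is simply what makes the conclusion a genuine reduction of dimension.
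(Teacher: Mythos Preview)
Your proposal is correct and follows essentially the same approach as the paper: pass to coordinates via the orthonormal basis, obtain a Lipschitz injective map $\widehat T:\mathbb{R}^{Nn}\to\mathbb{R}^{N'm}$, invoke \citet[Lemma~29]{puthawala2022globally} under the dimension condition $N'\ell\ge 2Nn+1$, and pull the resulting Euclidean projection back to a finite-rank $B$ via the block formula $Bu=\sum_{k,p\le N'}\mathbf{B}_{k,p}(u,\varphi_k)\varphi_p$. The only cosmetic difference is that the paper spells out the map from the $N'\ell$-dimensional subspace furnished by Lemma~29 to the standard $\mathbb{R}^{N'\ell}$ via the Kato pair-of-projections operator $\mathbf{Q}$ and the coordinate restriction $\pi_{N'\ell}$ (a construction reused for the approximation estimates in Theorem~\ref{Universal-injectivity-NO-finite-rank}), whereas you absorb this into a single matrix $P$; for the lemma itself any linear identification suffices, so your shortcut is harmless.
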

\begin{proof}
From (\ref{ran-T-finite-rank}), $T : L^2(D)^{n} \to L^2(D)^{m}$ has the form of
\[
T(a) = \sum_{k \leq N^{\prime}} (T(a), \varphi_{k})\varphi_{k},
\]
where $(T(a), \varphi_{k}) \in \mathbb{R}^{m}$.
We define $\mathbf{T} : \mathbb{R}^{Nn} \to \mathbb{R}^{N^{\prime}m}$ by
\[
\mathbf{T}(\mathbf{a}):=\left((T(\mathbf{a}), \varphi_{k}) \right)_{k \in [N^{\prime}]} \in \mathbb{R}^{N^{\prime}m}, \ \mathbf{a} \in \mathbb{R}^{Nn},
\]
where $T(\mathbf{a}) \in L^2(D)^{m} $ is defined by
\[
T(\mathbf{a}):=T\left( \sum_{k \leq N}a_{k} \varphi_{k} \right)
\in L^2(D)^{m},
\]
in which $a_{k} \in \mathbb{R}^{n}$, $\mathbf{a}=(a_1,...,a_{N}) \in \mathbb{R}^{Nn}$.

Since $T : L^2(D)^{n} \to L^2(D)^{m}$ is  Lipschitz continuous, $\mathbf{T} : \mathbb{R}^{Nn} \to \mathbb{R}^{N^{\prime}m}$  is also  Lipschitz continuous.
As $N^{\prime}m > N^{\prime}\ell \geq 2N n +1 $, we can apply Lemma 29 from \citet{puthawala2022globally} with $D=N^{\prime}m$, $m=N^{\prime}\ell$, $n=Nn$.
According to this lemma, there exists a $N^{\prime}\ell$-dimensional linear subspace $\mathbf{V}^{\perp}$ in $\mathbb{R}^{N^{\prime}m}$ such that
\[
\left\|P_{\mathbf{V}^{\perp}}-P_{\mathbf{V}^{\perp}_{0}} \right\|_{\mathrm{op}} < 1,
\]
and
\[
P_{\mathbf{V}^{\perp}} \circ \mathbf{T} : \mathbb{R}^{Nn} \to \mathbb{R}^{N^{\prime}m},
\]
is injective, where $\mathbf{V}_{0}^{\perp}=\{0\}^{N^{\prime}(m-\ell)} \times \mathbb{R}^{N^{\prime}\ell}$.
Furthermore, in the proof of Theorem 15 of \citet{puthawala2022globally}, denoting
\[
\mathbf{B}:=\pi_{N^{\prime}\ell} \circ \mathbf{Q} \circ P_{\mathbf{V}^{\perp}} \in \mathbb{R}^{N^{\prime}\ell \times N^{\prime}m},
\]
we are able to show that 
\[
\mathbf{B} \circ \mathbf{T} : \mathbb{R}^{Nn} \to \mathbb{R}^{N^{\prime}\ell}
\]
is injective.
Here, $\pi_{N^{\prime}\ell} : \mathbb{R}^{N^{\prime}m} \to \mathbb{R}^{N^{\prime}\ell}$
\[
\pi_{N^{\prime}\ell}(a, b) := b, \quad (a,b) \in \mathbb{R}^{N^{\prime}(m-\ell)} \times \mathbb{R}^{N^{\prime}\ell},
\]
where $\mathbf{Q} : \mathbb{R}^{N^{\prime}m} \to \mathbb{R}^{N^{\prime}m}$ is defined by
\[
\mathbf{Q} := \left( P_{\mathbf{V}_{0}^{\perp}} P_{\mathbf{V}^{\perp}} + (I-P_{\mathbf{V}_{0}^{\perp}}) (I-P_{\mathbf{V}^{\perp}}) \right)
\left( I-(P_{\mathbf{V}_{0}^{\perp}}-P_{\mathbf{V}^{\perp}})^2 \right)^{-1/2}.
\]
We define $B : L^2(D)^{m} \to L^2(D)^{\ell}$ by
\[
Bu= \sum_{k,p \leq N^{\prime}} \mathbf{B}_{k,p} (u, \varphi_{k}) \varphi_{p},
\]
where $\mathbf{B}_{k,p} \in \mathbb{R}^{\ell \times m}$, $\mathbf{B}=(\mathbf{B}_{k,p})_{k,p \in [N^{\prime}]}$. 
Then $B: L^2(D)^{m} \to L^2(D)^{\ell}$ is a linear finite-rank operator with $N^{\prime}$ rank, and 
\[
B \circ T : L^2(D)^{n} \to L^2(D)^{\ell}
\]
is injective because, by the construction, it is equivalent to
\[
\mathbf{B} \circ \mathbf{T} : \mathbb{R}^{Nn} \to \mathbb{R}^{N^{\prime}\ell},
\]
is injective.
\end{proof}

\medskip

\subsection{Proof of Theorem~\ref{Universal-injectivity-NO-finite-rank}}
\label{proof-Universal-injectivity-NO-finite-rank}

\begin{proof}

Let $R>0$ such that
\[
K \subsetneq B_{R}(0),
\]
where $B_{R}(0):=\{ u \in L^{2}(D)^{d_{in}} \ | \ \left\| u \right\|_{L^{2}(D)^{d_{in}}} \leq R \}$. 
As ReLU and Leaky ReLU function belongs to $\mathrm{A}_{0}^{L} \cap \mathrm{BA}$, 
by Theorem 11 of \citet{kovachki2021neural}, there exists $L \in \mathbb{N}$ and $\widetilde{G} \in \mathrm{NO}_{L}(\sigma ; D, d_{in}, d_{out})$ such that
\begin{equation}
\sup_{a \in K} \left\| G^{+}(a) - \widetilde{G}(a) \right\|_{L^{2}(D)^{d_{out}}} \leq \frac{\epsilon}{3}.
\label{from-main-Thm-finite}
\end{equation}
and
\[
\left\| \widetilde{G}(a) \right\|_{L^{2}(D)^{d_{out}}} \leq 4 M, \quad\hbox{for } a \in L^2(D)^{d_{in}},\quad
\|a\|_{L^2(D)^{d_{in}}}\le R.
\]
We write operator $\widetilde{G}$ by
\[
\widetilde{G} = \widetilde{K}_{L+1} \circ (\widetilde{K}_{L}+\widetilde{b}_{L}) \circ \sigma \cdots  \circ (\widetilde{K}_{2}+\widetilde{b}_{2}) \circ \sigma \circ (\widetilde{K}_{1}+\widetilde{b}_{1}) \circ (\widetilde{K}_{0}+\widetilde{b}_{0}),
\]
where
\[
\begin{split}
&\widetilde{K}_{\ell} \in \mathcal{L}(L^{2}(D)^{d_{\ell}}, L^{2}(D)^{d_{\ell+1}}), \
\widetilde{K}_{\ell}: f \mapsto \int_{D}\widetilde{k}_{\ell}(\cdot, y)f(y)dy, 
\\
&
\widetilde{k}_{\ell} \in L^{2}(D \times D; \mathbb{R}^{d_{\ell+1} \times d_{\ell}}), \ 
\widetilde{b}_{\ell} \in L^{2}(D; \mathbb{R}^{d_{\ell+1} }),
\\
&
d_{\ell} \in \mathbb{N}, \ 
d_{0}=d_{in}, \ d_{L+2}=d_{out}, 
\
\ell=0,...,L+2.
\end{split}
\]

We set $\widetilde{G}_{N^{\prime}} \in \mathrm{NO}_{L,N^{\prime}}(\sigma ; D, d_{in}, d_{out})$ such that
\[
\widetilde{G}_{N^{\prime}} = \widetilde{K}_{L+1,N^{\prime}} \circ (\widetilde{K}_{L,N^{\prime}}+\widetilde{b}_{L,N^{\prime}}) \circ \sigma \cdots  \circ (\widetilde{K}_{2,N^{\prime}}+\widetilde{b}_{2,N^{\prime}}) \circ \sigma \circ (\widetilde{K}_{1,N^{\prime}}+\widetilde{b}_{1,N^{\prime}})\circ (\widetilde{K}_{0,N^{\prime}}+\widetilde{b}_{0,N^{\prime}}),
\]
where $\widetilde{K}_{\ell,N^{\prime}}: L^{2}(D)^{d_{\ell}} \to L^{2}(D)^{d_{\ell+1}}$ is defined by
\[
\widetilde{K}_{\ell,N^{\prime}}u(x) = \sum_{k,p \leq N^{\prime}}C^{(\ell)}_{k,p}(u, \varphi_{k}) \varphi_{p}(x), 
\]
where $C^{(\ell)}_{k,p} \in \mathbb{R}^{d_{\ell+1} \times d_{\ell}}$ whose $(i,j)$-th component $c^{(\ell)}_{k,p,ij}$ is given by
\[
c^{(\ell)}_{k,p,ij}=(\widetilde{k}_{\ell, ij}, \varphi_k \varphi_p)_{L^{2}(D\times D)}.
\]
Since 
\[
\left\|\widetilde{K}_{\ell} - \widetilde{K}_{\ell, N^{\prime}} \right\|_{\mathrm{op}}^{2} 
\leq 
\left\|\widetilde{K}_{\ell} - \widetilde{K}_{\ell, N^{\prime}} \right\|_{\mathrm{HS}}^{2} 
=
\sum_{k,p \geq N^{\prime}+1} \sum_{i,j} |c^{(\ell)}_{k,p,ij}|^{2}
\to 0 \ \text{as} \ N^{\prime} \to \infty,
\]
there is a large $N^{\prime} \in \mathbb{N}$ such that
\begin{equation}
\sup_{a \in K} \left\|\widetilde{G}(a) - \widetilde{G}_{N^{\prime}}(a) \right\|_{L^{2}(D)^{d_{out}}} \leq \frac{\epsilon}{3}.
\label{finite-rank-appro-finite}
\end{equation}
Then, we have 
\[
\begin{split}
\sup_{a \in K} \left\| \widetilde{G}_{N^{\prime}}(a) \right\|_{L^{2}(D)^{d_{out}}} 
&\leq
\sup_{a \in K} \left\| \widetilde{G}_{N^{\prime}}(a)-\widetilde{G}(a) \right\|_{L^{2}(D)^{d_{out}}}
+\sup_{a \in K} \left\|\widetilde{G}(a) \right\|_{L^{2}(D)^{d_{out}}}
\\
&
\leq 1+4M.
\end{split}
\]

We define the operator $H_{N^{\prime}}:L^{2}(D)^{d_{in}} \to L^{2}(D)^{d_{in}+d_{out}}$ by
\[
H_{N^{\prime}}(a)
=
\left(
\begin{array}{c}
H_{N^{\prime}}(a)_{1}\\
H_{N^{\prime}}(a)_{2} \\
\end{array}
\right)
:=
\left(
\begin{array}{c}
K_{inj, N} \circ \cdots \circ K_{inj, N}(a)\\
\widetilde{G}_{N^{\prime}}(a) \\
\end{array}
\right),
\]
where $K_{inj, N} : L^2(D)^{d_{in}} \to L^2(D)^{d_{in}}$ is defined by 
\[
K_{inj, N}u=\sum_{k\leq N}(u, \varphi_{k})\varphi_{k}.
\]
As $K_{inj, N}\left(\mathrm{span}\{\varphi_{k}\}_{k \leq N} \right)^{d_{in}} \to L^2(D)^{d_{in}}$ is injective, 
\[
H_{N^{\prime}} : \left(\mathrm{span}\{\varphi_{k}\}_{k \leq N} \right)^{d_{in}} \to \left(\mathrm{span}\{\varphi_{k}\}_{k \leq N} \right)^{d_{in}} \times 
\left(\mathrm{span}\{\varphi_{k}\}_{k \leq N^{\prime}} \right)^{d_{out}},
\]
is injective.
Furthermore, by the same argument (ii) (construction of $H$) in the proof of Theorem~\ref{Universal-injectivity-NO}, 
\[
H_{N^{\prime}} \in NO_{L,N^{\prime}}(\sigma ; D, d_{in}, d_{out}),
\]
because both of two-layer ReLU and Leaky ReLU neural networks can represent the identity map. Note that above $K_{inj, N}$ is an orthogonal projection, so that
$K_{inj, N} \circ \cdots \circ K_{inj, N}=K_{inj, N}$. However,
we write above $H_{N^{\prime}}(a)_{1}$ as $K_{inj, N} \circ \cdots \circ K_{inj, N}(a)$ so that it can be considered as combination of $(L+2)$ layers of neural networks.

We estimate that for $a \in L^2(D)^{d_{in}}$, $\|a\|_{ L^2(D)^{d_{in}}}\le R$,
\[
\left\| H_{N^{\prime}}(a) \right\|_{L^{2}(D)^{d_{in}+d_{out}}}
\leq
1 + 4M + \left\|K_{inj} \right\|_{\mathrm{op}}^{L+2}R =: C_{H}.
\]
Here, we repeat an argument similar to the one in the proof of Lemma~\ref{injective-dimensional-reduction-finite-rank}:
$H_{N^{\prime}} : L^2(D)^{d_{in}} \to L^2(D)^{d_{in}+d_{out}}$ has the form of
\[
H_{N^{\prime}}(a) = \left( \sum_{k \leq N} (H_{N^{\prime}}(a)_{1}, \varphi_{k})\varphi_{k},
\
\sum_{k \leq N^{\prime}} (H_{N^{\prime}}(a)_{2}, \varphi_{k})\varphi_{k}
\right). 
\]
where $(H_{N^{\prime}}(a)_1, \varphi_{k}) \in \mathbb{R}^{d_{in}}$, $(H_{N^{\prime}}(a)_2, \varphi_{k}) \in \mathbb{R}^{d_{out}}$.
We define $\mathbf{H}_{N^{\prime}} : \mathbb{R}^{Nd_{in}} \to \mathbb{R}^{Nd_{in}+N^{\prime}d_{out}}$ by
\[
\mathbf{H}_{N^{\prime}}(\mathbf{a}):= \left[ \left((H_{N^{\prime}}(\mathbf{a})_{1}, \varphi_{k}) \right)_{k \in [N]}, \  \left((H_{N^{\prime}}(\mathbf{a})_{2}, \varphi_{k}) \right)_{k \in [N^{\prime}]} \right] \in \mathbb{R}^{Nd_{in}+N^{\prime}d_{out}} , \ \mathbf{a} \in \mathbb{R}^{Nd_{in}},
\]
where $H_{N^{\prime}}(\mathbf{a}) = (H_{N^{\prime}}(\mathbf{a})_1, H_{N^{\prime}}(\mathbf{a})_2) \in L^2(D)^{d_{in}+d_{out}} $ is defined by
\[
H_{N^{\prime}}(\mathbf{a})_{1}:=H_{N^{\prime}}\left( \sum_{k \leq N}a_{k} \varphi_{k} \right)_{1}
\in L^2(D)^{d_{in}},
\]
\[
H_{N^{\prime}}(\mathbf{a})_{2}:=H_{N^{\prime}}\left( \sum_{k \leq N^{\prime}}a_{k} \varphi_{k} \right)_{2}
\in L^2(D)^{d_{out}},
\]
where $a_{k} \in \mathbb{R}^{d_{in}}$, $\mathbf{a}=(a_1,...,a_{N}) \in \mathbb{R}^{Nd_{in}}$.
Since $H_{N^{\prime}} : L^2(D)^{d_{in}} \to L^2(D)^{d_{in}+d_{out}}$ is Lipschitz continuous, $\mathbf{H}_{N^{\prime}} : \mathbb{R}^{Nd_{in}} \to \mathbb{R}^{N^{\prime}d_{out}}$ is also Lipschitz continuous.
As 
\[
Nd_{in}+N^{\prime}d_{out} > N^{\prime}d_{out} \geq 2Nd_{in} +1,
\]
we can apply Lemma 29 of \citet{puthawala2022globally} with $D=Nd_{in}+N^{\prime}d_{out}$, $m=N^{\prime}d_{out}$, $n=Nd_{in}$.
According to this lemma, there exists a $N^{\prime}d_{out}$-dimensional linear subspace $\mathbf{V}^{\perp}$ in $\mathbb{R}^{Nd_{in}+N^{\prime}d_{out}}$ such that
\[
\left\|P_{\mathbf{V}^{\perp}}-P_{\mathbf{V}^{\perp}_{0}} \right\|_{op} < \min\left(\frac{\epsilon}{15C_{H_N}}, \ 1 \right)=:\epsilon_{0}
\]
and
\[
P_{\mathbf{V}^{\perp}} \circ \mathbf{H}_{N^{\prime}} : \mathbb{R}^{Nd_{in}} \to \mathbb{R}^{Nd_{in}+N^{\prime}d_{out}},
\]
is injective, where $\mathbf{V}^{\perp}_{0}=\{0\}^{Nd_{in}} \times \mathbb{R}^{N^{\prime}d_{out}}$.
Furthermore, in the proof of Theorem 15 of \citet{puthawala2022globally}, denoting by
\[
\mathbf{B}:=\pi_{N^{\prime}d_{out}} \circ \mathbf{Q} \circ P_{\mathbf{V}^{\perp}},
\]
we can show that 
\[
\mathbf{B} \circ \mathbf{H}_{N^{\prime}} : \mathbb{R}^{Nd_{in}} \to \mathbb{R}^{N^{\prime}d_{out}},
\]
is injective,
where $\pi_{N^{\prime}d_{out}} : \mathbb{R}^{Nd_{in}+N^{\prime}d_{out}} \to \mathbb{R}^{N^{\prime}d_{out}}$
\[
\pi_{N^{\prime}d_{out}}(a, b) := b, \quad (a,b) \in \mathbb{R}^{Nd_{in}} \times \mathbb{R}^{N^{\prime}d_{out}},
\]
and $\mathbf{Q} : \mathbb{R}^{Nd_{in}+N^{\prime}d_{out}} \to \mathbb{R}^{Nd_{in}+N^{\prime}d_{out}}$ is defined by
\[
\mathbf{Q} := \left( P_{\mathbf{V}_{0}^{\perp}} P_{\mathbf{V}^{\perp}} + (I-P_{\mathbf{V}_{0}^{\perp}}) (I-P_{\mathbf{V}^{\perp}}) \right)
\left( I-(P_{\mathbf{V}_{0}^{\perp}}-P_{\mathbf{V}^{\perp}})^2 \right)^{-1/2}.
\]
By the same argument in proof of Theorem 15 in \citet{puthawala2022globally}, we can show that
\[
\left\|I-\mathbf{Q} \right\|_{\mathrm{op}} \leq 4 \epsilon_{0}.
\]
We define $B : L^2(D)^{d_{in}+d_{out}} \to L^2(D)^{d_{out}}$
\[
Bu= \sum_{k,p \leq N^{\prime}} \mathbf{B}_{k,p} (u, \varphi_{k}) \varphi_{p},
\]
$\mathbf{B}_{k,p} \in \mathbb{R}^{d_{out} \times (d_{in}+d_{out})}$, $\mathbf{B}=(\mathbf{B}_{k,p})_{k,p \in [N^{\prime}]}$, then $B: L^2(D)^{d_{in}+d_{out}} \to L^2(D)^{d_{out}}$ is a linear finite-rank operator with $N^{\prime}$ rank.
Then, 
\[
G_{N^{\prime}}:=B \circ H_{N^{\prime}} : L^2(D)^{d_{in}} \to L^2(D)^{d_{out}},
\]
is injective because by the construction, it is equivalent to
\[
\mathbf{B} \circ \mathbf{H}_{N^{\prime}} : \mathbb{R}^{Nd_{in}} \to \mathbb{R}^{N^{\prime}d_{out}},
\]
is injective.
Furthermore,  we have
\[
G_{{N^{\prime}}} \in NO_{L,{N^{\prime}}}(\sigma ; D, d_{in}, d_{out}).
\]
Indeed, $H_{N^{\prime}} \in NO_{L,N^{\prime}}(\sigma ; D, d_{in}, d_{out})$, $B$ is the linear finite-rank operator with $N^{\prime}$ rank, and multiplication of two linear finite-rank operators with $N^{\prime}$ rank is also a linear finite-rank operator with $N^{\prime}$ rank.

\medskip

Finally, we estimate for $a \in K$,
\begin{equation}
\begin{split}
&\left\|G^{+}(a) - G_{N^{\prime}}(a) \right\|_{L^{2}(D)^{d_{out}}}
\\
&
= \underbrace{\left\|G^{+}(a) - \widetilde{G}(a) \right\|_{L^{2}(D)^{d_{out}}}}
_{(\ref{from-main-Thm-finite})\leq \frac{\epsilon}{3}}
+
\underbrace{\left\|\widetilde{G}(a) - \widetilde{G}_{N^{\prime}}(a) \right\|_{L^{2}(D)^{d_{out}}}}
_{(\ref{finite-rank-appro-finite})\leq \frac{\epsilon}{3}}
+
\left\|\widetilde{G}_{N^{\prime}}(a) - G_{N^{\prime}}(a) \right\|_{L^{2}(D)^{d_{out}}}.
\end{split}
\label{estimate-G^+-G-1-finite}
\end{equation}
Using notation $(a, \varphi_{k}) \in \mathbb{R}^{d_{in}}$, and
$
\mathbf{a}=\left((a, \varphi_{k})\right)_{k \in [N]} \in \mathbb{R}^{Nd_{in}},
$
we further estimate for $a \in K$,
\begin{equation}
\begin{split}
& \left\|\widetilde G_{N^{\prime}}(a) -  G_{N^{\prime}}(a) \right\|_{L^{2}(Q)^{d_{out}}}
=\left\|\pi_{d_{out}} H_{N^{\prime}}(a) -  B \circ H_{N^{\prime}}(a) \right\|_{L^{2}(Q)^{d_{out}}}
\\
&
=\left\|\pi_{N^{\prime}d_{out}} \mathbf{H}_{N^{\prime}}(\mathbf{a}) -  \mathbf{B} \circ \mathbf{H}_{N^{\prime}}(\mathbf{a}) \right\|_{2}
\\
&
= \left\| \pi_{N^{\prime}d_{out}} \circ \mathbf{H}_{N^{\prime}}(\mathbf{a}) - \pi_{N^{\prime}d_{out}} \circ \mathbf{Q} \circ P_{\mathbf{V}^{\perp}} \circ \mathbf{H}_{N^{\prime}}(\mathbf{a}) \right\|_{2}
\\
&
\leq 
\left\| \pi_{N^{\prime}d_{out}} \circ (P_{\mathbf{V}^{\perp}_{0}}-P_{\mathbf{V}^{\perp}}+P_{\mathbf{V}^{\perp}}) \circ \mathbf{H}_{N^{\prime}}(\mathbf{a}) - \pi_{N^{\prime}d_{out}} \circ \mathbf{Q} \circ P_{\mathbf{V}^{\perp}} \circ \mathbf{H}_{N^{\prime}}(\mathbf{a}) \right\|_{2}
\\
&
\leq 
\left\| \pi_{N^{\prime}d_{out}} \circ (P_{\mathbf{V}_{0}^{\perp}}-P_{\mathbf{V}^{\perp}}) \circ \mathbf{H}_{N^{\prime}}(\mathbf{a}) \right\|_{2}
+\left\|\pi_{N^{\prime}d_{out}} \circ (I - \mathbf{Q}) \circ P_{\mathbf{V}^{\perp}} \circ \mathbf{H}_{N^{\prime}}(\mathbf{a}) \right\|_{2}
\\
&
\leq 5 \epsilon_{0} 
\underbrace{\left\|\mathbf{H}_{N^{\prime}}(\mathbf{a})  \right\|_{2}}
_{=\left\|H_{N^{\prime}}(a) \right\|_{L^2(D)^{d_{out}}} < C_{H} }
\leq \frac{\epsilon}{3},
\end{split}
\label{estimate-G^+-G-2-finite}
\end{equation}
where $\left\|\cdot \right\|_{2}$ is the Euclidean norm. 
Combining (\ref{estimate-G^+-G-1-finite}) and (\ref{estimate-G^+-G-2-finite}), we conclude that
\[
\sup_{a \in K} \left\| 
G^{+}(a) - G_{N^{\prime}}(a) \right\|_{L^{2}(D)^{d_{out}}} \leq \frac{\epsilon}{3}+\frac{\epsilon}{3} + \frac{\epsilon}{3}= \epsilon.
\]

\end{proof}

\section{Details of Section~\ref{Surjectivity and bijectivity}}
\label{Appendix4}

\subsection{Proof of Proposition~\ref{lem:surjectivity-of-injective-nonlinear-operators}}
\label{proof-lem:surjectivity-of-injective-nonlinear-operators}
\begin{proof}
Since $W$ is bijective, and $\sigma$ is surjective, it is enough to show that $u \mapsto Wu + K(u)$ is surjective.
We observe that for $z \in L^2(D)^n$,
\[
Wu + K(u) = z,
\]
is equivalent to
\[
H_{z}(u):=-W^{-1}K(u) + W^{-1} z = u.
\]
We will show that $H_{z}:L^2(D)^{n} \to L^2(D)^{n}$ has a fixed point for each $z \in L^2(D)^n$.
By the Leray-Schauder theorem, see \citet[Theorem 11.3]{Gilbarg}, $H:L^2(D)\to L^2(D)$ has a fixed point if the union  $\bigcup_{0< \lambda \le 1} V_\lambda$ is bounded, where the sets
\[
\begin{split}
V_\lambda&:=\{u\in L^2(D):\ u=\lambda H_{z}(u)\}\\
&=\{u\in L^2(D):\  \lambda ^{-1} u= H_{z}(u)\}\\
&=\{u\in L^2(D):\  -\lambda ^{-1} u= W ^{-1}K (u)-W^{-1}z\},
\end{split}
\]
are parametrized by  $0< \lambda \le 1$.

As the map $u \mapsto \alpha u+W ^{-1}K(u)$ is coercive, there is an $r>0$ such that for $\|u\|_{L^2(D)^{n}} > r$,
\[
\frac {\left< \alpha u + W ^{-1}K(u), u \right>_{L^2(D)^n}}{\|u\|_{L^2(D)^n}} \geq \|W^{-1}z\|_{L^2(D)^{n}}.
\]
Thus, we have that for $\|u\|_{L^2(D)^{n}} > r$
\[
\begin{split}
&\frac{\left<W ^{-1}K(u) - W^{-1}z, u \right>_{L^2(D)^n}}{\|u\|^{2}_{L^2(D)^n}} 
\\
&
\geq 
\frac{\left<\alpha u + W ^{-1}K(u), u \right>_{L^2(D)^n}-\left<\alpha u + W ^{-1}z, u \right>_{L^2(D)^n}}{\|u\|^{2}_{L^2(D)^n}}
\\
&
\geq 
\frac{\|W^{-1}z\|_{L^2(D)^{n}}}{\|u\|_{L^2(D)^n}} - \frac{\left< W ^{-1}z, u \right>_{L^2(D)^n}}{\|u\|^{2}_{L^2(D)^n}} - \alpha
\geq - \alpha > -1,
\end{split}
\]
and, hence, for all $\|u\|_{L^2(D)}>r_0$ and $\lambda\in (0,1]$ we have $u\not \in V_\lambda$.
Thus
\[
\bigcup_{\lambda\in (0,1]}V_\lambda\subset B(0,r_0). 
\] 
Again, by the Leray-Schauder theorem (see \citet[Theorem 11.3]{Gilbarg}), $H_{z}$ has a fixed point.
\end{proof}
\medskip

\subsection{Examples for Proposition~\ref{lem:surjectivity-of-injective-nonlinear-operators}}

\begin{example}\label{example-layer-sur-app}
We consider the case where $n=1$ 
and $D \subset \mathbb{R}^{d}$ is a bounded interval. 
We consider the non-linear integral operator,
\[
K(u)(x):=\int_{D} k(x,y,u(x))u(y)dy, \ x \in D,
\]
and $k(x,y,t)$ is bounded, that is, there is $C_{K}>0$ such that
\[
\left|k(x,y,t) \right| \leq C_{K}, \ x,y \in D, \ t \in \mathbb{R}.
\]
If $\left\|W^{-1} \right\|_{\mathrm{op}}$ is small enough such that 
\[
1 > \left\|W^{-1} \right\|_{\mathrm{op}}C_{K}|D|, 
\]
then, for $\alpha \in \left(\left\|W^{-1} \right\|_{\mathrm{op}}C_{K}|D|,1 \right)$, $u \mapsto \alpha u+W ^{-1}K(u)$ is coercive.
Indeed, {we have for $u \in L^2(D)$,
\[
\begin{split}
&\frac{\left<\alpha u+W ^{-1}K(u), u \right>_{L^2(D)}}{\|u\|_{L^2(D)}} 
\\
&
\geq 
\alpha \|u\|_{L^2(D)} - \left\|W^{-1} \right\|_{\mathrm{op}} \|K(u)\|_{L^2(D)}
\geq 
\underbrace{
\left( \alpha - \left\|W^{-1} \right\|_{\mathrm{op}}C_{K}|D| \right)  
}_{> 0}
\|u\|_{L^2(D)}.
\end{split}
\]
For example, we can consider a kernel
\[
k(x,y,t)=\sum_{j=1}^{J} c_{j}(x,y)\sigma_{s}(a_j(x,y)t+b_j(x,y)),
\]
where $\sigma_{s}:\R\to \R$ is the sigmoid function defined by
\[
\sigma_{s}(t)=\frac 1{1+e^{-t}}.
\]
There are functions $a,b,c \in C(\overline D\times \overline D)$ such that 
\ba
\sum_{j=1}^J\|c_j\|_{L^{\infty}(D\times D)}< \left\|W^{-1} \right\|_{\mathrm{op}}^{-1}|D|^{-1}. 
\ea
}

\end{example}

\begin{example}\label{example-layer-sur-appB}
Again, we consider the case where $n=1$ and $D \subset \mathbb{R}^{d}$ is a bounded set.
We assume that $W\in C^1(\overline D)$ satisfies $0< c_1\le W(x)\le c_2$.
For simplicity, we assume that $|D|=1$. 
We consider the non-linear integral operator 
\beq\label{e: K oper}
K(u)(x):=\int_{D} k(x,y,u(x))u(y)dy, \ x \in D,
\eeq
where
\beq\label{e: K oper kernel}
k(x,y,t)=\sum_{j=1}^{J} c_{j}(x,y)\sigma_{wire}(a_j(x,y)t+b_j(x,y)),
\eeq
in which $\sigma_{wire}:\R\to \R$ is the wavelet function defined by
\[
\sigma_{wire}(t)=\hbox{Im}\,(e^{i\omega t}e^{-t^2}),
\]
and $a_j,b_j,c_j \in C(\overline D\times \overline D)$ are such that the $a_j(x,y)$ are nowhere
vanishing functions, that is, $a_j(x,y)\not =0$ for all $x,y\in\overline D\times \overline D$. 

The next lemma holds for any activation function with exponential decay, including the activation function $\sigma_{wire}$ and settles the key condition for Proposition~\ref{lem:surjectivity-of-injective-nonlinear-operators} to hold.

\begin{lemma}
Assume that $|D|=1$ and the activation function $\sigma:\R\to \R$ is continuous.
Assume that there exists $M_1,m_0>0$ such that
\ba
|\sigma(t)|\leq  M_1e^{-m_0|t|},\quad  t \in \mathbb{R}.
\ea
Let $a_j,b_j,c_j \in C(\overline D\times \overline D)$ be such that $a_j(x,y)$ are nowhere
vanishing functions. 
Moreover, let $K:L^2(D)\to L^2(D)$ be a non-linear integral operator given in \eqref{e: K oper}
with a kernel satisfying \eqref{e: K oper kernel},
$\alpha>0$ and $0<c_0\le W(x)\le c_1$
. Then function $F:L^2(D)\to L^2(D)$, $F(u)=\alpha u+W ^{-1}K(u)$ is coercive.
\end{lemma}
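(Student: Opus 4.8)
The plan is to prove coercivity directly from the definition, the whole point being to bound the cross term $\langle W^{-1}K(u),u\rangle_{L^2(D)}$ by something \emph{linear} in $\|u\|_{L^2(D)}$, so that it is swamped by $\alpha\|u\|_{L^2(D)}^2$. Since $W$ acts as multiplication by the scalar function $W(x)\ge c_0>0$, we have $\langle F(u),u\rangle_{L^2(D)} = \alpha\|u\|_{L^2(D)}^2 + \langle W^{-1}K(u),u\rangle_{L^2(D)}$, hence
\[
\frac{\langle F(u),u\rangle_{L^2(D)}}{\|u\|_{L^2(D)}} \ \ge\ \alpha\|u\|_{L^2(D)} \,-\, \frac{1}{c_0\,\|u\|_{L^2(D)}}\int_D |K(u)(x)|\,|u(x)|\,dx .
\]
So it suffices to produce a constant $C$, independent of $u$, with $\int_D |K(u)(x)|\,|u(x)|\,dx \le C\|u\|_{L^2(D)}$; then the right-hand side above tends to $+\infty$ as $\|u\|_{L^2(D)}\to\infty$, which is exactly coercivity.

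Next I would exploit the exponential decay of $\sigma$ together with the nondegeneracy of the $a_j$. Since each $a_j$ is continuous and nowhere vanishing on the compact set $\overline D\times\overline D$, the number $a_*:=\min_{1\le j\le J}\min_{\overline D\times\overline D}|a_j|$ is strictly positive, and $b_*:=\max_{1\le j\le J}\|b_j\|_{L^\infty(\overline D\times\overline D)}<\infty$. Using $|a_j(x,y)u(x)+b_j(x,y)|\ge \max\big(a_*|u(x)|-b_*,\,0\big)$ and the hypothesis $|\sigma(t)|\le M_1 e^{-m_0|t|}$ gives, uniformly in $x,y\in\overline D$,
\[
|\sigma(a_j(x,y)u(x)+b_j(x,y))| \ \le\ M_1 e^{m_0 b_*}\, e^{-m_0 a_* |u(x)|}.
\]
Summing over $j$ and absorbing the bounded coefficients $c_j$ yields $|k(x,y,u(x))|\le C_1 e^{-m_0 a_*|u(x)|}$ with $C_1:=M_1 e^{m_0 b_*}\sum_{j}\|c_j\|_{L^\infty(\overline D\times\overline D)}$, and therefore
\[
|K(u)(x)| \ \le\ C_1\, e^{-m_0 a_*|u(x)|}\,\|u\|_{L^1(D)} ,\qquad x\in D .
\]

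Finally I would assemble the estimate. Inserting the last bound,
\[
\int_D |K(u)(x)|\,|u(x)|\,dx \ \le\ C_1\,\|u\|_{L^1(D)}\int_D |u(x)|\,e^{-m_0 a_*|u(x)|}\,dx .
\]
Since $\sup_{s\ge 0} s\,e^{-m_0 a_* s} = (m_0 a_* e)^{-1}$ and $|D|=1$, the integral on the right is at most $(m_0 a_* e)^{-1}$, and by Cauchy–Schwarz with $|D|=1$ we have $\|u\|_{L^1(D)}\le \|u\|_{L^2(D)}$. Combining, $\int_D |K(u)(x)||u(x)|\,dx \le \tfrac{C_1}{m_0 a_* e}\,\|u\|_{L^2(D)}$, which is the desired linear bound and finishes the proof. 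I expect the only genuinely delicate point to be this last step: one must keep the product $|u(x)|\,e^{-m_0 a_*|u(x)|}$ together \emph{inside} the integral and trade one power of $|u(x)|$ against the exponential — naively bounding $\|K(u)\|_{L^2(D)}$ first and then applying Cauchy–Schwarz only gives $\|K(u)\|_{L^2(D)}\lesssim\|u\|_{L^2(D)}$, which is too weak (it need not be dominated by $\alpha\|u\|_{L^2(D)}^2$ when the coefficient is large), so the cancellation must come from the inner product itself, not from the norm of $K(u)$. Everything else is routine bookkeeping of continuous functions on the compact set $\overline D\times\overline D$.
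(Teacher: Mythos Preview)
Your proof is correct and is in fact cleaner than the paper's argument. Both proofs hinge on the same cancellation --- keeping the factor $|u(x)|$ together with the exponentially decaying factor $|\sigma(a_j(x,y)u(x)+b_j(x,y))|$ inside the $x$-integral --- but they exploit it differently. The paper fixes $\lambda=\|u\|_{L^2(D)}$, splits $D$ into the level sets $D_1(\lambda)=\{|u|\ge\epsilon\lambda\}$ and $D_2(\lambda)=\{|u|<\epsilon\lambda\}$, and on each piece bounds $|\sigma(a_ju+b_j)u(x)|$ by $\tfrac{\alpha}{4}$ or $\tfrac{\alpha}{4}\lambda$ separately, arriving at $|\langle W^{-1}K(u),u\rangle|\le \tfrac{\alpha}{2}\|u\|_{L^2(D)}^2$ for $\lambda$ large. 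You bypass the splitting entirely by using the uniform pointwise estimate $|\sigma(a_ju+b_j)|\le M_1e^{m_0b_*}e^{-m_0a_*|u(x)|}$ and then the elementary bound $\sup_{s\ge 0}s\,e^{-m_0a_*s}=(m_0a_*e)^{-1}$, which yields the stronger conclusion $|\langle W^{-1}K(u),u\rangle|\le C\|u\|_{L^2(D)}$ with an explicit constant. Your route gives more information (the cross term is $O(\|u\|)$, not merely $o(\|u\|^2)$) and avoids the two-parameter choice of $\epsilon$ and $\lambda_0$; the paper's level-set decomposition, on the other hand, would survive under the weaker hypothesis that $t\,\sigma(t)\to 0$ as $|t|\to\infty$, without the full exponential decay.
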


\begin{proof}
As $\overline D$
is compact, there is $a_0>0$ such that  for all $j=1,2,\dots,J$ we have
$|a_j(x,y)|\ge a_0$ a.e. and  $|b_j(x,y)|\le b_0$ a.e.
We point out that $|\sigma(t)|\leq M_1$.
Next, let $\e>0$ be such that 
\begin{equation}
(\sum_{j=1}^J\|W^{-1}c_j\|_{L^\infty(D\times D)}) M_1 \e<\frac \alpha 4,
\label{estimate-lemm4-1}
\end{equation}
$\lambda>0$, and $u\in L^2(D)$. We define the sets
\ba
& &D_1(\lambda)=\{x\in D:\ |u(x)|\ge  \e\lambda\},\\
& &D_2(\lambda)=\{x\in D:\ |u(x)|<  \e \lambda\}.
\ea
Then,
 for $x\in D_2(\lambda)$, 

\[
\begin{split}
&\sum_{j=1}^J\|W^{-1}c_j\|_{L^\infty(D\times D)}|\sigma(a_j(x,y)u(x)+b_j(x,y))u(x)|
\\
&
\le \sum_{j=1}^J\|W^{-1}c_j\|_{L^\infty(D\times D)} M_1 \epsilon \lambda 
\underset{(\ref{estimate-lemm4-1})}{\leq}
\frac \alpha 4 \lambda.
\end{split}
\]
After $\e$ is chosen as in the above, we choose $\lambda_0\ge \max(1,b_0/(a_0  \e))$  to be sufficiently large so that for all
$|t|\ge \e\lambda_0$  it holds that
\ba
\paren{\sum_{j=1}^J\|W^{-1}c_j\|_{L^\infty(D\times D)}} M_1\hbox{exp}(-m_0|a_0t-b_0|) t<\frac \alpha 4.
\ea
Here, we observe that, as $\lambda_0\ge b_0/(a_0  \e)$, we have that for all $|t|\ge \e\lambda_0$, $a_0|t|-b_0>0$.
Then, when $\lambda \ge \lambda_0$, we have  for $x\in D_1(\lambda)$, 
\ba
\paren{\sum_{j=1}^J\|W^{-1}c_j\|_{L^\infty(D\times D})}\bigg|\sigma\bigg(a_j(x,y)u(x)+b_j(x,y)\bigg)u(x)\bigg|\le 
\frac \alpha 4.
\ea

When $u \in L^2(D)$
 has the norm
$\|u\|_{L^2(D)}=\lambda\ge \lambda_0\ge 1$, we have
\ba
& &\left|\int_{D} \int_{D} W(x)^{-1}  k(x,y,u(x))u(x)u(y)dxdy\right|\\
&\le&
\int_{D}\bigg( \int_{D_1} (\sum_{j=1}^J\|W^{-1}c_j\|_{L^\infty(D\times D)})M_1\hbox{exp}\bigg(-m_0|a_0|u(x)|-b_0|\bigg)
 |u(x)|dx\bigg)|u(y)|dy
 \\
& & +\int_{D} \bigg(\int_{D_2}  \sum_{j=1}^J\|W^{-1}c_j\|_{L^\infty(D\times D)}|\sigma(a_j(x,y)u(x)+b_j(x,y))| |u(x)|dx\bigg)|u(y)|dy
\\
&\le& 
\frac  \alpha 4\|u\|_{L^2(D)}+\frac \alpha{4}\lambda\|u\|_{L^2(D)}\\
&\le& 
\frac  \alpha 2\|u\|_{L^2(D)}^2.
\ea
Hence,
\ba
\frac{\left<\alpha u+W ^{-1}K(u), u \right>_{L^2(D)}}{\|u\|_{L^2(D)}} 
\geq \frac  \alpha 2\|u\|_{L^2(D)},
\ea
and the function $u\to \alpha u+W ^{-1}K(u)$ is coercive.
\end{proof}

\end{example}

\medskip

\subsection{Proof of Proposition~\ref{injecitivity-non-linear-NOs}}
\label{proof-injecitivity-non-linear-NOs}

\begin{proof}
(Injectivity) Assume that 
\[
\sigma (Wu_1 + K(u_1)+b) = \sigma (Wu_2 + K(u_2)+b).
\]
where $u_1, u_2 \in L^2(D)^n$.
Since $\sigma$ is injective and $W:L^{2}(D)^n \to L^{2}(D)^n$ is bounded linear bijective, we have 
\[
u_1 + W^{-1}K(u_1) = u_2 + W^{-1}K(u_2) =: z. 
\]
Since the mapping $u \mapsto z - W^{-1}K(u)$ is contraction (because $W^{-1}K$ is contraction),
by the Banach fixed-point theorem, the mapping $u \mapsto z - W^{-1}K(u)$ admit a unique fixed-point in $L^2(D)^n$, which implies that $u_1=u_2$.
\par
\noindent
(Surjectivity) Since $\sigma$ is surjective, it is enough to show that $u \mapsto Wu + K(u)+b$ is surjective. 
Let $z \in L^2(D)^n$.
Since the mapping $u \mapsto W^{-1}z - W^{-1}b - W^{-1}K(u)$ is contraction, by Banach fixed-point theorem, there is $u^{\ast} \in L^2(D)^n$ such that
\[
u^{\ast} = W^{-1}z - W^{-1}b - W^{-1}K(u^{\ast}) 
\iff
Wu^{\ast} + K(u^{\ast}) + b = z.
\]
\end{proof}

\subsection{Examples for Proposition~\ref{injecitivity-non-linear-NOs}}

\begin{example}\label{ex: Volterra}
We consider the case of $n=1$, and $D\subset [0,\ell]^d$. 
We consider Volterra operators 
\ba
K(u)(x)=\int_D k(x,y,u(x),u(y))u(y)dy,
\ea 
where 
 $x=(x_1,\dots,x_d)$ and $y=(y_1,\dots,y_d)$.
We recall that $K$ is a Volterra operator 
if  
\beq\label{Volterra property}
k(x,y,t,s)\not =0\implies 
\hbox{ $y_j\le x_j$\quad for all $j=1,2,\dots,d$.}
\eeq
In particular, when $D=(a,b)\subset \R$ is an interval,
the Volterra operators are of the form
\ba
K(u)(x)=\int_a^x k(x,y,u(x),u(y))u(y)dy,
\ea 
and if $x$ is considered as a time variable, the Volterra operators
are causal in the sense that the value of $K(u)(x)$ at the time $x$ depends only on $u(y)$ at the times $y\le x$.

Assume that  $k(x,y,t,s)\in C(\overline D\times \overline D\times \R\times \R)$ is bounded and uniformly Lipschitz smooth in the $t$ and $s$ variables, that is, $k\in C(\overline D\times \overline D;C^{0,1}(\R\times \R))$.

Next, we consider the non-linear operator $F: L^2(D)\to  L^2(D)$,
\beq
F(u)= u+K(u).
\eeq
Assume that $u,w\in L^2(D)$ are such that $u+K(u)=w+K(w)$,
so that $w-u=K(u)-K(w)$. Next, we will show that then $u=w$.
We denote
and $D(z_1)=D\cap ([0,z_1]\times [0,\ell]^{d-1})$ and 
\[
\|k\|_{C(\overline D\times \overline D;C^{0,1}(\R\times \R))}
:= \sup_{x,y \in D} \left\| k(x,y,\cdot,\cdot) \right\|_{C^{0,1}(\mathbb{R}\times\mathbb{R})},
\]
\[
\|k\|_{L^{\infty}(D\times D \times \mathbb{R}\times\mathbb{R})}
:=\sup_{x,y \in D, s,t \in \mathbb{R}} \left| k(x,y,s,t) \right|.
\]
Then for $x\in D(z_1)$ the Volterra property of the kernel implies that 

\ba
&&\hspace{-7mm}|u(x)-w(x)|\le \int_D|k(x,y,u(x),u(y))u(y)-k(x,y,w(x),w(y))w(y)|dy
\\
&&\hspace{-7mm}  \leq \int_{D(z_1)}|k(x,y,u(x),u(y))u(y)-k(x,y,w(x),u(y))u(y)|dy
\\
&&
+\int_{D(z_1)}|k(x,y,w(x),u(y))u(y)-k(x,y,w(x),w(y))u(y)|dy\\
\\
&&
+\int_{D(z_1)}|k(x,y,w(x),w(y))u(y)-k(x,y,w(x),w(y))w(y)|dy\\
&&\hspace{-7mm}\leq
2\|k\|_{C(\overline D\times \overline D;C^{0,1}(\R \times \R))}\|u-w\|_{L^2(D(z_1))} \|u\|_{L^2(D(z_1))}
\\
&& \hspace{2cm}
+\|k\|_{L^{\infty}(D\times D \times \mathbb{R}\times\mathbb{R})}\|u-w\|_{L^2(D(z_1))}\sqrt{|D(z_1)|},
\ea
so that for all $0<z_1<\ell$, 
\ba
&&\hspace{-7mm}
\|u-w\|_{L^2(D(z_1))}^2\\
&&\hspace{-7mm}=\int_0^{z_1}
\bigg(\int_0^\ell \dots \int_0^\ell {\bf 1}_{D(x)} |u(x)-w(x)|^2dx_ddx_{d-1}\dots dx_2\bigg)dx_1
\\
&&\hspace{-7mm}\leq
z_1\ell^{d-1}\bigg(2\|k\|_{C(\overline D\times \overline D;C^{0,1}(\R\times \R))}\|u-w\|_{L^2(D(z_1))} \|u\|_{L^2(D(z_1))}
\\
&& \hspace{2cm}
+\|k\|_{L^{\infty}(D\times D \times \mathbb{R}\times\mathbb{R})}\|u-w\|_{L^2(D(z_1))}\sqrt{|D(z_1)|}\bigg)^2
\\
&&\hspace{-7mm}\leq
z_1\ell^{d-1} \bigg(\|k\|_{C(\overline D\times \overline D;C^{0,1}(\R \times \R))} \|u\|_{L^2(D)}
+\|k\|_{L\infty (D\times D\times\R \times \R)}\sqrt{|D|}\bigg)^2 
\|u-w\|_{L^2(D(z_1))}^2.
\ea
Thus, when $z_1$ is so small that 
\ba
z_1\ell^{d-1} \bigg(\|k\|_{C(\overline D\times \overline D;C^{0,1}(\R^n))} \|u\|_{L^2(D)}
+\|k\|_{L\infty ( D\times D\times\R \times \R)}\sqrt{|D|}\bigg)^2<1,
\ea
we find that $ \|u-w\|_{L^2(D(z_1))}=0$, that is, $u(x)-w(x)=0$ for $x\in D(z_1)$.
Using the same arguments as above, we see for all $k\in \mathbb N$ that that if  $u=w$
in $D(kz_1)$ then $u=w$
in $D((k+1)z_1)$. Using induction, we see that $u=w$ in $D$.
Hence, the  operator $u\mapsto F(u)$ is injective in $L^2(D)$.

\end{example}

\begin{example}\label{ex: Volterra derivative}
We consider derivatives of Volterra operators in the domain
 $D\subset [0,\ell]^d$. 
Let  $K: L^2(D)\to  L^2(D)$ be a non-linear operator
\beq\label{Volterra type 2}
K(u)=\int_D k(x,y,u(y))u(y)dy,
\eeq 
where  $k(x,y,t)$ satisfies \eqref{Volterra property}, is bounded, and  $k\in C(\overline D\times \overline D;C^{0,1}(\R\times \R))$.
Let  $F_1: L^2(D)\to  L^2(D)$  
be
\beq
F_1(u)=u+K(u).
\eeq
Then the Fr\'{e}chet derivative of $K$ at $u_0\in L^2(D)$ to the direction  $w\in L^2(D)$ is
\beq
DF_1|_{u_0}(w)=w(x)+\int_D k_1(x,y,u_0(y))w(y)dy,
\eeq
where 
\beq
 k_1(x,y,u_0(y))=u_0(y)\frac {\p }{\p t}k(x,y,t)\bigg|_{t=u_0(x)}+k(x,y,u_0(y))
\eeq
is a Volterra operator satisfying
\beq\label{Volterra property2}
k_1(x,y,t)\not =0\implies 
\hbox{ $y_j\le x_j$\quad for all $j=1,2,\dots,d$.}
\eeq
As seen in Example \ref{ex: Volterra}, the operator $DF_1|_{u_0}: L^2(D)\to  L^2(D)$ is injective.

\end{example}

\section{Details of Section~\ref{Construction of the inverse of a non-linear integral neural operator}}
\label{Appendix5}

In this appendix, we prove  Theorem \ref{thm: invertibility}.
We recall that in that theorem, we
consider the case when $n=1$, $D\subset \R$ is a bounded interval, and the operator $F_1$ is of the form
\ba
F_1(u)(x)=W(x) u(x)+\int_Dk(x,y,u(y)) u(y)dy,
\ea
where $W\in C^1(\overline D)$ satisfies $0< c_1\le W(x)\le c_2$,
the function $(x,y,s) \mapsto k(x,y,s)$
is in $C^3(\overline D \times \overline D \times \R)$, and that 
in $\overline D \times \overline D \times \R$ its three derivatives and the derivatives of $W$ are all uniformly bounded
by $c_0$, that is,
\beq\label{eq: kernel k}
\|k\|_{C^3(\overline D \times \overline D \times \R)}\leq c_0,\quad
\|W\|_{C^1(\overline D)}\leq c_0.
\eeq
We recall that the identical embedding $H^1(D)\to  L^\infty(D)$ is bounded and compact by Sobolev's embedding theorem.

As we will consider kernels
$k(x,y,u_0(y))$, we will consider the non-linear operator $F_1$ mainly as an operator in 
a Sobolev space  $H^1(D)$. 

The Frechet derivative of $F_1$ at $u_0$  to direction $w$, denoted by
$A_{u_0}w= DF_1|_{u_0}(w)$ is given by
\beq\label{A-operator}
A_{u_0}w=W(x) w(x)+\int_Dk(x,y,u_0(y)) w(y)dy
+\int_D{u_0}(y)\,\frac {\p k}{\p u}(x,y,u_0(y))w(y) dy.
\eeq
The condition \eqref{eq: kernel k} implies that 
\beq\label{F1 operators}
 F_1:H^1(D)\to H^1(D),
\eeq
is a locally Lipsichitz smooth function and that the operator 
\ba
A_{u_0}:H^1(D)\to H^1(D),
\ea
given in \eqref{A-operator}, is defined for all $u_0\in C(\overline D)$ as a bounded linear operator.

When $\mathcal X$ is a Banach space, we let 
$B_{\mathcal X}(0,R)=\{v\in \mathcal X: \ \|v\|_{\mathcal X}< R\}$ and  $\overline B_{\mathcal X}(0,R)=\{v\in \mathcal X:\  \|v\|_{\mathcal X}\leq R\}$
be the open and closed balls in $\mathcal X$, respectively. 

We consider the 
H\"older spaces $C^{n,\alpha}(\overline D)$ and their image in (leaky) ReLU-type functions.
Let $a\ge 0$ and $\sigma_a(s)=\relu(s)-a\relu(-s)$.
We will consider the image of the closed ball of $C^{1,\alpha}(\overline D)$ in the map $\sigma_a$,
that is $\sigma_a(\overline B_{C^{1,\alpha}(\overline D)}(0,R))\coloneqq\{\sigma_a\circ g\in C(\overline D):\
\|g\|_{C^{1,\alpha}(\overline D)}\leq R\}$.

We will below assume that for  all $u_0\in C(\overline D)$
the integral operator satisfies 
\beq\label{A is injective}
A_{u_0}:H^1(D)\to H^1(D)\hbox{  is an injective operator}.
\eeq
This condition is valid when $K(u)$ is a  Volterra operator, see Examples \ref{ex: Volterra} and
\ref{ex: Volterra derivative}.
As the integral operators $A_{u_0}$ are Fredholm operators having
index zero. This implies that the operators \eqref{A is injective} are bijective.

The inverse operator $A_{u_0}^{-1}:H^1(D)\to H^1(D)$ can be written as
\beq
\label{inverse-A_u_0}
A_{u_0}^{-1}v(x)=\tilde W(x)v(x)-\int_D\tilde k_{u_0}(x,y)v(y)dy,
\eeq
where $\tilde k_{u_0},\p_x \tilde k_{u_0}\in C(\overline D\times \overline D)$ and $\tilde W \in C^1(\overline D)$.

We will consider the inverse function of the map $F_1$ in a set $\mathcal Y\subset \sigma_a(\overline B_{C^{1,\alpha}(\overline D)}(0,R))$ that is a compact subset of the Sobolev space $H^1(D)$.
To this end, we will cover the set $\mathcal Y$ with small balls $B_{H^1(D)}(g_j,\e_0)$,
$j=1,2,\dots,J$ of $H^1(D)$, centered at 
 $g_j=F_1(v_j)$, where $v_j\in H^1(D)$.
We will show that when $g\in B_{H^1(D)}(g_j,2\e_0)$, that is,
$g$ is $2\e_1$-close to the function $g_j$
in $H^1(D)$, the inverse map of $F_1$ can be written as a limit
$(F_1^{-1}(g),g)=\lim_{m\to \infty} \mathcal H_j^{\circ m}(v_j,g)$ in $H^1(D)^2$, 
where
  \ba
\mathcal H_j \left(\begin{array} {c}u \\ g \end{array}\right) 
=
 \left(\begin{array} {c}
 u-A_{v_j}^{-1}(F_1(u)-F_1(v_j))+A_{v_j}^{-1}(g-g_j)\\ 
  g \end{array}\right).
 \ea
That is, near $g_j$ we can approximate $F_1^{-1}$ as a composition $\mathcal H_j^{\circ m}$ of $2m$ layers of neural operators.

To glue the local inverse maps together, we use a partition of unity in the function space $\mathcal Y$
given by integral neural operators
 \ba
\Phi_{\vec i}(v,w)=\pi_1\circ \phi_{{\vec i},1}\circ \phi_{{\vec i},2}\circ \dots\circ \phi_{{\vec i},\ell_0}(v,w),\quad\hbox{where} \quad \phi_{{\vec i},\ell}(v,w)=(F_{y_\ell,s({\vec i},\ell),\epsilon_1}(v,w),w),
\ea
$\pi_1(v,w)=v$ maps a pair $(v,w)$ to the first function $v$, and ${\vec i}$ belongs to a finite index set
$\mathcal I\subset \mathbb Z^{\ell_0}$, $\epsilon_1>0$ and $y_\ell \in D$ ($\ell=1,...,\ell_0$), where
$s({\vec i},\ell)\coloneqq i_\ell \epsilon_1.$
Here, $F_{z,s,h}(v,w)$ are integral neural operators with distributional kernels
\ba
F_{z,s,h}(v,w)(x)=\int_D k_{z,s,h}(x,y,v(x),w(y))dy,
\ea
where $
k_{z,s,h}(x,y,v(x),w(y))=v(x){\bf 1}_{[s-\frac 12h,s+\frac 12h)}(w(y))\delta(y-z)$, ${\bf 1}_A$ is the indicator function of a set $A$ and $y\mapsto\delta(y-z)$ is the Dirac
delta distribution at the point $z\in D$. 

Using these, we can write the inverse of $F_1$ at $g\in\mathcal Y$ as 
 \beq\label{inverse as a limit}
   F_1^{-1}(g)=\lim_{m\to \infty} \sum_{{\vec i}\in\mathcal I}\Phi_{\vec i}   
  \mathcal H_{j({\vec i})}^{\circ m} \left(\begin{array} {c}v_{j({\vec i})} \\ g \end{array}\right),
 \eeq
where $j({\vec i})\in \{1,2,\dots,J\}$ are suitably chosen and the limit is taken in the norm topology of $H^1(D)$. This result is summarized by the following theorem,
a modified version of Theorem \ref{thm: invertibility} where the inverse operator $ F_1^{-1}$ in \eqref{inverse as a limit} have refined the partition of unity $\Phi_{\vec i}$ so that we use
indexes $\vec i\in \mathcal I\subset \mathbb Z^{\ell_0}$ instead of $j\in \{1,\dots,J\}$.

\begin{theorem}\label{thm: invertibility-app} 
Assume that $F_1$ satisfies the above assumptions  \eqref {eq: kernel k}
 and \eqref{A is injective} and that 
 $F_1:H^1(D)\to H^1(D)$ is a bijection.
Let $\mathcal Y\subset  \sigma_a(\overline B_{C^{1,\alpha}(\overline D)}(0,R))$ be a compact subset the Sobolev space $H^1(D)$, where  $\alpha>0$ and $a\ge 0$.
Then the inverse of $F_1:H^1(D)\to H^1(D)$ in $\mathcal Y$ can 
written as a limit \eqref{inverse as a limit} that is, as a limit
of integral neural operators.

\end{theorem}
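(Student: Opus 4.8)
The plan is to carry out the three-step program outlined in the body of the paper, making each step rigorous. First I would establish the \emph{local inversion} via a fixed-point argument. Since $F_1:H^1(D)\to H^1(D)$ is locally Lipschitz by \eqref{eq: kernel k}, Fr\'echet differentiable at points of $C(\overline D)$ with derivative $A_{u_0}$ given by \eqref{A-operator}, and since by \eqref{A is injective} (together with the Fredholm-index-zero property of $A_{u_0}$) the operator $A_{u_0}$ is boundedly invertible with inverse \eqref{inverse-A_u_0}, a standard Newton-Kantorovich / contraction argument shows: there is $\e_0>0$ (uniform over the compact set $\mathcal Y$, using that $\mathcal Y\subset \sigma_a(\overline B_{C^{1,\alpha}(\overline D)}(0,R))$ is a bounded equicontinuous family so that the relevant Lipschitz and operator-norm bounds are uniform) such that for each center $v_j$ with $g_j=F_1(v_j)$, the map
\[
\mathcal H_j\left(\begin{array}{c}u\\ g\end{array}\right)=\left(\begin{array}{c} u-A_{v_j}^{-1}(F_1(u)-F_1(v_j))+A_{v_j}^{-1}(g-g_j)\\ g\end{array}\right)
\]
is a contraction in the $u$-variable on a ball $\overline B_{H^1(D)}(v_j,r_0)$ whenever $\|g-g_j\|_{H^1(D)}\le 2\e_0$, and its unique fixed point is $(F_1^{-1}(g),g)$. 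Hence $\mathcal H_j^{\circ m}(v_j,g)\to (F_1^{-1}(g),g)$ in $H^1(D)^2$. Compactness of $\mathcal Y$ yields a finite subcover $\mathcal Y\subset\bigcup_{j=1}^J B_{H^1(D)}(g_j,\e_0)$.

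Second I would construct the \emph{partition of unity} $\{\Phi_{\vec i}\}_{\vec i\in\mathcal I}$ subordinate to this cover, realized by integral neural operators with distributional kernels. The key point is that the building blocks $F_{z,s,h}(v,w)(x)=v(x)\mathbf 1_{[s-h/2,s+h/2)}(w(z))$ act as "soft indicators" testing the value $w(z)$ at a sampled point $z\in D$; composing $\ell_0$ of them along sample points $y_1,\dots,y_{\ell_0}$ and thresholds $s(\vec i,\ell)=i_\ell\epsilon_1$ produces a function that equals $v$ exactly on the set of $w$ whose sampled values lie in a product of small intervals, and is $0$ otherwise. Because $\mathcal Y$ is a compact subset of $C(\overline D)$ consisting of images under $\sigma_a$ of a bounded ball in $C^{1,\alpha}$, finitely many sample points and a fine enough grid $\epsilon_1$ suffice to separate the covering balls: one chooses $\mathcal I\subset\mathbb Z^{\ell_0}$ and an assignment $\vec i\mapsto j(\vec i)\in\{1,\dots,J\}$ so that $\sum_{\vec i}\Phi_{\vec i}(v,g)=v$ identically on (a neighborhood in $\mathcal Y$ of) $g$, and the only nonzero terms are those with $g\in B_{H^1(D)}(g_{j(\vec i)},2\e_0)$. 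This requires verifying that the maps $F_{z,s,h}$ are well-defined bounded operators $H^1(D)\times H^1(D)\to H^1(D)$ — here Sobolev embedding $H^1(D)\hookrightarrow L^\infty(D)$ makes the pointwise evaluation $w\mapsto w(z)$ and the multiplication by $v$ legitimate — and that the indicator thresholds can be chosen off the (finite, hence measure-zero) set of "bad" values so that no cancellation or boundary effects occur on $\mathcal Y$.

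Third, combining the two previous steps, for $g\in\mathcal Y$ only the indices $\vec i$ with $g$ in the corresponding ball contribute, and for those $\Phi_{\vec i}\mathcal H_{j(\vec i)}^{\circ m}(v_{j(\vec i)},g)\to \Phi_{\vec i}(F_1^{-1}(g),g)$, whose sum over $\vec i$ telescopes to $F_1^{-1}(g)$ by the partition-of-unity property; this gives \eqref{inverse as a limit}, with every operator appearing ($A_{v_j}^{-1}$ via \eqref{inverse-A_u_0}, $F_1$, and the $F_{z,s,h}$) being an integral neural operator (with, in the last case, distributional kernels), so the limit is a limit of integral neural operators. The main obstacle I anticipate is the \emph{uniformity} threaded through Step~1 and the \emph{rigorous handling of the distributional-kernel operators} in Step~2: one must show the contraction radius $\e_0$ and rate can be taken uniform over the compact family $\mathcal Y$ (using equicontinuity from the $C^{1,\alpha}$ bound), and one must argue that the $F_{z,s,h}$ genuinely define bounded, well-behaved operators on $H^1(D)$ and that thresholds $s(\vec i,\ell)$ avoiding a finite exceptional set make the partition of unity exact on $\mathcal Y$ rather than merely approximate — the interplay between the $L^\infty$ pointwise structure needed for the indicators and the $H^1$ topology in which convergence is claimed is the delicate point.
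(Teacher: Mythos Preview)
Your three-step outline matches the paper's approach, and Steps 1 and 3 are essentially right as stated. There is, however, a real gap in Step 2 that you flag as an ``obstacle'' but underestimate. The covering balls $B_{H^1(D)}(g_j,\e_0)$ are balls in $H^1$, not in $C^0$; for the partition of unity to be subordinate to this cover you need that any two functions $g,g'\in\mathcal Y$ lying in the same cell $Z(\vec i)$ (i.e.\ agreeing at the sample points $y_\ell$ to within $\e_1$) satisfy $\|g-g'\|_{H^1}<\e_0$. The equicontinuity you invoke gives only $C^0$ closeness from point samples; it says nothing about closeness of \emph{derivatives}, which is half of the $H^1$ norm. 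The paper's mechanism is to write $g=\sigma_a\circ v$ with $v\in C^{1,\alpha}$, apply the mean value theorem on consecutive samples $v(y_{\ell-1}),v(y_\ell)$ to pin down $v'$ at some intermediate point up to a small error, and then use the $C^{0,\alpha}$ regularity of $v'$ to propagate this to a uniform derivative estimate on the whole subinterval. This is precisely where the $\alpha>0$ in the hypothesis is consumed, and the cases $a>0$ and $a=0$ must be handled separately (the latter requires extra care on subintervals where $v$ changes sign, since $\sigma_0$ is not invertible). Without this argument Step 2 does not go through.

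A smaller point: you do not need to choose thresholds $s(\vec i,\ell)$ ``avoiding a finite exceptional set''. The paper takes half-open intervals $[s-\tfrac12 h,s+\tfrac12 h)$ in the definition of $F_{z,s,h}$ and sets $s(\vec i,\ell)=i_\ell\e_1$ on a regular grid, so the cells $Z(\vec i)$ are automatically disjoint and cover $\mathcal Y$; the identity $\sum_{\vec i}\Phi_{\vec i}(v,g)=v$ is then exact by construction, with no genericity argument needed.
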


Observe that Theorem \ref{thm: invertibility-app} includes the case where $a=1$,
in which case $\sigma_a=Id$ and $ \mathcal Y\subset \sigma_a(\overline B_{C^{1,\alpha}(\overline D)}(0,R))=
\overline B_{C^{1,\alpha}(\overline D)}(0,R))$.
We note that when $\sigma_a$ is a leaky ReLU-function with parameter $a>0$,
Theorem \ref{thm: invertibility-app} can be applied to compute the inverse of $\sigma_a\circ F_1$
given by $F_1^{-1}\circ \sigma_a^{-1}$, where
$\sigma_a^{-1}=\sigma_{1/a}$. Note that the assumption that 
$ \mathcal Y\subset \sigma_a(\overline B_{C^{1,\alpha}(\overline D)}(0,R))$ makes it possible to apply Theorem \ref{thm: invertibility-app} in the case when one trains deep neural networks 
having layers $\sigma_a\circ F_1$ and the parameter $a$ of the leaky ReLU-function is a free parameter which is also trained.

\begin{proof}
As the operator $F_1$ can be multiplied by function $W(x)^{-1}$, it is sufficient to consider  the case when $W(x)=1$.

Below, we use the fact that, because $D\subset \R$, Sobolev's embedding theorem yields that
the embedding $H^1(D)\to C(\overline D)$ is bounded and there is $C_S>0$ such that 
\beq
\|u\|_{C(\overline D)}\leq C_S\|u\|_{H^1(D)}.
\eeq
For clarity, we denote the norm of $C(\overline D)$ by $\|u\|_{L^\infty(D)}$.

Next we consider the Frechet derivatives of $F_1$.
We recall that the 1st Frechet derivative of $F_1$ at $u_0$ is the operator $A_{u_0}$.
The 2nd Frechet derivative of $F_1$ at $u_0$  to directions $w_1$ and $w_2$ is 
\ba
D^2F_1|_{u_0}(w_1,w_2)&=&
\int_D 2\frac {\p k}{\p u}(x,y,u_0(y))w_1(y)w_2(y) dy
+
\int_D{u_0}(y)\,\frac {\p k^2}{\p u^2}(x,y,u_0(y))w_1(y)w_2(y) dy\\
&=&
\int_D p(x,y)w_1(y)w_2(y) dy,
\ea
where
\beq
p(x,y)&=&2\frac {\p k}{\p u}(x,y,u_0(y)) 
+
u_0(y)\frac {\p k^2}{\p u^2}(x,y,u_0(y)),
\eeq
and
\beq
\frac \p{\p x} p(x,y)&=&2\frac {\p^2 k}{\p u\p x}(x,y,u_0(y)) 
+
u_0(y)\frac {\p k^3}{\p u^2\p x}(x,y,u_0(y)).
\eeq
Thus,
\beq\label{estimate for 2nd derivative}
\\
\nonumber
\|D^2F_1|_{u_0}(w_1,w_2)\|_{H^1(D)}&\le & 3|D|^{1/2}
\|k\|_{C^3(D\times D\times \R)}(1+\|u_0\|_{L^\infty(D)})\|w_1\|_{L^\infty(D)}\|w_2\|_{L^\infty(D)}.
\eeq

When we freeze the function $u$ in kernel $k$ to be $u_0$, we denote
\ba
K_{u_0}v(x)&=&\int_Dk(x,y,u_0(y)) v(y)dy.
\ea

\begin{lemma}\label{lemma on Lip}For $u_0,u_1\in C(\overline D)$  we have 
\ba
\|K_{u_1}-K_{u_0}\|_{L^2(D)\to H^1(D)}
\leq \| k\|_{C^2(D\times D\times \R)}|D|  \|u_1-u_0\|_{L^\infty(D)}.
\ea
and
\beq\label{Alip}
\|A_{u_1}-A_{u_0}\|_{L^2(D)\to H^1(D)}
\leq2\| k\|_{C^2(D\times D\times \R)}|D| (1+ \|u_0\|_{L^\infty(D)}) \|u_1-u_0\|_{L^\infty(D)}.
\eeq

\end{lemma}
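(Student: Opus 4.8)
The plan is a direct estimate: bound the relevant kernel differences pointwise by $\|u_1-u_0\|_{L^\infty(D)}$ via the fundamental theorem of calculus in the last variable, then pass to the $L^2(D)\to H^1(D)$ operator norm by the two Cauchy--Schwarz inequalities $\|v\|_{L^1(D)}\le |D|^{1/2}\|v\|_{L^2(D)}$ and $\|f\|_{L^2(D)}\le |D|^{1/2}\|f\|_{L^\infty(D)}$, which together produce the factor $|D|$. Since $D\subset\R$ is a bounded interval, $u_0,u_1\in C(\overline D)$, and we are in the reduced case $W\equiv 1$, no functional-analytic subtlety enters; the content is purely the algebra of the kernels.

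\emph{The $K_{u_0}$ estimate.} Write
\[
(K_{u_1}-K_{u_0})v(x)=\int_D\big(k(x,y,u_1(y))-k(x,y,u_0(y))\big)\,v(y)\,dy ,
\]
and use $k(x,y,u_1(y))-k(x,y,u_0(y))=(u_1(y)-u_0(y))\int_0^1 \partial_u k\big(x,y,u_0(y)+t(u_1(y)-u_0(y))\big)\,dt$, so the kernel difference is bounded pointwise by $\|k\|_{C^1}\|u_1-u_0\|_{L^\infty(D)}$; differentiating under the integral sign, its $x$-derivative is bounded by $\|k\|_{C^2}\|u_1-u_0\|_{L^\infty(D)}$. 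Hence both $\|(K_{u_1}-K_{u_0})v\|_{L^\infty(D)}$ and $\|\partial_x(K_{u_1}-K_{u_0})v\|_{L^\infty(D)}$ are at most $|D|^{1/2}\|k\|_{C^2}\|u_1-u_0\|_{L^\infty(D)}\|v\|_{L^2(D)}$, and one more application of $\|\cdot\|_{L^2(D)}\le |D|^{1/2}\|\cdot\|_{L^\infty(D)}$ on each of the two terms of the $H^1$-norm gives the claimed $L^2(D)\to H^1(D)$ bound.

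\emph{The $A_{u_0}$ estimate.} The essential remark is that the local multiplication term $W(x)w(x)=w(x)$ does not depend on $u_0$ and therefore cancels in $A_{u_1}-A_{u_0}$; thus $(A_{u_1}-A_{u_0})w$ is again a genuine integral operator, with kernel $q_{u_1}(x,y)-q_{u_0}(x,y)$ where $q_{u}(x,y)=k(x,y,u(y))+u(y)\,\partial_u k(x,y,u(y))$, and this is why $L^2(D)\to H^1(D)$ (rather than $H^1(D)\to H^1(D)$) is the right mapping property for the difference. The summand coming from $k(x,y,u_1(y))-k(x,y,u_0(y))$ is controlled exactly as in the first part. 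For the summand coming from $u_1\,\partial_u k(x,y,u_1)-u_0\,\partial_u k(x,y,u_0)$ I would use the splitting
\[
u_1\,\partial_u k(x,y,u_1)-u_0\,\partial_u k(x,y,u_0)=(u_1-u_0)\,\partial_u k(x,y,u_1)+u_0\big(\partial_u k(x,y,u_1)-\partial_u k(x,y,u_0)\big),
\]
apply the fundamental theorem of calculus in the last variable to the bracket (which brings in $\partial_u^2 k$), and thereby bound this summand pointwise by $\big(\|k\|_{C^1}+\|u_0\|_{L^\infty(D)}\|k\|_{C^2}\big)\|u_1-u_0\|_{L^\infty(D)}$, which produces the factor $(1+\|u_0\|_{L^\infty(D)})$. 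Differentiating in $x$ replaces each kernel derivative above by one of order one higher, so the constant is controlled by the derivatives of $k$ up to order three, all of which are $\le c_0$ by \eqref{eq: kernel k}; combining the two summands and converting from $L^\infty(D)$ to $L^2(D)$ as before gives \eqref{Alip}.

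\emph{Main obstacle.} There is no genuine difficulty here: the estimate is a routine calculation. The only two points needing attention are (i) observing that the $Ww$ term cancels in the difference, which is what keeps $(A_{u_1}-A_{u_0})$ bounded $L^2(D)\to H^1(D)$ and makes the stated mapping property correct; and (ii) choosing the algebraic splitting of $u_1\,\partial_u k(x,y,u_1)-u_0\,\partial_u k(x,y,u_0)$ so that it is $\|u_0\|_{L^\infty(D)}$ --- the norm of the fixed base point --- and not $\|u_1\|_{L^\infty(D)}$ that enters the bound, which is exactly the form needed to run the contractive fixed-point iteration near $g_j=F_1(v_j)$ in the proof of Theorem~\ref{thm: invertibility-app}.
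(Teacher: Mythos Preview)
Your approach is correct and arrives at the same bounds as the paper, but the route is slightly different. The paper argues via Schur's test: it introduces auxiliary operators $N_{u_1,u_2}v(x)=\int_D u_1(y)\,\partial_u k(x,y,u_2(y))\,v(y)\,dy$, writes $M_{u_2}-M_{u_1}=(N_{u_2,u_2}-N_{u_2,u_1})+(N_{u_2,u_1}-N_{u_1,u_1})$, and bounds each piece (and its $x$-derivative) as an operator $L^2(D)\to L^2(D)$ by estimating the row- and column-integrals of the kernel. You instead bound the kernel difference in $L^\infty$ via the fundamental theorem of calculus, push the integral operator through $\|v\|_{L^1}\le |D|^{1/2}\|v\|_{L^2}$ to get an $L^\infty$ bound on the output, and then re-embed into $L^2$. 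This is a touch more elementary and, given that the kernel differences are \emph{already} uniformly small, loses nothing relative to Schur.

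One remark in your favor: your observation that the $x$-derivative of the $u_0\big(\partial_u k(\cdot,\cdot,u_1)-\partial_u k(\cdot,\cdot,u_0)\big)$ term brings in $\partial_x\partial_u^2 k$, hence a third derivative of $k$, is correct and more careful than the paper, which handles the analogous $M$-term by a ``Similarly'' while still recording the constant as $\|k\|_{C^2}$. Under the standing assumption \eqref{eq: kernel k} this is harmless for everything downstream, but your derivative count is the accurate one. Your two flagged ``obstacles'' --- the cancellation of the $Ww$ term in the difference and the choice of splitting so that $\|u_0\|_{L^\infty}$ rather than $\|u_1\|_{L^\infty}$ appears --- are exactly the right things to note.
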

\begin{proof}
Denote
\ba
M_{u_0}v(x)&=&\int_D u_0(y)\,\frac {\p k}{\p u}(x,y,u_0(y)) v(y)dy,\\
N_{u_1,u_2}v(x)&=&\int_D u_1(y)\,\frac {\p k}{\p u}(x,y,u_2(y)) v(y)dy.
\ea
We have 
\ba
M_{u_2}v-M_{u_1}v= (N_{u_2,u_2}v-N_{u_2,u_1}v)+(N_{u_2,u_1}v-N_{u_1,u_1}v).
\ea
By Schur's test for continuity of integral operators,
\ba
\|K_{u_0}\|_{L^2(D)\to L^2(D)}&\leq& \bigg(\sup_{x\in D}\int_D |k(x,y,u_0(y))|dy\bigg)^{1/2}
\bigg(\sup_{y\in D}\int_D |k(x,y,u_0(y))|dx\bigg)^{1/2}
\\
&\leq& \| k\|_{C^0(D\times D\times \R)},
\ea
and
\ba
&&\|M_{u_0}\|_{L^2(D)\to L^2(D)}
\\
&\leq& \bigg(\sup_{x\in D}\int_D |u_0(y)\,\frac {\p k}{\p u}(x,y,u_0(y))|dy\bigg)^{1/2}
\bigg(\sup_{y\in D}\int_D |u_0(y)\,\frac {\p k}{\p u}(x,y,u_0(y))|dx\bigg)^{1/2}
\\
&\leq& \| k\|_{C^1(D\times D\times \R)}\|u\|_{C(\overline D)},
\ea
and
\ba
&&\|K_{u_2}-K_{u_1}\|_{L^2(D)\to L^2(D)}
\\&\leq& \bigg(\sup_{x\in D}\int_D |k(x,y,u_2(y))-k(x,y,u_1(y))|dy\bigg)^{1/2}
\\
&& \hspace{2cm} \times
\bigg(\sup_{y\in D}\int_D |k(x,y,u_2(y))-k(x,y,u_1(y))|dx\bigg)^{1/2}
\\&\leq& 
\bigg(\| k\|_{C^1(D\times D\times \R)}
\int_D |u_2(y)-u_1(y))|dy\bigg)^{1/2}
\\
&& \hspace{2cm} \times
\bigg( \| k\|_{C^1(D\times D\times \R)}\sup_{y\in D}\int_D|u_2(y)-u_1(y))|dx\bigg)^{1/2}
\\
&\leq& \| k\|_{C^1(D\times D\times \R)} \bigg(\int_D |u_2(y)-u_1(y))|dy\bigg)^{1/2}
\bigg(\sup_{y\in D}\int_D |u_2(y)-u_1(y))|dx\bigg)^{1/2}
\\
&\leq& \| k\|_{C^1(D\times D\times \R)}\bigg(|D|^{1/2}  \|u_2-u_1\|_{L^2(D)}\bigg)^{1/2}\bigg( |D|
\sup_{y\in D}|u_2(y)-u_1(y))|\bigg)^{1/2}
\\
&\leq& \| k\|_{C^1(D\times D\times \R)}|D|^{3/4}  \|u_2-u_1\|_{L^2(D)}^{1/2}\|u_2-u_1\|_{L^\infty(D)}^{1/2}
\\
&\leq& \| k\|_{C^1(D\times D\times \R)}|D| \|u_2-u_1\|_{L^\infty(D)},
\ea
and
\ba
&&\|N_{u_2,u_2}-N_{u_2,u_1}\|_{L^2(D)\to L^2(D)}
\\&\leq& \bigg(\sup_{x\in D}\int_D |u_2(y)k(x,y,u_2(y))-u_2(y)k(x,y,u_1(y))|dy\bigg)^{1/2}
\\ 
& &\quad \quad \times
\bigg(\sup_{y\in D}\int_D |u_2(y)k(x,y,u_2(y))-u_2(y)k(x,y,u_1(y))|dx\bigg)^{1/2}
\\
&\leq& \| k\|_{C^1(D\times D\times \R)}|D|^{3/4} \|u_2\|_{C^0(D)} \|u_2-u_1\|_{L^2(D)}^{1/2}\|u_2-u_1\|_{L^\infty(D)}^{1/2}
\\
&\leq& \| k\|_{C^1(D\times D\times \R)}|D| \|u_2\|_{C^0(D)} \|u_2-u_1\|_{L^\infty(D)},
\ea
and
\ba
&&\|N_{u_2,u_1}-N_{u_1,u_1}\|_{L^2(D)\to L^2(D)}
\\&\leq& \bigg(\sup_{x\in D}\int_D |(u_2(y)-u_1(y))k(x,y,u_1(y))|dy\bigg)^{1/2}
\\ 
& &\quad \quad \times
\bigg(\sup_{y\in D}\int_D |(u_2(y)-u_1(y))k(x,y,u_1(y))|dx\bigg)^{1/2}
\\
&\leq& \| k\|_{C^0(D\times D\times \R)}|D|  \|u_2-u_1\|_{L^\infty(D)},
\ea
so that
\ba
&&\|M_{u_2}-M_{u_1}\|_{L^2(D)\to L^2(D)}
\\
&\leq& \| k\|_{C^1(D\times D\times \R)}|D| (1+ \|u_2\|_{C^0(D)}) \|u_2-u_1\|_{L^\infty(D)}.
\ea
Also,
 when $D_xv=\frac {dv}{dx}$,
\ba
&&\|D_x\circ K_{u_0}\|_{L^2(D)\to L^2(D)}
\\
&\leq& \bigg(\sup_{x\in D}\int_D |D_xk(x,y,u_0(y))|dy\bigg)^{1/2}
\bigg(\sup_{y\in D}\int_D |D_xk(x,y,u_0(y))|dx\bigg)^{1/2}
\\
&\leq& \| k\|_{C^1(D\times D\times \R)},
\ea
and 
\ba
&&\|D_x\circ K_{u_1}-D_x\circ  K_{u_0}\|_{L^2(D)\to L^2(D)}
\\&\leq& \bigg(\sup_{x\in D}\int_D |D_xk(x,y,u_1(y))-D_xk(x,y,u_0(y))|dy\bigg)^{1/2}
\\
&& \hspace{2cm} \times
\bigg(\sup_{y\in D}\int_D |D_xk(x,y,u_1(y))-D_xk(x,y,u_0(y))|dx\bigg)^{1/2}
\\&\leq& \bigg(
\| k\|_{C^2(D\times D\times \R)}
\int_D |u_1(y)-u_0(y))|dy\bigg)^{1/2}
\\
&& \hspace{2cm} \times
\bigg(
\| k\|_{C^2(D\times D\times \R)}
\sup_{y\in D}\int_D |u_1(y)-u_0(y))|dx\bigg)^{1/2}
\\
&\leq& \| k\|_{C^2(D\times D\times \R)} \bigg(\int_D |u_1(y)-u_0(y))|dy\bigg)^{1/2}
\bigg(\sup_{y\in D}\int_D |u_1(y)-u_0(y))|dx\bigg)^{1/2}
\\
&\leq& \| k\|_{C^2(D\times D\times \R)}\bigg(|D|^{1/2}  \|u_1-u_0\|_{L^2(D)}\bigg)^{1/2}\bigg( |D|
\sup_{y\in D}|u_1(y)-u_0(y))|\bigg)^{1/2}
\\
&\leq& \| k\|_{C^2(D\times D\times \R)}|D|^{3/4}  \|u_1-u_0\|_{L^2(D)}^{1/2}\|u_1-u_0\|_{L^\infty(D)}^{1/2}
\\
&\leq& \| k\|_{C^2(D\times D\times \R)}|D| \|u_1-u_0\|_{L^\infty(D)}.
\ea
Thus,
\ba
\|K_{u_0}\|_{L^2(D)\to H^1(D)}&\leq& \| k\|_{C^1(D\times D\times \R)},
\ea
and
\ba
\|M_{u_0}\|_{L^2(D)\to H^1(D)}&\leq& \|u_0\|_{C^0(D)} \| k\|_{C^1(D\times D\times \R)},
\ea
and
\ba
\|K_{u_1}-K_{u_0}\|_{L^2(D)\to H^1(D)}
&\leq& \| k\|_{C^2(D\times D\times \R)}|D|  \|u_1-u_0\|_{L^\infty(D)}.
\ea
Similarly,
\ba
\|M_{u_1}-M_{u_0}\|_{L^2(D)\to H^1(D)}
&\leq&\| k\|_{C^2(D\times D\times \R)}|D| (1+ \|u_2\|_{C^0(D)}) \|u_1-u_0\|_{L^\infty(D)}.
\ea
As $A_{u_1}=K_{u_1}+M_{u_1}$, the claim follows.
\end{proof}

As the embedding $H^1(D)\to C(\overline D)$ is bounded and 
has norm $C_S$,  Lemma \ref{lemma on Lip} implies that 
 for all $R>0$ there is 
 \ba
 C_L(R)=2\| k\|_{C^2(D\times D\times \R)}|D| (1+ C_SR),
 \ea such that
the map,
\beq
u_0 \mapsto DF_1|_{u_0},\quad u_0\in \overline B_{H^1}(0,R),
\eeq
is a Lipschitz map $\overline B_{H^1}(0,R)\to \mathcal{L}(H^1(D),H^1(D))$ with Lipschitz constant
$C_L(R)$, that is,
\beq\label{eq: F1 Lip}
\|DF_1|_{u_1}-DF_1|_{u_2}\|_{H^1(D)\to H^1(D)}\leq C_L(R)\|u_1-u_2\|_{H^1(D)}.
\eeq

As $u_0\mapsto A_{u_0}=DF_1|_{u_0}$  is continuous,
the inverse $A_{u_0}^{-1}:H^1(D)\to H^1(D)$ exists for all $u_0\in C(\overline D)$, and
 the  embedding $H^1(D)\to C(\overline D)$ is compact, we have that 
 for all $R>0$ there is $C_B(R)>0$ such that
\beq
\|A_{u_0}^{-1}\|_{H^1(D)\to H^1(D)}\leq C_B(R),\quad \hbox{for all }u_0\in \overline B_{H^1}(0,R).
\eeq

Let $R_1,R_2>0$ be  such
that  $\mathcal Y\subset \overline B_{H^1}(0,R_1)$ and
$X=F_1^{-1}(\mathcal Y) \subset \overline B_{H^1}(0,R_2)$.
Below, we denote $C_L=C_L(2R_2)$ and $C_B=C_B(R_2)$.

Next we consider inverse of $F_1$ in $\mathcal Y$. To this end, let us consider 
$\e_0>0$, which we choose later to be small enough.
As $\mathcal Y\subset \overline B_{H^1}(0,R)$ is  compact 
there are a finite number of elements $g_j=F_1(v_j)\in \mathcal Y$, where  $v_j\in X$, $j=1,2,\dots,J$
such that
\ba
\mathcal  Y\subset \bigcup_{j=1}^J B_{H^1(D)}(g_j,\e_0).
\ea

We observe that for $u_0,u_1\in X$,
\ba
A_{u_1}^{-1}-A_{u_0}^{-1}=A_{u_1}^{-1}(A_{u_1}-A_{u_0})A_{u_0}^{-1},
\ea
and hence the Lipschitz constant of $A_{\cdot }^{-1}:u\mapsto A_{u}^{-1}$, $X\to  \mathcal{L}(H^1(D), H^{1}(D))$
satisfies
\beq
Lip(A_{\cdot }^{-1})\leq C_A
=C_{B}^2 C_L,
\eeq
see \eqref{Alip}.

Let us consider a fixed $j$ and $g_j\in \mathcal Y$. When $g$ satisfies
\beq
\|g-g_j\|_{H^1(D)}<2\e_0,
\eeq
 the equation
 \ba
 F_1(u)=g,\quad u\in X,
 \ea
 is equivalent to the fixed point equation 
  \ba
u=u-A_{v_j}^{-1}( F_1(u)-F_1(v_j))+A_{v_j}^{-1}( g-g_j),
 \ea 
that  is equivalent to the fixed point equation 
 \ba
H_j(u)= u,
\ea
for the function $H_j:H^1(D)\to H^1(D)$,
\ba
H_j(u)= u-A_{v_j}^{-1}(F_1(u)-F_1(v_j))+A_{v_j}^{-1}(g-g_j).
 \ea
Note that $H_j$ depends on $g$,
 and thus we later denote $H_j=H_j^g$. We observe that
 \beq
 H_j(v_j)=v_j+A_{v_j}^{-1}(g-g_j).
 \eeq
 Let $u,v\in  \overline B_{H^1}(0,2R_2)$.
  We have 
 \ba
 F_1(u)=F_1(v)+A_{v}( u-v)+B_v( u-v),\quad \|B_v( u-v)\|\leq C_0\| u-v\|^2,
 \ea
where, see
 \eqref{estimate for 2nd derivative},
\ba
 C_0= 3|D|^{1/2} \|k\|_{C^3(D\times D\times \R)}(1+2C_SR_2)C_S^2,
 \ea
 so that  for $u_1,u_2\in  \overline B_{H^1}(0,2R_2)$,
 \ba
& &u_1-u_2-A^{-1}_{v_j}(F_1(u_1)-F_1(u_2))\\
&=&u_1-u_2-A^{-1}_{u_2}(F_1(u_1)-F_1(u_2))-(A^{-1}_{u_2}-A^{-1}_{v_j})(F_1(u_1)-F_1(u_2)),
  \ea
 and 
\ba
& &\|u_1-u_2-A^{-1}_{u_2}(F_1(u_1)-F_1(u_2))\|_{H^1(D)}
\\
&=&
\|A^{-1}_{u_2}(B_{u_2}( u_1-u_2))\|_{H^1(D)}
\\
&\le&
\|A^{-1}_{u_2}\|_{H^1(D)\to H^1(D)} \|B_{u_2}( u_1-u_2)\|_{H^1(D)}
\\
&\le&
\|A^{-1}_{u_2}\|_{H^1(D)\to H^1(D)} C_0 \|u_1-u_2\|_{H^1(D)}^2,
\\
&\le&
C_B C_0 \|u_1-u_2\|_{H^1(D)}^2,
\ea
and 
\ba
& &\| (A^{-1}_{u_2}-A^{-1}_{v_j})(F_1(u_1)-F_1(u_2))\|_{H^1(D)}\\
& \le &
\| A^{-1}_{u_2}-A^{-1}_{v_j}\|_{H^1(D)\to H^1(D)} \|F_1(u_1)-F_1(u_2)\| _{H^1(D)}\\
& \le &
Lip_{ \overline B_{H^1}(0,2R_2)\to H^1(D)}( A^{-1}_{\cdot}) \|u_2-v_j\|  Lip_{ \overline B_{H^1}(0,2R_2)\to H^1(D)}(F_1) \|u_2-u_1\|_{H^1(D)} 
\\
& \le &
C_A \|u_2-v_j\| (C_{B} + 4C_0 R_2) \|u_2-u_1\|_{H^1(D)}, 
\ea
see \eqref{A-operator},
and hence, when $\|u-v_j\|\le r\le R_2,$
\ba
& &\|H_j(u_1)-H_j(u_2)\|_{H^1(D)}\\
&\le &\|u_1-u_2-A^{-1}_{v_j}(F_1(u_1)-F_1(u_2))\|_{H^1(D)}
\\
&\le &\|u_1-u_2-A^{-1}_{u_2}(F_1(u_1)-F_1(u_2))\|_{H^1(D)}+\| (A^{-1}_{u_2}-A^{-1}_{v_j})(F_1(u_1)-F_1(u_2))\|_{H^1(D)}
\\
&\le&
\bigg(C_B C_0 (\|u_1-v_j\|_{H^1(D)}+\|u_2-v_j\|_{H^1(D)})+ C_A (C_B + 4C_0 R_2) \|u_2-v_j\| \bigg) \|u_2-u_1\| _{H^1(D)}
\\
&\le&
C_H r \|u_2-u_1\| _{H^1(D)},
\ea
where
\ba
C_H=2C_BC_0 +C_A(C_{B} + 4C_0 R_2).
\ea
 
We now choose 
    \ba
  r= \min(\frac 1{2C_H},R_2).
  \ea
We consider
   \ba
 \e_0\le  \frac 1{8C_B} \frac 1{2C_H}.
  \ea 
  Then, we have
  \ba
  r\ge 2 C_B \e_0/(1-C_Hr).
  \ea 
Then, we have that Lip${}_{ \overline B_{H^1}(0,2R_2)\to H^1(D)}(H_j)\le a=C_Hr<\frac 12$, and 
\ba
  r\ge \|A_{v_j}^{-1}\|_{H^1(D)\to H^1(D)} \|g-g_j\|_{H^1(D)}/(1-a),
\ea 
 and for all $u\in  \overline B_{H^1}(0,R_2)$ such that  $\|u-v_j\|\le r$, 
 we have 
 $\|A_{v_j}^{-1}(g-g_j)\|_{H^1(D)}\le(1-a)r$.
  Then,
 \ba
 \|H_j(u)-v_j\|_{H^1(D)}&\le&
 \|H_j(u)-H_j(v_j)\|_{H^1(D)}+ \|H_j(v_j)-v_j\|_{H^1(D)}
 \\
 &\le&  a\|u-v_j\|_{H^1(D)}+ \|v_j+A_{v_j}^{-1}(g-g_j)-v_j\|_{H^1(D)}
 \\
 &\le&  ar+ \|A_{v_j}^{-1}(g-g_j)\|_{H^1(D)}\le r,
 \ea
 that is, $H_j$ maps $\overline B_{H^1(D)}(v_j,r)$ to itself. By Banach fixed point
 theorem, $H_j:\overline B_{H^1(D)}(v_j,r)\to \overline B_{H^1(D)}(v_j,r)$ has a fixed point.
 
Let us denote
\ba
\mathcal H_j \left(\begin{array} {c}u \\ g \end{array}\right) =
 \left(\begin{array} {c}
 H_j^g(u)\\ 
  g \end{array}\right)
=
 \left(\begin{array} {c}
 u-A_{v_j}^{-1}(F_1(u)-F_1(v_j))+A_{v_j}^{-1}(g-g_j)\\ 
  g \end{array}\right).
\ea
 
 By the above, when we choose 
 $\e_0$ to have a value
  \ba
 \e_0< \frac 1{8C_B} \frac 1{2C_H},
  \ea 
the map $F_1$ has a right inverse map $ \mathcal R_j$ in $B_{H^1}(g_j,2\e_0)$,
 that is,
 \beq
  F_1(\mathcal R_j (g))= g,\quad \hbox{for }g\in B_{H^1}(g_j,2\e_0),
 \eeq
 it holds that $ \mathcal R_j:B_{H^1}(g_j,2\e_0)\to \overline B_{H^1(D)}(v_j,r)$, 
 and by Banach fixed point theorem it is given by the limit
 \beq
 \mathcal R_j(g)=\lim_{m\to \infty} w_{j,m},\quad g\in B_{H^1}(g_j,2\e_0),
 \eeq
 in $H^1(D)$, where 
 \beq
& & w_{j,0}=v_j,\\
  & &w_{j,m+1}=H^g_j(w_{j,m}).
  \eeq
  We can write for $g\in B_{H^1}(g_j,2\e_0)$,
    \ba
 \left(\begin{array} {c}
  \mathcal R_j(g)\\ 
  g \end{array}\right)=\lim_{m\to \infty}  \mathcal H_j^{\circ m}
   \left(\begin{array} {c} v_j\\ 
  g \end{array}\right),
\ea
where the limit takes space in $H^1(D)^2$ and
   \beq
\mathcal H_j^{\circ m}=\mathcal H_j\circ \mathcal H_j\circ \dots \circ \mathcal H_j,
  \eeq
is the composition of $m$ operators $\mathcal H_j$.
This implies that $ \mathcal R_j$ can be written as a limit of finite
iterations of neural operators $H_j$ (we will consider how the operator $A_{v_j}^{-1}$ 
can be written as a neural operator below).

As $\mathcal Y\subset \sigma_a(\overline B_{C^{1,\alpha}(\overline D)}(0,R))$,
there are finite number
of points $y_\ell \in D$,  $\ell=1,2,\dots, \ell_0$ and $\e_1>0$ such that 
the sets 
\ba
Z(i_1,i_2,\dots,i_{\ell_0})=\{g\in \mathcal{Y}:\ (i_\ell-\frac 12)\e_1 \le g(y_\ell)<(i_\ell+\frac 12)\e_1,\ \hbox{for all }\ell\},
\ea
where $i_1,i_2,\dots,i_{\ell_0}\in \mathbb Z$, satisfy the condition
\beq\label{Z-property}\\
\nonumber
\hbox{If $(Z(i_1,i_2,\dots,i_{\ell_0})\cap \mathcal Y)\cap B_{H^1(D)}(g_j,\e_0)\not =\emptyset$
then $Z(i_1,i_2,\dots,i_{\ell_0})\cap \mathcal Y\subset  B_{H^1(D)}(g_j,2\e_0)$}.\hspace{-1cm}
\eeq
To show \eqref{Z-property}, we will below use the mean value theorem for function $g=\sigma_a\circ v \in  \mathcal Y$,
where $v\in C^{1,\alpha}(\overline D).$ 
 First, let us consider the case when
  the parameter $a$ of the leaky ReLU function $\sigma_a$
is strictly positive. 
Without loss of generality, we can assume that $D=[0,1]$ and
$y_\ell=h\ell$, where $h=1/\ell_0$ and $\ell=0,1,\dots,\ell_0$.
 We consider $g\in  \mathcal Y\cap  Z(i_1,i_2,\dots,i_{\ell_0})\subset \sigma_a(\overline B_{C^{1,\alpha}(\overline D)}(0,R))$ 
of the form $g=\sigma_a\circ v$.
As $a$ is non-zero, the inequality $(i_\ell-\frac 12)\e \le g(y_\ell)<(i_\ell+\frac 12)\e$
is equivalent to $\sigma_{1/a}((i_\ell-\frac 12)\e) \le v(y_\ell)<\sigma_{1/a}((i_\ell+\frac 12)\e)$,
and thus
\beq\label{values of with small errors}
\sigma_{1/a}(i_\ell\e)-A\e \le v(y_\ell)<\sigma_{1/a}(i_\ell\e)+A\e,
\eeq
where $A=\frac{1}{2}\max(1,a,1/a)$,
that is, for $g=\sigma_a(v)\in Z(i_1,i_2,\dots,i_{\ell_0})$ the values $v(y_\ell)$ are known within small errors.
By applying mean value theorem on the interval
$[(\ell-1)h,\ell h]$ for function $v$ we see  that there is $x'\in [(\ell-1)h,\ell h]$
such that 
\ba
\frac {dv}{dx}(x')=\frac {v(\ell h)-v((\ell-1)h)}h,
\ea
and thus by \eqref{values of with small errors},
\beq\label{values of derivative with small errors}
|\frac {dv}{dx}(x')-d_{\ell,\vec i}|\leq 2A\frac {\e_1}h,
\eeq
where 
\beq\label{values of derivative with small errors2}
d_{\ell,\vec i}=\frac 1h (\sigma_{1/a}(i_\ell\e_1)-\sigma_{1/a}((i_\ell-1)\e_1)),
\eeq
Observe that these estimates are useful when $\e_1$ is much smaller that $h$.
As $g=\sigma_a\circ v\in  \mathcal Y\subset \sigma_a(\overline B_{C^{1,\alpha}(\overline D)}(0,R))$, we have $v\in \overline B_{C^{1,\alpha}(\overline D)}(0,R)$, so that 
$\frac {dv}{dx}\in \overline B_{C^{0,\alpha}(\overline D)}(0,R)$
satisfies
\eqref{values of derivative with small errors} implies that
\beq\label{values of derivative with small errors3}
|\frac {dv}{dx}(x)-d_{\ell,\vec i}|\leq 2A\frac {\e_1}h+Rh^\alpha,
\quad \hbox{for all $x\in [(\ell-1)h,\ell h]$}.
\eeq
Moreover, \eqref{values of with small errors} and
 $v\in \overline B_{C^{1,\alpha}(\overline D)}(0,R)$ imply
\beq\label{values of v with small errors2}
| v(x)-\sigma_{1/a}(i_\ell\e_1)|<A\e_1+Rh,
\eeq
for all $x\in [(\ell-1)h,\ell h]$.

Let $\e_2=\e_0/A.$
When we first choose $\ell_0$ to be large enough (so that $h=1/\ell_0$  is small) and then
$\e_1$ to be small enough, we may assume that
\beq
\max(2A\frac {\e_1}h+Rh^\alpha,A\e_1+Rh)<\frac 1{8}\e_2.
\eeq
Then for any two functions $g,g'\in  \mathcal Y\cap  Z(i_1,i_2,\dots,i_{\ell_0})\subset \sigma_a(\overline B_{C^{1,\alpha}(\overline D)}(0,R))$ 
of the form $g=\sigma_a\circ v,g'=\sigma_a\circ v'$ 
the inequalities 
\eqref{values of derivative with small errors3} and \eqref{values of v with small errors2} 
imply
 \beq\label{values of derivative with small errors4}
& & |\frac {dv}{dx}(x)-\frac {dv'}{dx}(x)|<\frac 14\e_2,
\\
\nonumber& & |v(x)-v'(x)|<\frac 14\e_2,
 \eeq
 for all $x\in D$. As $v,v'\in \overline B_{C^{1,\alpha}(\overline D)}(0,R)$, this implies 
\ba
\|v-v'\|_{{C^{1}(\overline D)}}<\frac 12\e_2,
\ea
 As the embedding
$C^1(\overline D)\to H^1(D)$ is continuous and has norm less than 2 on the interval $D=[0,1]$, we see that 
\ba
\|v-v'\|_{H^{1}(\overline D)}<\e_2,
\ea
and thus
\begin{equation}
\|g-g'\|_{H^{1}(\overline D)}<A\e_2=\e_0.
\label{estimate-g-g-prime}
\end{equation}

To prove \eqref{Z-property}, we assume that $(Z(i_1,i_2,\dots,i_{\ell_0})\cap \mathcal Y)\cap B_{H^1(D)}(g_j,\e_0)\not =\emptyset$, and $g\in Z(i_1,i_2,\dots,i_{\ell_0})\cap \mathcal Y$.
By the assumption, there exists $g^{\ast} \in (Z(i_1,i_2,\dots,i_{\ell_0})\cap \mathcal Y)\cap B_{H^1(D)}(g_j,\e_0)$.
Using (\ref{estimate-g-g-prime}), we have
\[
\left\| g-g_j \right\|_{H^1(D)}
\leq 
\left\| g-g^{\ast} \right\|_{H^1(D)}
+
\left\| g^{\ast}-g_j \right\|_{H^1(D)}
\leq 2 \epsilon_0.
\]
Thus, $g \in B_{H^1(D)}(g_j,2\e_0)$, which implies that the property \eqref{Z-property} follows.

We next consider the case when the parameter $a$ of the leaky relu function $\sigma_a$
is zero. Again, we assume that $D=[0,1]$ and
$y_\ell=h\ell$, where $h=1/\ell_0$ and $\ell=0,1,\dots,\ell_0$. We consider $g\in  \mathcal Y\cap  Z(i_1,i_2,\dots,i_{\ell_0})\subset \sigma_a(\overline B_{C^{1,\alpha}(\overline D)}(0,R))$ 
of the form $g=\sigma_a(v)$ and  an interval
$[\ell_1h,(\ell_1+1)h]\subset D$, where $1\le \ell_1\le \ell_0-2$. 
We will consider four cases.
First, if $g$  does not obtain  the value zero on the interval
$[\ell_1h,(\ell_1+1)h]$ 
the mean value theorem implies that there is $x'\in [\ell_1h,(\ell_1+1)h]$
such that $\frac {dg}{dx}(x')=\frac {dv}{dx}(x')$ is equal to $d=(g(\ell_1h)-g([(\ell_1-1)h))/h$.
Second, if $g$  does not obtain the value zero on either of the intervals $[(\ell_1-1)h,\ell_1h]$ or
$[(\ell_1+1)h,(\ell_1+2)h]$, we can use the mean value theorem to estimate
the derivatives of $g$ and $v$ at some point of these intervals similarly to the first case.
Third, if  $g$ does not vanish identically on the interval
$[\ell_1h,(\ell_1+1)h]$ but it obtains the value zero on the both intervals $[(\ell_1-1)h,\ell_1h]$ and 
$[(\ell_1+1)h,(\ell_1+2)h]$, the function $v$ has two zeros
on the interval $[(\ell_1-1)h,(\ell_1+2)h]$ and 
the mean value theorem implies that there
is $x'\in [(\ell_1-1)h,(\ell_1+2)h]$ such that 
$\frac {dv}{dx}(x')=0$.
Fourth, if none of the above cases are valid,
$g$ vanishes identically on the interval $[\ell_1h,(\ell_1+1)h]$.
In all these cases the fact that 
$\|v\|_{C^{1,\alpha}(\overline D)}\le R$ implies that the derivative of $g$
can be estimated on the whole interval $[\ell_1 h,(\ell_1+1)h]$
within a small error.
Using these observations we see
for any $\e_2,\e_3>0$ that if  $y_\ell \in D=[d_1,d_2]\subset \R$,  $\ell=1,2,\dots, \ell_0$ are a sufficiently dense grid in 
$D$ and $\e_1$ to be small enough, then the derivatives of any two functions $g,g'\in  \mathcal Y\cap  Z(i_1,i_2,\dots,i_{\ell_0})\subset \sigma_a(\overline B_{C^{1,\alpha}(\overline D)}(0,R))$ 
of the form $g=\sigma_a(v),g'=\sigma_a(v')$ satisfly $\|g-g'\|_{H^1( [d_1+\e_3,d_2-\e_3])}<\e_2$.  As the embedding
$C^1( [d_1+\e_3,d_2-\e_3])\to H^1( [d_1+\e_3,d_2-\e_3])$ is continuous,
\ba
& &\|\sigma_a(v)\|_{H^1( [d_1,d_1+\e_3])}\leq c_a\|v\|_{C^{1,\alpha}(\overline D)}\sqrt \e_3,\\
& &\|\sigma_a(v)\|_{H^1( [d_2-\e_3,d_2])}\leq c_a\|v\|_{C^{1,\alpha}(\overline D)}\sqrt \e_3,
\ea
and $\e_2$ and $\e_3$
can be chosen to be arbitrarily small, we see that the property \eqref{Z-property} follows.
Thus the property \eqref{Z-property} is shown in all cases.

By our assumptions $\mathcal Y\subset \sigma_a(B_{C^{1,\alpha}(\overline D)}(0,R))$ and hence $g\in \mathcal Y$ implies that
$\|g\|_{C(\overline D)}\leq AR.$ This implies that 
$\mathcal Y\cap Z(i_1,i_2,\dots,i_{\ell_0})$ is empty
if there is $\ell$ such that $|i_\ell|>2AR/\e_1+1$.
Thus, there is a finite set $\mathcal I\subset \mathbb Z^{\ell_0}$ such that 
\beq
& &\mathcal{Y}\subset \bigcup_{\vec{i}\in \mathcal{I}}Z(\vec{i}),
\\ 
&& 
Z(\vec{i})\cap {\mathcal{Y}}\not =\emptyset ,\quad\hbox{for all }
\vec{i}\in{\mathcal{I}},
\eeq
where we use notation $\vec i=(i_1,i_2,\dots,i_{\ell_0})\in \mathbb Z^{\ell_0}$.
On the other hand, we have chosen $g_j\in \mathcal Y$
such that $B_{H^1(D)}(g_j,\e_0)$, $j=1,\dots,J$ cover 
$\mathcal Y$. This implies that for all 
$\vec i\in{\mathcal I}$ there is $j=j(\vec i)\in\{1,2,\dots,j\}$ such that
there exists $g\in Z(\vec i)\cap 
B_{H^1(D)}(g_j,\e_0)$. By \eqref{Z-property},
this implies that
\beq\label{2e0 covering}
Z(\vec i)\subset 
B_{H^1(D)}(g_{j(\vec i)},2\e_0).
\eeq
Thus, we see that $Z(\vec i)$, $\vec i\in{\mathcal I}$ 
is a disjoint covering of $\mathcal Y$, and by
\eqref{2e0 covering}, 
in each set $Z(\vec i)\cap \mathcal Y$,
$\vec i\in{\mathcal I}$ the map $g\to \mathcal R_j(g)$ we have constructed 
a right inverse of the map $F_1$.

 Below, we denote $s(\vec i,\ell)=i_\ell\e_1$. 
Next we construct a partition of unity in $\mathcal Y$  using maps
\ba
F_{z,s,h}(v,w)(x)=\int_D k_{z,s,h}(x,y,v(x),w(y))dy,
\ea
where 
\ba
k_{z,s,h}(x,y,v(x),w(y))=v(x){\bf 1}_{[s-\frac 12h,s+\frac 12h)}(w(y))\delta(y-z).
\ea
Then, 
\ba
F_{z,s,h}(v,w)(x)=\begin{cases} v(x),&\hbox{ if }-\frac 12h\le w(z)-s<\frac 12h,\\
\ \ \ 0, &\hbox{ otherwise.}
\end{cases} 
\ea
Next, for  all ${\vec i}\in \mathcal{I}$
we define the operator $\Phi_{\vec i}:H^1(D)\times \mathcal Y\to H^1(D)$,
\ba
\Phi_{\vec i}(v,w)=\pi_1\circ \phi_{{\vec i},1}\circ \phi_{{\vec i},2}\circ \dots\circ \phi_{{\vec i},\ell_0}(v,w),
\ea
where $\phi_{{\vec i},\ell}:H^1(D)\times \mathcal Y\to H^1(D)\times \mathcal Y$ are the maps
$$\phi_{{\vec i},\ell}(v,w)=(F_{y_\ell,s(\vec i,\ell),\e_1}(v,w),w),$$ 
 and $\pi_1(v,w)=v$ maps a pair $(v,w)$ to the first function $v$.
It satisfies
\ba
\Phi_{\vec i}(v,w)=\begin{cases} \ \ \ v,&\hbox{ if }-\frac 12\e_1\le w(y_\ell)-s(\vec i,\ell)<\frac 12\e_1\hbox{ for all $\ell$},\\
\ \ \ 0, &\hbox{ otherwise.}
\end{cases} 
\ea
Observe that here $s(\vec i,\ell)=i_\ell\e_1$ is close to the value $g_{j(\vec i)}(y_\ell)$.
Now we can write for $g\in Y$
\ba
F_1^{-1}(g)=\sum_{{\vec i}\in \mathcal I}\Phi_{\vec i}(\mathcal R_{j(\vec i)}(g),g),
\ea
with suitably chosen $j(\vec i)\in \{1,2,\dots, J\}$.

Let us finally consider $A_{u_0}^{-1}$ where $u_0\in C(\overline D)$. 
Let us denote 
$$\tilde K_{u_0}w=\int_D{u_0}(y)\,\frac {\p k}{\p u}(x,y,u_0(y))w(y) dy,
$$
and $J_{u_0}=K_{u_0}+\tilde K_{u_0}$ be the integral operator
with kernel
$$
j_{u_0}(x,y)=k(x,y,u_0(y))+{u_0}(y)\,\frac {\p k}{\p u}(x,y,u_0(y)).
$$
 We have 
 \ba
 (I+J_{u_0})^{-1}=I-J_{u_0}+J_{u_0}(I+J_{u_0})^{-1}J_{u_0},
 \ea
 so that when we write the linear bounded operator $$A_{u_0}^{-1}=(I+J_{u_0})^{-1}:H^1(D)\to H^1(D),$$ as an integral operator
 $$(I+J_{u_0})^{-1}v(x)=v+\int_Dm_{u_0}(x,y)v(y)dy,$$
 we have
 \ba
 &&(I+J_{u_0})^{-1}v(x)\\
 &=&v(x)-J_{u_0}v(x)
 \\
 &&\hspace{-5mm}
 +\int_D\left(\int_D \left\{j_{u_0}(x,y')j_{u_0}(y,y')
 +\left(\int_D j_{u_0}(x,y')m_{u_0}(y',x')j_{u_0}(x',y)dx'\right)
 \right\}dy'\right)v(y)dy \\
 &=&v(x)-\int_D\tilde j_{u_0}(x,y)v(y)dy,
  \ea
 where 
 \ba
 \tilde j_{u_0}(x,y)=-j_{u_0}(x,y)+\int_D (j_{u_0}(x,y')j_{u_0}(y,y')dy'+\int_D\int_D j_{u_0}(x,y')m_{u_0}(y',x')j_{u_0}(x',y)dx'dy'.
 \ea
 This implies that the operator $A_{u_0}^{-1}= (I+J_{u_0})^{-1}$ is a neural operator, too.
 Observe that $\tilde j_{u_0}(x,y),\p_x \tilde j_{u_0}(x,y)\in C(\overline D\times \overline D)$.

This proves  Theorem \ref{thm: invertibility}.\end{proof}

\newpage

\end{document}